\theoremstyle{plain}
\numberwithin{equation}{section}
\newtheorem{lemma}{Lemma}
\newtheorem{theorem}{Theorem}
\newtheorem{proposition}{Proposition}
\newtheorem{definition}{Definition}
\newtheorem{corollary}[theorem]{Corollary}
\newtheorem{assumption}{Assumption}
\newcommand{\norm}[1]{\left\|#1\right\|}
\newcommand{\cD}{\mathcal{D}}
\newcommand{\cA}{\mathcal{A}}
\newcommand{\cB}{\mathcal{B}}
\newcommand{\cX}{\mathcal{X}}
\newcommand{\cE}{\mathcal{E}}
\newcommand{\cF}{{{\mathcal{F}}}}
\newcommand{\cT}{{\mathcal{T}}}
\newcommand{\Eb}{\mathbb{E}}
\newcommand{\RR}{\mathbb{R}}
\newcommand{\Pb}{\mathbb{P}}
\newcommand{\Db}{\mathbb{D}}
\newcommand{\argmin}{\mathop{\mathrm{argmin}}}
\newcommand{\argmax}{\mathop{\mathrm{argmax}}}
\newcommand{\Reg}{\mathop{\mathrm{Reg}}}
\icmltitlerunning{A Self-Play Posterior Sampling Algorithm for Zero-Sum Markov Games}
\begin{document}

\twocolumn[
\icmltitle{A Self-Play Posterior Sampling Algorithm for Zero-Sum Markov Games}




\begin{icmlauthorlist}
\icmlauthor{Wei Xiong}{xxx}
\icmlauthor{Han Zhong}{yyy}
\icmlauthor{Chengshuai Shi}{zzz}
\icmlauthor{Cong Shen}{zzz}
\icmlauthor{Tong Zhang}{xxx,fff}
\end{icmlauthorlist}

\icmlaffiliation{xxx}{The Hong Kong University of Science and Technology;}
\icmlaffiliation{yyy}{Center for Data Science, Peking University;}
\icmlaffiliation{zzz}{University of Virginia;}
\icmlaffiliation{fff}{Google Research}

\icmlcorrespondingauthor{Tong Zhang}{tongzhang@tongzhang-ml.org}


\icmlkeywords{Machine Learning, ICML}

\vskip 0.3in
]



\printAffiliationsAndNotice{} 

\begin{abstract}
Existing studies on provably efficient algorithms for Markov games (MGs) almost exclusively build on the ``optimism in the face of uncertainty'' (OFU) principle. This work focuses on a different approach of posterior sampling, which is celebrated in many bandits and reinforcement learning settings but remains under-explored for MGs. Specifically, for episodic two-player zero-sum MGs, a novel posterior sampling algorithm is developed with \emph{general} function approximation. Theoretical analysis demonstrates that the posterior sampling algorithm admits a $\sqrt{T}$-regret bound for problems with a low multi-agent decoupling coefficient, which is a new complexity measure for MGs, where $T$ denotes the number of episodes. When specialized to linear MGs, the obtained regret bound matches the state-of-the-art results. To the best of our knowledge, this is the first provably efficient posterior sampling algorithm for MGs with frequentist regret guarantees, which enriches the toolbox for MGs and promotes the broad applicability of posterior sampling.

\end{abstract}

\section{Introduction}\label{sec:intro}
Multi-agent reinforcement learning (MARL) focuses on the sequential decision-making problem involving more than one agent, each of which aims to optimize her own long-term return by interacting with the environment and other agents \cite{zhang2021multi}. Today, MARL has a diverse set of real-world applications, including Go \cite{silver2016mastering, silver2017mastering}, autonomous driving \cite{shalev2016safe}, Poker \cite{brown2019superhuman}, and Dota \cite{berner2019dota}, just to name a few. Due to the large state space of these practical problems, function approximation (with neural networks) is often used in these applications for the generalization across different state-action pairs. 
While there is a long line of related works on the theoretical understanding of single-agent RL with general function approximation \cite{jiang@2017, sun2019model, wang2020reinforcement, jin2021bellman, du2021bilinear, dann2021provably}, the theory of MARL with general function approximation is substantially less explored. In this paper, we aim to explore this topic in the context of two-player zero-sum Markov games (MGs) \cite{shapley1953stochastic, littman1994markov}.

The goal of learning in a two-player zero-sum MG is to learn the Nash equilibrium at which the policy of each player maximizes her own cumulative rewards, provided that the policies of other agents are fixed. Intuitively speaking, Nash equilibrium characterizes the point from which no agent will deviate. Since the reward and the state transition are determined jointly by the actions of both agents, in addition to the unknown environment, each agent must also handle the dynamics of other strategic agents. Due to this game-theoretical feature, algorithms designed for MDP cannot be directly extended to the MARL case. However, recent studies \cite{jin2021power, huang2021towards} have shown that with an innovative asymmetrical structure, similar theoretical results can be established for the two-player zero-sum MG with general function approximation.

Nevertheless, despite a handful of recent progress on the theory of the two-player zero-sum MG with general function approximation, the existing works are mainly confined to algorithms based on the optimism in the face of uncertainty (OFU) principle. In contrast, the theory of posterior-sampling-based algorithms is less developed (in the frequentist setting). Various empirical studies indicate that the OFU-based algorithms can be far too optimistic for average instances and is inferior to posterior sampling algorithms, including \citet{chapelle2011empirical} for bandit, and \citet{osband2016generalization} for RL. Recent works in the context of contextual multi-armed bandit and single-agent RL demonstrate that there is no statistical efficiency gap between OFU and posterior sampling algorithms \cite{dann2021provably, zhang2021feel}. However, whether we can design model-free posterior sampling algorithms in MARL that achieve similar theoretical guarantees remains open. 

In this paper, we are interested in the application of posterior sampling in the two-player zero-sum MG with general function approximation and self-play (which means that the learning agent can control both players). Our main result indicates that, similar to the single-agent case, posterior sampling algorithms can achieve comparable theoretical guarantees as to the OFU-based algorithms. Our contributions are summarized as follows:
\setlist{nolistsep}
\begin{itemize}[noitemsep,leftmargin=*]
    \item A provably efficient posterior sampling algorithm is designed under the self-play framework for the two-player zero-sum MG with general function approximation. To the best of our knowledge, this is the first posterior sampling algorithm with frequentist regret guarantee in the context of Markov games;
    \item The single-agent complexity measure of {\it decoupling coefficient}, first introduced in \citet{dann2021provably}, is extended to the multi-agent setting, Moreover, a number of examples with provably small multi-agent decoupling coefficients are identified;
    \item The proposed algorithm is rigorously proved to obtain a $\sqrt{T}$-regret for problems with low multi-agent decoupling coefficient, where $T$ is the number of episodes.
\end{itemize}

It is noted that the sampling procedure of the proposed algorithm may not be computationally efficient. The lack of computational tractability also appears in the works of \citet{jin2021power, huang2021towards}, as well as previous works with general function approximation in the context of single-agent RL \citep{jiang@2017, jin2021bellman, dann2021provably, du2021bilinear}. It is an interesting future research topic to identify cases where efficient sampling is possible. Moreover, we do not take credit for {the asymmetrical structure in our algorithmic framework}. The main contribution here is to extend the posterior sampling algorithm to MGs under the self-play framework. 

\subsection{Related Works}\label{sec:related}
There have been a lot of works focusing on designing provably efficient algorithms for zero-sum MGs. For the tabular setting, \citet{bai2020near,bai2020provable,liu2020sharp} provide $O(\text{poly}(|\cX|, |\cA|, |\cB|, H) \cdot \sqrt{T})$ regret guarantees for the proposed algorithms, where $|\cX|$ is the number of states, $|\cA|$ and $|\cB|$ are the number of action spaces of two players, respectively, $H$ is the episode length, and $T$ is the number of episodes. Then, \citet{xie2020learning, chen2021almost} study two linear-type MGs and design algorithms with $O(\text{poly}(d, H) \cdot \sqrt{T})$ regret, where $d$ is the dimension of the linear features. Recently, \citet{jin2021power,huang2021towards} further propose efficient algorithms for zero-sum MGs with general function approximation. 

Our work is also closely related to another line of work on posterior sampling algorithms. In the context of contextual bandit, due to the impressive empirical performance of Thompson Sampling \citep{chapelle2011empirical, osband2016generalization}, there have been significant efforts in developing its theoretical analysis, including \citet{russo2014learning} in the form of Bayesian regret and \citet{kaufmann2012thompson, zhang2021feel} in the frequentist setting. For the Markov Decision Process (MDP), the seminal work \citet{osband2014model} considers the Bayesian regret and proposes a general posterior sampling RL method. The randomized least-squares value iteration (RLSVI) algorithm \cite{osband2016generalization} is shown to admit frequentist regret bounds for tabular MDP \cite{russo2019worst, agrawal2020improved, xiong2021randomized} and linear MDP \cite{zanette2020frequentist}. Beyond the linear setting, a recent work \cite{dann2021provably} proposes a conditional posterior sampling algorithm to solve the MDP with general function approximation.

A recent posterior-sampling-type work by \citet{jafarnia2021learning} considers the infinite-horizon zero-sum MGs with average-reward criterion in the tabular setting, with a focus on the Bayesian regret, whose analysis technique is fundamentally different from ours. To the best of our knowledge, there is no posterior sampling algorithm with a frequentist regret guarantee to date for MGs.

\section{{Problem Formulation}} \label{sec:pre}
Markov Games (MGs) generalize the standard Markov Decision Processes to the multi-agent setting. In this work, the episodic two-player zero-sum MG is considered, which can be formally denoted as $MG(H, \cX, \cA, \cB, \Pb, r)$.
Here $H$ denotes the length of each episode, $\cX$ is the (possibly infinite) state space, $\cA$ and $\cB$ are the action spaces of two players (referred to as the max-player and the min-player), respectively, $\Pb_h(\cdot|x,a,b)$ is the transition measure of the next state from the current state $x$ with two actions $(a,b)$ taken at step $h$, and $r^h(x,a,b)$ is the {corresponding} reward received with actions $(a,b)$ taken for state $x$ at step $h$. 

{Specifically, in this MG, each episode $t$ starts from an initial state $x_t^1$. At each step $h$, two players observe the current state $x_t^h$, take actions $(a^h_t, b^h_t)$ individually, and observe the next state $x^{h+1}_t \sim \Pb_h(\cdot|x_t^h, a_t^h, b_t^h)$. The current episode ends after step $H$ and then a new episode starts. Without loss of generality, each episode is assumed to have a fixed initial state $x_t^1 = x^1$, which can be easily generalized to having $x_t^1$ sampled from a fixed but unknown distribution. }

{Also, for the ease of presentation, the reward $r^h(x,a,b)$ is assumed to be deterministic and in the interval of $[0,1]$ for any $(x,a,b)$ in this paper, while the algorithm designs and theoretical results can also be applied for stochastic bounded rewards with slight modifications.}  

{\textbf{Policies and Value Functions.}} {With $\Delta_\cA$ denoting the probability simplex over the action space $\cA$,} a Markov policy $\mu$ of the max-player can be defined as $\mu := \{\mu_h: \cX \to \Delta_{\cA}\}_{h \in [H]}$. Similarly, we can define a Markov policy $\nu:=\{\nu_h:\cX\to \Delta_{\cB}\}_{h\in [H]}$ for the min-player.

Given a policy pair $(\mu, \nu)$, {the value function $V_h^{\mu, \nu}: \cX \to \RR$ at step $h$ is defined as
\begin{equation*}
    V_{h}^{\mu, \nu}(x):=\mathbb{E}_{\mu, \nu}\left[\sum_{h'=h}^{H} r^{h'}\left(x^{h'}, a^{h'}, b^{h'}\right) \mid x^{h}=x\right]
\end{equation*}
and the Q-value function $Q_h^{\mu,\nu}: \cX \times \cA \times \cB \to \RR$ as
\begin{align*}
Q_{h}^{\mu, \nu}(x, a, b):=\mathbb{E}_{\mu, \nu}\bigg[\sum_{h=h}^{H} r^{h'}\left(x^{h'}, a^{h'}, b^{h'}\right)  \mid \\(x^{h}, a^h, b^h) = (x,a,b)\bigg],\notag
\end{align*}}%
where the expectations are taken over the randomness of the environment and the policies. 

For a clean presentation, we use the notation $\Pb_h$ (with a slight abuse) so that $[\Pb_h V](x,a,b) = \Eb_{x' \sim \Pb_h(\cdot|x,a,b)} V(x')$ for any value function $V$. Similarly, the notation $\Db_\pi$ is adopted so that 
$
\left[\mathbb{D}_{\pi} Q\right](x):=\mathbb{E}_{(a, b) \sim \pi(\cdot, \cdot \mid x)} Q(x, a, b),
$ for any policy pair $\pi=(\mu, \nu)$ and action-value function $Q$. With these notations, the Bellman equations are given by
\begin{align*}
Q_{h}^{\mu, \nu}(x, a, b)&=\left[r^{h}+\mathbb{P}_{h} V_{h+1}^{\mu, \nu}\right](x, a, b);\\
V_{h}^{\mu, \nu}(x)&=\left[\mathbb{D}_{\mu_{h} \times \nu_{h}} Q_{h}^{\mu, \nu}\right](x).
\end{align*}

{\textbf{Best Response.}} For any policy of max-player $\mu$, {a corresponding best response for the min-player can be found, denoted as} $\nu^\dagger(\mu)$, such that $V_h^{\mu, \nu^\dagger(\mu)}(x) = \inf_\nu V_h^{\mu, \nu}(x)$ for all $(x,h)$.  This value is the best favorable result for the min-player if the max-player announces that she will play strategy $\mu$. Similarly, for a min-player policy $\nu$, there exists a best response for the max-player, denoted as $\mu^\dagger(\nu)$, such that $V_h^{\mu^\dagger,\nu}(x) = \sup_{\mu} V_h^{\mu, \nu}(x)$ for all $(x,h)$. To simplify the notation, we use
\begin{align*}
V_h^{\mu,\dagger}(x) := V_h^{\mu, \nu^\dagger(\mu)}(x), Q_h^{\mu,\dagger}(x,a,b) := Q_h^{\mu, \nu^\dagger(\mu)}(x,a,b);\\
V_h^{\dagger, \nu}(x) := V_h^{\mu^\dagger(\nu),\nu}(x),Q_h^{\dagger, \nu}(x,a,b) := Q_h^{\mu^\dagger(\nu),\nu}(x,a,b).
\end{align*}

{\textbf{Nash Equilibrium.}} Moreover, there exists {a set of Nash equilibrium (NE) policies $(\mu^*, \nu^*)$} \cite{filar2012competitive} that are optimal against their best response such that 
\begin{equation*}
    V_{h}^{\mu^{*}, \dagger}(x)=\sup\nolimits_{\mu} V_{h}^{\mu, \dagger}(x), \quad V_{h}^{\dagger, \nu^{*}}(x)=\inf\nolimits_{\nu} V_{h}^{\dagger, \nu}(x),
\end{equation*}
for all $(x,h) \in \cX \times [H]$. For this NE, the following famous minimax equation holds:
\begin{equation*}
    \sup\nolimits_{\mu} \inf\nolimits_{\nu} V_{h}^{\mu, \nu}(x)=V_{h}^{\mu^{*}, \nu^{*}}(x)=\inf\nolimits_{\nu} \sup\nolimits_{\mu} V_{h}^{\mu, \nu}(x)
\end{equation*}
for all $(x, h) \in \cX \times [H]$. For simplicity, we denote $V_h^*(x) := V_h^{\mu^*, \nu^*}(x)$ and $Q_h^*(x) := Q_h^{\mu^*, \nu^*}(x)$. Note that although there might exist multiple NE policies, the NE value function is unique for a zero-sum MG.

\textbf{Performance metrics.}
{A max-player's policy $\mu$ is said to be $\epsilon$-close to the NE if it satisfies $V^*(x^1) - V^{\mu,\dagger}(x^1) < \epsilon$. Note that we have $V^*(x^1) - V^{\mu,\nu}(x^1) \leq V^*(x^1) - V^{\mu,\dagger}(x^1)$ for all min-player's policy $\nu$ as the best response is the strongest opponent for the max-player. The main goal of this paper is to find an $\epsilon$-close policy for the max-player and her regret over $T$ episodes can be defined as}
\begin{align*}
\Reg(T):=\sum_{t=1}^{T}\left[V_{1}^{*}\left(x_{1}\right)-V_{1}^{\mu_t, \dagger}\left(x_{1}\right)\right],
\end{align*}
{where $\mu_t$ is the policy adopted by the max-player for episode $t$. Note that we can switch the roles of two players to learn a policy $\nu$ that is $\epsilon$-close to the NE for the min-player.}

\subsection{Function Approximation}
As mentioned in Sec.~\ref{sec:intro}, real-world applications of RL often encounter the challenge of a large state space where storing a table as in the classical Q-learning is generally infeasible. To overcome this challenge, function approximation is proposed and proven to be efficient with many practical successes. Following similar attempts in MDP, we aim to approximate {the $Q$-value functions for the MGs considered in this work} by a class of functions $\cF = \cF_1 \times \cdots \times \cF_H$ where $\cF_h \subset (\cX \times \cA \times \cB \to \RR)$. 

For $f \in \cF$, a NE can be induced and the corresponding policy $\mu_f$ of the max-player is defined for all $(x, h)$ as
\begin{equation*}\small
    \mu_{f, h}(x)=\argmax\nolimits_{\mu \in \Delta_{\mathcal{A}}}\min\nolimits_{\nu \in \Delta_{\mathcal{B}}} \mu^{\top} f^{h}(x, \cdot, \cdot) \nu.
\end{equation*}
{The induced value function for all $(x, h)$ is then given by}
\begin{equation*}
    V_{f, h}(x)=\max\nolimits_{\mu \in \Delta_{\mathcal{A}}} \min\nolimits_{\nu \in \Delta_{\mathcal{B}}} \mu^{\top} f^{h}(x, \cdot, \cdot) \nu.
\end{equation*}

Moreover, for a fixed max-player policy $\mu$ and a function $f\in \cF$, the induced value function of the best response of the min-player is defined  for all $(x, h)$ as 
\begin{equation*}
    V_{f, h}^{\mu}(x)=\min\nolimits_{\nu \in \Delta_{\mathcal{B}}} \mu_{h}(x)^{\top} f^{h}(x, \cdot, \cdot) \nu.
\end{equation*}
This is mainly for the min-player to choose her policy, given the max-player's policy $\mu$. Details can be found in Sec.~\ref{sec:alg}.

As common in \citet{perolat2015approximate, jin2021power, huang2021towards}, two types of Bellman operators are defined as
{
\begin{align*}
    \left(\mathcal{T}_{h} f\right)(x, a, b)&:=\left[r^{h}+\Pb_{h} V_{f, h+1}\right](x, a, b); \\
    \left(\mathcal{T}_{h}^{\mu} f\right)(x, a, b)&:=\left[r^{h}+\Pb_h V_{f, h+1}^{\mu}\right](x, a, b).
\end{align*}}%
The corresponding Bellman residual are denoted as
\begin{equation}
    \label{eqn:bellman_residual}
    \begin{aligned}
    \cE_h(f;x,a,b) &= \cE(f^h,f^{h+1};x,a,b) \\
    &= f^h(x,a,b) - (\cT_h f)(x,a,b);\\
    \cE_h^{\mu}\left(f ; x, a, b\right) &= \cE^{\mu}(f^h,f^{h+1};x,a,b)\\
    &=f^h(x,a,b) - (\cT^{\mu}_h f)(x,a,b).
    \end{aligned}
\end{equation}
Sometimes the state-action pair $(x,a,b)$ {may be replaced} with a trajectory $\zeta = \{(x^{h'}, a^{h'}, b^{h'}, r^{h'})\}_{h'=1}^H$, which indicates that the corresponding state-action pair at step $h$, i.e. $(x^{h}, a^{h}, b^{h})$, is taken as input. 

Recent advances show that RL with function approximation is, in general, intractable without any further assumption \citep{krishnamurthy2016pac,weisz2021exponential}. It is thus common to adopt additional assumptions over the function class in the literature on general function approximation in MDPs, especially the realizability and completeness assumptions  \citep{wang2020reinforcement,jin2021bellman, dann2021provably}. {As MGs are natural extensions of MDPs, the generalized realizability and completeness assumptions are also adopted in this work. Note that Assumptions~\ref{assu:realizability} and \ref{assu:completeness}  are also required by other recent works on MGs with general function approximation \citep{jin2021power, huang2021towards}.}
\begin{assumption}[Realizability] \label{assu:realizability}
{For the Nash equilibrium, it holds} that $
    Q^*_h \in \cF_h, \forall h \in [H]$.
Moreover, for any $f \in \cF$, it holds that $
    Q_h^{\mu_f, \dagger} \in \cF_h, \forall h\in [H]$.
\end{assumption}
The realizability assumption states that the function class $\cF$ is large enough so that it contains the $Q$-value function of the NE and also the $Q$-value function of any induced policy and its best response. 

The completeness assumption is more restrictive where the main drawback is that completeness is non-monotone, meaning that adding one function into $\cF$ may violate the assumption. However, it is the key to handling the variance of sampling in the literature and analysis without completeness seems very challenging.

\begin{assumption}[Completeness] \label{assu:completeness}
{For any $f, g\in \cF$ and the induced policy $\mu_f$, it holds that} $
    \cT_h^{\mu_f} g \in \cF_h, \forall h\in [H]$.
\end{assumption}

Additionally, the following boundedness assumption\footnote{$(\beta-1)$ is usually assumed to be either $1$ or $H$ in the literature.} is considered, which is natural for bounded rewards and a finite episode length.
\begin{assumption}[Boundedness] \label{assu:bounded}
There exists $\beta > 1$ s.t. $f^h(x,a,b) \in [0, \beta-1], \forall (f,h,x,a,b) \in \cF \times [H] \times \cX \times \cA \times \cB$. 
\end{assumption}

\section{Algorithm}\label{sec:alg}
The proposed {\it Conditional Posterior Sampling with Booster} algorithm is presented in this section. ''Conditional'' refers to the design of $q(f^h|f^{h+1}, S_t)$ (the denominator in Eqn.~\eqref{eqn:likelihood}), which allows us to use the true Bellman operator $\cT_h$ in the analysis even though we do dot know it in the executed algorithm. ''Booster'' is a synonym for exploiter \citep{jin2021power} in the context and refers to the asymmetric structure as the second agent aims to assist the main agent's learning. Another reason is that when we wrote this paper, one of the authors had a fever due to the booster vaccine, and another author tested positive for covid.

\begin{algorithm}[tbh]
	\caption{Conditional Posterior Sampling with Booster}
	\label{alg:mg_TS}
	\begin{small}
		\begin{algorithmic}[1]
		    \STATE \textbf{Input}: function class: $\cF$, learning rate $\eta$, horizon $T$, prior parameter $\lambda$.
		    \STATE $S_0$ is initialized to be empty.
			\FOR{Stage $t=1,\dots,T$}
			\STATE Main agent: $\mu_t\gets $ Main($\cF, \eta, S_{t-1}, T, \lambda$);
			\STATE Booster agent: $\nu_t\gets$ Booster($\cF, \eta, S_{t-1}, \mu_t, T, \lambda$);
			\STATE Execute the policies $(\mu_t, \nu_t)$ and collect the trajectory $(x_t^1, a_t^1, b_t^1, r_t^1, \cdots, x_t^H, a_t^H, b_t^H, r_t^H)$ to obtain $S_t$.
			\ENDFOR
		\end{algorithmic}
	\end{small}
\end{algorithm}

\subsection{Overview}


The existing algorithms with frequentist guarantees are confined to OFU-based algorithms. Algorithmically, these algorithms typically maintain a confidence set $\mathcal{C}$ whose components are empirically consistent with the Bellman equation so far. Then, an optimistic function $f \in \mathcal{C}$ is selected to approximate the true value function through some optimization subroutine \cite{jin2021power, huang2021towards}. In contrast, the posterior sampling algorithm starts with a \textit{prior} $p_0(\cdot)$ over the function class $\cF$ and collects trajectories to compute the \textit{likelihood}; they together lead to a posterior distribution $p(\cdot)$ over $\cF$. Then, a function is sampled from the posterior distribution to approximate the target. In addition to the difference in algorithm structure, the analysis techniques for the posterior sampling algorithm are also different, particularly due to the lack of explicit optimism from the planning step.

A frequentist theoretical guarantee of posterior sampling algorithms has been lacking for a long time, even in the context of contextual bandit. Recently, \citet{zhang2021feel} and \citet{dann2021provably} show that adding an extra optimistic term can lead to frequentistly optimal posterior sampling algorithm in contextual bandit and MDP, respectively. However, in the MARL setting, the multi-agent nature leads to complicated statistical dependence across the players. In particular, in addition to the environment, the agent will also be affected by other strategic agents. Therefore, the situation is more complicated even in the two-player case and the algorithms designed for MDPs cannot be directly extended to MGs. To overcome this issue, inspired by \citet{jin2021power, huang2021towards}, we leverage the innovative asymmetric structure to pick the max-player and the min-player as the main agent and the booster agent, respectively, where the booster agent, as the name suggests, aims to assist the main agent's learning. 

Our algorithm is summarized in Algorithm~\ref{alg:mg_TS} where the main agent's algorithm and the booster agent's algorithm are given in Algorithms~\ref{alg:main} and \ref{alg:booster agent}, respectively.

\begin{algorithm}[htb]
	\caption{Main($\cF, \eta, \cD, T, \lambda$)}
	\label{alg:main}
	\begin{small}
		\begin{algorithmic}[1]
			\STATE Draw $f \sim p(\cdot|\cD)$ where the posterior is given by Eqn.~\eqref{eqn:posterior_main};
			\STATE $\mu_{f, h}(x)=\argmax\nolimits_{\mu \in \Delta_{\mathcal{A}}} \min _{\nu \in \Delta_{\mathcal{B}}} \mu^{\top} f^{h}(x, \cdot, \cdot) \nu, \forall (x, h)$;
			\STATE Return $\mu_{f}$.
    \end{algorithmic}
	\end{small}
\end{algorithm}
\subsection{The Main Agent}
{The main agent's goal is to learn a $\epsilon$-close policy for the max-player, i.e., $V^*(x^1) - V^{\mu,\dagger}(x^1) < \epsilon$. With function class $\cF$ available, she aims to find a function $f\in \cF$ to approximate the Nash $Q$-value function, i.e., $Q^*$, which can be used to solve the Nash policy via the minimax equation. The following optimistic prior and temporal difference error likelihood are carefully crafted to induce a desired posterior distribution over $\cF$, which is further used to sample a suitable function $f$. }

\textbf{Optimistic prior.} The following prior $\tilde{p}_0(\cdot)$ over the function class $\cF$ is adopted for the main agent:
\begin{equation}
    \label{eqn:prior}
    \tilde{p}_0(f) \propto \exp(\lambda V_{f, 1}(x^1)) \prod_{h=1}^H p_0^h(f^h),
\end{equation}
where $\lambda > 0$ is a tuning parameter, and $p_0^h(\cdot)$ is a distribution over $\cF_h$. Note that other than the standard prior of $p_0(f)=\prod_{h=1}^H p_0^h(f^h)$, an additional \emph{optimistic} term, i.e., $\exp(\lambda V_{f,1}(x^1)$, is involved in the prior, which plays an important role of  encouraging exploration for the main agent.

This prior is referred to as an optimistic one because it favors large values for the initial state. Also, technically, it compensates for one extra term arising in the value decomposition in Lemma~\ref{lem:value_decom_main} when the optimism is {not inherently available as in OFU-based algorithms}. {Similar techniques are also adopted in the design of posterior sampling for MDPs \citep{dann2021provably} and contextual bandits \citep{zhang2021feel}.} Furthermore, \citet{zhang2021feel} argues that in the context of contextual bandit, such an optimistic component is necessary to design optimal posterior-sampling-based algorithms in the frequentist setting.

Also, apart from the optimism itself, the \textit{global} optimism mechanism, meaning that we only add an optimistic term in the prior distributions at the initial value, is the key to achieving improvement in the feature dimension for linear MGs. We will return to this in Sec.~\ref{sec:illu}.

\textbf{Likelihood for the main agent.} If we denote the history up to the end of episode $t$ as $S_t = \{x_s^h, a_s^h, b_s^h, r_s^h\}_{s \in [t], h \in [H]}$, a likelihood over $S_t$ is specified as
\begin{equation}\label{eqn:likelihood}\small
    p(S_t|f) \propto \prod_{h=1}^H \frac{\exp
    \left(- L^h(f^h, f^{h+1};S_t)\right)}{\Eb_{f^h \sim p_0^h} \exp(-\eta L^h(f^h, f^{h+1}; S_t))}.
\end{equation}
$\{L^h(\cdot)\}_{h=1}^H$ is a collection of {squared} loss functions as
\begin{equation*}\small
\begin{aligned}
        L^h(f^h, f^{h+1}; S_t) = \sum_{s=1}^{t} \left[f^h(x_s^h,a^h_s,b^h_s) - r_s^h - V_{f^{h+1}}(x^{h+1}_s)\right]^2,
\end{aligned}
\end{equation*}
which is a proxy to the squared $\cT_h$-Bellman error. The likelihood in Eqn.~\eqref{eqn:likelihood} introduces a special denominator, which is motivated by that for MDP \citep{dann2021provably}. We will see that the denominator is the key to handling the variance of sampling, but that is also why we need the strong completeness assumption. We will discuss this in Sec.~\ref{sec:illu_likelihood}.

\textbf{Posterior distribution for the main agent.} Given the prior distribution and the likelihood, the posterior at the end of episode $t$ can be naturally expressed as 
\begin{equation}
    \label{eqn:posterior_main}
    \begin{aligned}
    {p}(f|S_t) \propto \exp(\lambda V_{f,1}(x^1)) \prod_{h=1}^H q(f^h|f^{h+1}, S_t), 
    \end{aligned}
\end{equation}
where 
\begin{equation*}\small
\begin{aligned}
        q(f^h|f^{h+1}, S_t) = \frac{p_0^h(f^h)\exp
    \left(- \eta L^h(f^h, f^{h+1}; S_t)\right)}{\Eb_{f^h \sim p_0^h} \exp(-\eta L^h(f^h, f^{h+1}; S_t))}.
\end{aligned}
\end{equation*}

\begin{algorithm}[thb]
	\caption{Booster($\cF, \eta, \cD, \mu_{f}, T, \lambda$)}
	\label{alg:booster agent}
	\begin{small}
		\begin{algorithmic}[1]
			\STATE Draw $g \sim p^{\mu_{f}}(\cdot|\cD)$ where the posterior is given by Eqn.~\eqref{eqn:posterior_booster agent}
			\STATE 
			$\nu_{h}(x) = \nu_{f,g,h}(x) = \argmin_{\nu \in \Delta_{\cB}} \mu_{f,h}^\top g^h(x,\cdot,\cdot)\nu,\forall (x,h)$
            \STATE Return $\nu$.
    \end{algorithmic}
	\end{small}
\end{algorithm}
\subsection{The Booster Agent} 
As aforementioned, the main agent aims to learn an $\epsilon$-close policy. However, given the competing nature of MGs, this task is not feasible if her opponent is naive. Thus, inspired by \citet{jin2021power, huang2021towards}, the second learning agent is set to be the booster agent. As opposed to the main agent, the booster agent does not aim at find her $\epsilon$-close policy. Instead, her goal is to assist the main agent's learning. Specifically, she examines the adopted policy of the main agent and tries to find the best response for it (since the best response is the strongest opponent). In this way, the underlying weakness of the main agent is exploited, which facilitates the learning of the NE. To better illustrate the role of the booster agent, we consider the following decomposition of the regret:
\begin{equation}\small
\label{eqn:decom}
\begin{aligned}
    \Reg(T)
    =& \underbrace{\left(\sum_{t=1}^TV_1^*(x^1) - V_1^{\mu_t, \nu_t}(x^1)\right)}_{\text{main agent}}\\
    &+ \underbrace{\left(\sum_{t=1}^TV_1^{\mu_t,\nu_t}(x^1) - V_1^{\mu_t, \dagger}(x^1)\right)}_{\text{booster agent}}.
\end{aligned}
\end{equation}
The technical advantage of involving $V_1^{\mu_t, \nu_t}(x^1)$ in the main agent part is that we can apply the value-decomposition lemma from \citet{jiang@2017} as in Lemma~\ref{lem:value_decom_main} because $(\mu_t, \nu_t)$ is the executed policy pair for trajectory collection (see Lemma~\ref{lem:value_decom_main} for details). In this case, the non-negative booster agent part is zero if we can find the best response to $\mu_t$ exactly. Motivated by this observation, the booster agent keeps learning to approximate the best response to the given max-player's policy based on the historical trajectories so as to minimize the booster agent part. Due to the different goals, the design philosophy of the booster agent is different from that of the main agent. Especially, she takes a different but also optimistic prior (for the min-player) and a different format of the likelihood. 

\textbf{Optimistic prior of the booster agent.} An optimistic prior is adopted for the booster agent, defined as
\begin{equation}
    \label{eqn:booster agent_prior}
    p^\mu_0(g) \propto \exp(-\lambda V^\mu_{g, 1}(x^1)) \prod_{h=1}^H p_0^h(g^h).
\end{equation}
Intuitively, the booster agent favors small values for the initial state, which is optimistic for the min-player. The motivation for such an optimistic prior will be clearer after the value decomposition lemma, i.e., Lemma~\ref{lem:value_decom_booster agent}, is presented. The reason why we only modify the prior will also be illustrated in Sec.~\ref{sec:illu}.

\textbf{Likelihood for the booster agent.} {As the booster agent} mainly focuses on approximating the best response policy to $\mu$ instead of finding NE, a different squared loss function is specified as:
\begin{equation*}\small
\begin{aligned}
        L^h_{\mu}(g^h,g^{h+1};S_t) = \sum_{s=1}^t [g^h(x_s^h,a^h_s,b^h_s) - r_s^h - V^\mu_{g^{h+1}}(x^{h+1}_s)]^2,
\end{aligned}
\end{equation*}
which can be viewed as a proxy to the squared $\cT_h^\mu$-Bellman error. {Consequently, a corresponding likelihood can be obtained by replacing $L^h$ in Eqn.~\eqref{eqn:likelihood} with $L^h_{\mu}$.}

\textbf{Posterior distribution for the booster agent.} With the prior and the likelihood, the posterior distribution for the booster agent can be obtained as:
\begin{equation}
    \label{eqn:posterior_booster agent}
    p^\mu(g|S_t) \propto \exp(-\lambda V^\mu_{g, 1}(x^1)) \prod_{h=1}^H q^\mu(g^h|g^{h+1}, S_t),
\end{equation}
where 
\begin{equation*}\small
\begin{aligned}
    q^\mu(g^h|g^{h+1}, S_t) = \frac{p_0^h(g^h)\exp
    \left(- \eta L^h_\mu(g^h, g^{h+1}; S_t)\right)}{\Eb_{g^h \sim p_0^h} \exp(-\eta L^h_\mu(g^h, g^{h+1}; S_t))}.
\end{aligned}
\end{equation*}
Note that sometimes we also employ the notation $q(g^h|g^{h+1}, \mu, S_t) = q^\mu(g^h|g^{h+1}, S_t)$ when we need to use the superscript $h$.

\subsection{The Learning Process} \label{sec:process}
{With the main agent and the booster agent specified, the training proceeds as the following.} For each episode $t$, the main agent first samples one $f_t \in \cF$ according to the posterior distribution $p(\cdot|S_{t-1})$ and adopts the induced Nash policy as 
\begin{equation*}\small
\begin{aligned}
        \mu_{t,h}(x)\gets \mu_{f_t,h}(x):=\argmax\nolimits_{\mu \in \Delta_{\mathcal{A}}} \min\nolimits_{\nu \in \Delta_{\mathcal{B}}} \mu^{\top} f_t^{h}(x, \cdot, \cdot) \nu
\end{aligned}
\end{equation*}
for all $(x, h)\in \cX\times [H]$.

{Then, the booster agent samples some $g_t \in \cF$ from her posterior distribution $p^{\mu_t}(\cdot|S_{t-1})$ computed from $S_{t-1}$ and $\mu_{t}$. The approximated best response is computed according to $g_t$ and $\mu_t$ as}
\begin{equation*}\small
    \nu_{t,h}(x) \gets \nu_{f_t,g_t,h}(x) = \argmin\nolimits_{\nu \in \Delta_{\cB}} \mu_{f_t,h}^\top g^h_t(x,\cdot,\cdot)\nu
\end{equation*}

for all $(x, h)\in \cX\times [H]$.

{Finally, both players execute $(\mu_t, \nu_t)$ for episode $t$, resulting in a trajectory $\zeta_t$. This collected trajectory is then added to $S_t$ and used to compute the policy for the next episode.}

\section{Sketch of the Main Ideas}\label{sec:main_idea}
In this section, a sketch of the main ideas is provided for the proposed algorithm and the theoretical proof.

\subsection{Value-Decomposition Lemmas}
It is known that the immediate regret in one episode can be related to the Bellman residuals in the single-agent setting \cite{jiang@2017}, and this technique is well-adopted in the literature \cite{jin2021bellman, dann2021provably, du2021bilinear}. For our setting, with regret decomposed as in Eqn.~\eqref{eqn:decom}, the immediate regrets of the main agent part and the booster agent part can be related to the $\cT$-Bellman residuals and the $\cT^{\mu_t}$-Bellman residuals, respectively, as we show below.
\begin{lemma}[Value decomposition for the main agent.] \label{lem:value_decom_main}
Let $\mu = \mu_f$ and $\nu$ be an arbitrary policy taken by the min-player. It holds that
\begin{equation*}\small
\begin{aligned}
&V^*(x^1) - V_1^{\mu, \nu}(x^1)\\
&\leq \sum_{h=1}^H \Eb_{\mu, \nu} \cE_h(f^h, f^{h+1}; \zeta) + V^*(x^1) - V_{f,1}(x^1).
\end{aligned}
\end{equation*}
\end{lemma}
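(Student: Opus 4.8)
The plan is to first split the target quantity using the induced value function $V_{f,1}$ as a pivot:
\[
V^*(x^1) - V_1^{\mu,\nu}(x^1) = \big(V^*(x^1) - V_{f,1}(x^1)\big) + \big(V_{f,1}(x^1) - V_1^{\mu,\nu}(x^1)\big).
\]
The first parenthesis is exactly the extra term on the right-hand side of the claim, so the whole lemma reduces to bounding the second parenthesis by $\sum_{h=1}^H \Eb_{\mu,\nu}\cE_h(f^h,f^{h+1};\zeta)$. This is where the minimax structure of the induced Nash policy $\mu = \mu_f$ must be exploited, and it is essentially a value-decomposition argument in the style of \citet{jiang@2017}, adapted to the two-player case.

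The key observation I would use is the one-sided inequality coming from the definition of $\mu_f$: since $\mu_{f,h}(x)$ attains the max-min value, for the \emph{arbitrary} opponent policy $\nu$ we have
\[
V_{f,h}(x) = \min_{\nu'\in\Delta_\cB} \mu_{f,h}(x)^\top f^h(x,\cdot,\cdot)\nu' \le \mu_{f,h}(x)^\top f^h(x,\cdot,\cdot)\nu_h(x) = [\Db_{\mu_h\times\nu_h} f^h](x).
\]
Combining this with the Bellman equation $V_h^{\mu,\nu}(x) = [\Db_{\mu_h\times\nu_h} Q_h^{\mu,\nu}](x)$ gives $V_{f,h}(x) - V_h^{\mu,\nu}(x) \le [\Db_{\mu_h\times\nu_h}(f^h - Q_h^{\mu,\nu})](x)$. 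I would then rewrite $f^h$ through its Bellman residual, $f^h = \cE_h(f^h,f^{h+1};\cdot) + r^h + \Pb_h V_{f,h+1}$, and use $Q_h^{\mu,\nu} = r^h + \Pb_h V_{h+1}^{\mu,\nu}$, so that the rewards cancel and
\[
f^h - Q_h^{\mu,\nu} = \cE_h(f^h,f^{h+1};\cdot) + \Pb_h\big(V_{f,h+1} - V_{h+1}^{\mu,\nu}\big).
\]

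Plugging this in and taking the expectation $\Eb_{\mu,\nu}$ over the trajectory generated by $(\mu,\nu)$ from the fixed initial state, the $\Pb_h$ term pushes the state forward one step, yielding the recursion
\[
\Eb_{\mu,\nu}\big[V_{f,h}(x^h) - V_h^{\mu,\nu}(x^h)\big] \le \Eb_{\mu,\nu}\,\cE_h(f^h,f^{h+1};\zeta) + \Eb_{\mu,\nu}\big[V_{f,h+1}(x^{h+1}) - V_{h+1}^{\mu,\nu}(x^{h+1})\big].
\]
Telescoping from $h=1$ to $H$ with the terminal convention $V_{f,H+1}=V_{H+1}^{\mu,\nu}=0$ then gives $V_{f,1}(x^1) - V_1^{\mu,\nu}(x^1) \le \sum_{h=1}^H \Eb_{\mu,\nu}\cE_h(f^h,f^{h+1};\zeta)$, and adding back $V^*(x^1)-V_{f,1}(x^1)$ completes the proof. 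I expect the main (and essentially only) subtlety to be the first inequality: it is precisely the best-response/minimax structure of $\mu_f$ that lets the inner $\min_{\nu'}$ be dominated by the evaluation at the actual $\nu$, converting an equality into the correct \emph{one-sided} bound. Without this the telescoping would not close as an upper bound; everything downstream is routine rewriting with rewards cancelling step by step.
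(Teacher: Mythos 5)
Your proposal is correct and is essentially the paper's own argument: both hinge on the single inequality $V_{f,h}(x)=\min_{\nu'}\Db_{\mu,\nu'}f^h(x)\le \Db_{\mu,\nu}f^h(x)$ from the max-min definition of $\mu_f$, followed by rewriting $f^h$ via its Bellman residual and telescoping (the paper writes the telescope sum in one shot à la \citet{jiang@2017}, while you unroll it as a step-by-step recursion on $V_{f,h}-V_h^{\mu,\nu}$, which is the same computation). No gaps.
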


\begin{lemma}[Value decomposition for the booster agent.]\label{lem:value_decom_booster agent}
Suppose that $\mu = \mu_f$ is taken by the max-player and $g$ is sampled from the posterior by the booster agent. Let $\nu$ be taken as in Sec.~\ref{sec:process}. Then, it holds that
\begin{equation*}\small
\begin{aligned}
&V_1^{\mu, \nu}(x^1) - V_1^{\mu, \dagger}(x^1)\\
&= -\sum_{h=1}^H \Eb_{\mu, \nu} \cE^\mu_h(g^h, g^{h+1},\zeta) + V_{g,1}^{\mu}(x^1) - V_1^{\mu, \dagger}(x^1).
\end{aligned}
\end{equation*}
\end{lemma}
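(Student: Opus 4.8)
The plan is to reduce the claimed identity to a single telescoping computation. Since the term $V_{g,1}^\mu(x^1) - V_1^{\mu,\dagger}(x^1)$ appears on both sides, the statement is equivalent to the value-difference identity
\begin{equation*}
V_{g,1}^{\mu}(x^1) - V_1^{\mu,\nu}(x^1) = \sum_{h=1}^H \Eb_{\mu,\nu}\, \cE_h^{\mu}(g^h, g^{h+1}; \zeta),
\end{equation*}
where $\zeta$ is the trajectory generated by executing $(\mu,\nu)$ from $x^1$. First I would establish a one-step version of this identity and then unroll it over $h = 1,\dots,H$.

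The key step is to exploit that $\nu$ is the \emph{exact} minimizer defining the booster's policy: by construction $\nu_h(x) = \argmin_{\nu'\in\Delta_\cB}\mu_h(x)^\top g^h(x,\cdot,\cdot)\nu'$, so that $V_{g,h}^{\mu}(x) = \mu_h(x)^\top g^h(x,\cdot,\cdot)\nu_h(x) = [\Db_{\mu_h\times\nu_h} g^h](x)$. This rewrites the plug-in value $V_{g,h}^\mu$ as an expectation of $g^h$ under the \emph{executed} policy pair $(\mu_h,\nu_h)$, which is precisely what makes the difference telescope. Pairing this with the Bellman equation $V_h^{\mu,\nu}(x) = [\Db_{\mu_h\times\nu_h}(r^h + \Pb_h V_{h+1}^{\mu,\nu})](x)$ and substituting the definition $\cE_h^\mu(g; x,a,b) = g^h(x,a,b) - r^h(x,a,b) - [\Pb_h V_{g,h+1}^\mu](x,a,b)$ of the residual gives, for every $x$,
\begin{equation*}
V_{g,h}^{\mu}(x) - V_h^{\mu,\nu}(x) = [\Db_{\mu_h\times\nu_h}\cE_h^\mu(g)](x) + [\Db_{\mu_h\times\nu_h}\Pb_h(V_{g,h+1}^\mu - V_{h+1}^{\mu,\nu})](x).
\end{equation*}

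Taking the expectation $\Eb_{\mu,\nu}$ over states drawn along the trajectory, writing $\delta_h := \Eb_{\mu,\nu}[V_{g,h}^\mu(x^h) - V_h^{\mu,\nu}(x^h)]$, and noting that the transition term collapses the step-$h$ state distribution into the step-$(h+1)$ state distribution, the display above becomes the recursion $\delta_h = \Eb_{\mu,\nu}\cE_h^\mu(g;\zeta) + \delta_{h+1}$, with boundary condition $\delta_{H+1}=0$ (both value functions vanish past step $H$) and $\delta_1 = V_{g,1}^\mu(x^1) - V_1^{\mu,\nu}(x^1)$ since $x^1$ is fixed. Unrolling this recursion yields the reduced identity, and adding $V_{g,1}^\mu(x^1) - V_1^{\mu,\dagger}(x^1)$ to both sides recovers the statement. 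I expect the only genuinely delicate point to be the exactness: the equality (rather than an inequality, as in Lemma~\ref{lem:value_decom_main}) hinges entirely on $\nu$ being the precise argmin against $g$, so that the $\min$ in the definition of $V_{g,h}^\mu$ can be replaced by the honest expectation $\Db_{\mu_h\times\nu_h}g^h$; care with the trajectory-expectation bookkeeping and the $h=H$ boundary term is the remaining routine work.
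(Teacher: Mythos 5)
Your proposal is correct and matches the paper's argument in substance: the paper likewise adds and subtracts $V_{g,1}^\mu(x^1)$, telescopes $V_{g,1}^\mu(x^1)-V_1^{\mu,\nu}(x^1)$ along the executed trajectory, and uses the fact that $\nu$ is the exact argmin against $g$ to replace $V_{g,h}^\mu(x^h)$ by $\Db_{\mu_h\times\nu_h}g^h(x^h)$, which is exactly where the equality (rather than an inequality) comes from. Writing the telescope as a one-step recursion in $\delta_h$ is only a cosmetic difference.
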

We remark that these two lemmas also account for the extra optimistic terms in the prior distributions. The proofs of these two lemmas are deferred to Appendix~\ref{sec:value_decom}.

\subsection{Multi-Agent Decoupling Coefficients}
In the previous subsection, we convert the problem of bounding $\Reg(T)$ to bounding the summation of Bellman residuals. However, the posterior distribution is more related to the \emph{squared} Bellman residuals. Therefore, we need some structural information to relate the growth of the cumulative Bellman residuals to the growth of the cumulative squared Bellman residuals. To this end, the multi-agent decoupling coefficient is introduced, which is an extension of the single-agent version in \citet{dann2021provably}, as follows.

\begin{definition}[Multi-agent decoupling coefficient]\label{def:dc} Given an $MG(H, \cX, \cA, \cB, \Pb, r)$, a function class $\cF$, a time horizon $T$, and a parameter $\mu > 0$, the multi-agent decoupling coefficient $dc(\cF, MG, T, \mu)$ is defined to be the smallest integer such that
\begin{equation*}\small
\begin{aligned}
&\sum_{h=1}^{H} \sum_{t=1}^{T}\left[\mathbb{E}_{\pi_{{t}}}\left[\mathcal{E}_{h}^{\mu_{f_t}}\left(g_{t} ; x^{h}, a^{h}, b^h\right)\right]\right] \\
&\leq \mu \sum_{h=1}^{H} \sum_{t=1}^{T}\left[\sum_{s=1}^{t-1} \mathbb{E}_{\pi_{{s}}}\left[\cE_h^{\mu_{f_t}}\left(g_{t} ; x^{h}, a^{h}, b^h\right)\right]^2\right]+\frac{K}{4 \mu},
\end{aligned}
\end{equation*}
where $\pi_s$ is a policy pair $(\mu_{f_s}, \nu_{f_s,g_s})$ induced by $(f_s,g_s)$ as introduced in Sec.~\ref{sec:process}.
The set of these distributions induced by $f,g \in \cF$ is denoted as $\cD_\cF$.
\end{definition}

Equipped with the multi-agent decoupling coefficient, it remains to bound the cumulative squared Bellman residuals $\sum_{s=1}^{t-1} \left[\mathbb{E}_{\pi_{{s}}}\cE_h^{\mu_{f_t}}\left(g_{t} ; x^{h}, a^{h}, b^h\right)\right]^2$ by connecting it to the likelihood $L^h_{\mu_{f_t}}(g^h,g^{h+1};S_{t-1})$ used in the posterior distributions. 

\subsection{Connection to Likelihood} \label{sec:illu_likelihood}
We focus on the main agent and the booster agent is similar. We consider the $L^h(g^h_t,g_t^{h+1}; \zeta_s)$ (when we only evaluate the loss with only one trajectory, we directly use the notation $\zeta_s$). Taking expectation, we have
\begin{equation}
    \Eb_{\pi_s} L^h(g_t^h, g_t^{h+1}; \zeta_s) =  \left[\Eb_{\pi_s} \cE_h(g_t;x^h,a^h,b^h)\right]^2 + \sigma^2,
\end{equation}
where $\sigma^2$ is the expectation of $(\cT_h f^{h+1}(x_s^h, a_s^h, b_s^h) - r_s^h - V_{f^{h+1}}(x_s^{h+1}))^2$ or the variance, which is hard to deal with. However, the denominator in the likelihood allows us to rewrite the algorithm by replacing $L^h(g_t^h, g_t^{h+1}; \zeta_s)$ with the following excess loss:
\begin{align}
        &\Delta L^h(f^h, f^{h+1}; \zeta_s) := L^h(f^h,f^{h+1}; \zeta_s) \notag\\
        & \qquad - (\cT_h f^{h+1}(x_s^h, a_s^h, b_s^h) - r_s^h - V_{f^{h+1}}(x_s^{h+1}))^2,
\end{align}
whose expectation is the desired $\left[\Eb_{\pi_s} \cE_h(g_t;x^h,a^h,b^h)\right]^2$. After resolving the issue of variance, the analysis follows from the online aggregation techniques. However, the completeness assumption is required to analyze the introduced denominator (see the proof of Lemma~\ref{lem:lower_bound} and Lemma~\ref{lem:lower_bound2}).

\subsection{More Intuition}
We emphasize that the feature of the self-play setting that the learning agent can control both the max-player and the min-player plays a central role in the algorithm design and analysis. This allows us to decompose the regret into two parts as in Eqn.~\eqref{eqn:decom} and further employ the asymmetric structure to handle two parts. The analysis in the single-agent case essentially relies on the Markov property of transition, (conditional) sub-Gaussianity of the noise of transition, and the fact that the regret in one episode is upper bounded by the sum of Bellman residuals. We note that both the main agent and the booster agent retain these properties separately. Therefore, the techniques from MDP can be applied but with some additional efforts to handle the game nature.


\subsection{Complexity of $\cF$}
For optimization-based algorithms, the complexity of the function class $\cF$ is usually characterized through the cardinality $|\cF|$ or the covering number \cite{jiang@2017, wang2020reinforcement, jin2021bellman, jin2021power, huang2021towards}. On the other hand, the posterior sampling algorithm employs a prior distribution $p_0$ over $\cF$, which allows the algorithm to favor certain parts of it. Accordingly, our theoretical result depends on the complexity of $\cF$ through the prior preference, which is characterized by the following quantity.

\begin{definition} \label{def:quantity}
For a policy $\mu_f, f \in \cF$ and for any function $g' \in \cF_{h+1}$, we define
\begin{equation*}\small
    \cF^{\mu_f}_h(\epsilon, g') = \{g \in \cF_h: \sup\nolimits_{x,a,b} |\cE^{\mu_f}_h(g,g';x,a,b)| \leq \epsilon \},
\end{equation*}
containing the functions that have small $\cT^{\mu_f}_h$-Bellman error against $g'$ for all state-action pairs. We then define
$$\kappa_\mu(\epsilon) = \sup\nolimits_{g\in \cF} \sum_{h=1}^H \ln \left(1/p_0^h(\cF^\mu_h(\epsilon, g^{h+1}))\right),$$
and
$$\kappa(\epsilon) = \sup\nolimits_{f \in \cF} \kappa_{\mu_f}(\epsilon).$$
\end{definition}
Under Assumption~\ref{assu:completeness}, it is assumed that
$\kappa(\epsilon) < \infty$, which is supported by the following two specific examples. 

For the finite function class with completeness, with a uniform prior $p_0^h(f) = 1/|\cF_h|$, we have
$$\kappa(\epsilon) \leq \sum_{h=1}^H \ln |\cF_h| = \ln |\cF|,$$
due to the realizability assumption. For an infinite function class, by replacing $|\cF|$ with its covering number, similar result can also be ontained.

For a $d$-dimensional parametric models $\cF_h = \{g_\theta \in \RR^d: \theta \in \Omega_h\}$ where $\Omega_h$ is compact, we can generally assume that 
$
\sup _{\theta} \ln \frac{1}{p_{0}^{h}\left(\left\{\theta^{\prime}:\left\|\theta^{\prime}-\theta\right\| \leq \epsilon\right\}\right)} \leq d \ln \left(c^{\prime} / \epsilon\right)
$
for some constant $c'$ depending on the prior. If we further assume that $g_\theta$ is Lipschitz in $\theta$ (e.g., linear MG \cite{xie2020learning}), then we can assume that $
\ln \frac{1}{p_{0}^{h}\left(\mathcal{F}_{h}^{\mu_f}\left(\epsilon, g^{h+1}\right)\right)} \leq c_{0} d \ln \left(c_{1} / \epsilon\right)
$
for some constants $c_0$ and $c_1$ depending on the prior and the Lipschitz constant $L$. In this case, we have
$$\kappa(\epsilon) \leq c_0Hd\ln(c_1/\epsilon).$$

\section{Main Results}\label{sec:results}
In this section, we state the main theoretical result of this paper and interpret it using several examples.
\subsection{Theoretical Guarantee}
We now provide an upper bound for the overall regret.
\begin{theorem}[Overall regret] \label{thm:thm}
Let Assumptions~\ref{assu:realizability}, \ref{assu:completeness} and~\ref{assu:bounded} hold. If ${\eta}{\beta^2} \leq 0.5$ and $\lambda \beta^2 \geq 1$ hold, and let $dc(\cF,MG,T)$ be an upper bound for the $\sup_{\mu \leq 1}dc(\cF,MG,T,\mu)$, and we further take $\lambda = \sqrt{\frac{T\kappa(\frac{\beta}{T^2})}{\beta^2dc(\cF,MG,T)}}$, $\eta  = \frac{1}{4\beta^2}$, then, it holds that
\begin{equation*}\small
\begin{aligned}
\Eb \Reg(T) \leq O\bigg(\beta\sqrt{dc(\cF,MG,T)\kappa(\frac{\beta}{T^2})T} + {dc(\cF,MG,T)}\bigg). 
\end{aligned}
\end{equation*}
\end{theorem}
Notably, if the multi-agent decoupling coefficient is provably small, Algorithm~\ref{alg:mg_TS} admits a $\sqrt{T}$-regret. According to the decomposition in Eqn.~\eqref{eqn:decom}, Theorem~\ref{thm:thm} can be established once we can bound the main agent part and the booster agent part.

\begin{theorem}[Bound of the main agent] \label{thm:adversarial}
With the same conditions as Theorem~\ref{thm:thm}, it holds that
\begin{equation*}\small
\begin{aligned}
&\sum_{t=1}^T\Eb_{S_{t-1}} \Eb_{f_t \sim p_t} \Eb_{g_t \sim p^{\mu_t}_t} \left[V_1^*(x^1) - V_1^{\mu_t, \nu_t}(x^1)\right] \\
&\leq O\bigg(\beta\sqrt{dc(\cF,MG,T)\kappa(\frac{\beta}{T^2})T} + dc(\cF,MG,T)\bigg). 
\end{aligned}
\end{equation*}
\end{theorem}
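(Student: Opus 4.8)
The plan is to begin from the main agent part of Eqn.~\eqref{eqn:decom} and apply the value-decomposition lemma (Lemma~\ref{lem:value_decom_main}) episode by episode, taking $\mu=\mu_{f_t}$ and $\nu=\nu_t$ so that $\pi_t=(\mu_{f_t},\nu_{f_t,g_t})$ is exactly the executed pair and the lemma is applicable. This rewrites each summand $V_1^*(x^1)-V_1^{\mu_t,\nu_t}(x^1)$ as a cumulative $\cT$-Bellman-residual piece $\sum_{h}\Eb_{\pi_t}\cE_h(f_t;\zeta)$ plus an \emph{optimism gap} $V^*(x^1)-V_{f_t,1}(x^1)$. Keeping the outer expectations $\Eb_{S_{t-1}}\Eb_{f_t\sim p_t}\Eb_{g_t\sim p_t^{\mu_t}}$ throughout, I would then bound the summed residual piece and the summed optimism gap by separate arguments.

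For the optimism gap I would exploit the feel-good prior $\exp(\lambda V_{f,1}(x^1))$ appearing in the posterior Eqn.~\eqref{eqn:posterior_main}. Since $Q^*\in\cF$ by Assumption~\ref{assu:realizability}, the comparator $f^*=Q^*$ with $V_{f^*,1}=V^*$ is available, and a Gibbs-variational (free-energy) comparison of the posterior $p_t$ against a distribution concentrated near $f^*$ shows that $\lambda\sum_t\Eb_{f_t}[V^*(x^1)-V_{f_t,1}(x^1)]$ is dominated by a comparator complexity term of order $\kappa(\beta/T^2)$ plus accompanying loss terms. This is precisely the mechanism the optimistic prior was designed for: it substitutes for the explicit optimism that OFU methods obtain from their planning step, and after dividing by $\lambda$ it contributes a term of order $\kappa(\beta/T^2)/\lambda$, while the loss terms are carried forward to be merged with the likelihood analysis below.

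The residual piece is then routed through the multi-agent decoupling machinery of Definition~\ref{def:dc}, instantiated for the $\cT$-residuals of the main agent: for a parameter $\mu>0$ it upper-bounds $\sum_t\sum_h\Eb_{\pi_t}\cE_h(f_t;\cdot)$ by $\mu\sum_t\sum_h\sum_{s<t}[\Eb_{\pi_s}\cE_h(f_t;\cdot)]^2$ plus $K/(4\mu)$. The historical squared residuals are in turn tied to the posterior through the conditional denominator of Eqn.~\eqref{eqn:likelihood}: by the discussion of Sec.~\ref{sec:illu_likelihood} the excess loss $\Delta L^h$ has conditional expectation exactly $[\Eb_{\pi_s}\cE_h]^2$, so an online-aggregation (martingale) concentration over the adaptively collected $S_{t-1}$, combined with the denominator lower bound of Lemma~\ref{lem:lower_bound}, converts $\sum_{s<t}[\Eb_{\pi_s}\cE_h]^2$ into the negative log prior-mass of the near-realizable sets (cf. Definition~\ref{def:quantity}), which is controlled by $\kappa(\beta/T^2)$. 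Assumption~\ref{assu:completeness} is what makes this denominator bound valid.

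The hard part will be this last conversion: cancelling the sampling variance $\sigma^2$ so that only the squared Bellman residual remains, while simultaneously controlling the martingale deviations on adaptively collected trajectories and the coupling between the sampled $f_t$ and the booster's $g_t$ inside $\pi_t$. Once the three contributions are assembled, I would substitute $\eta=1/(4\beta^2)$ and $\lambda=\sqrt{T\kappa(\beta/T^2)/(\beta^2\,dc(\cF,MG,T))}$, and choose the decoupling parameter $\mu\le 1$ to balance $\mu\cdot(\text{squared residuals})$ against $K/(4\mu)$; the two $\sqrt{T}$-type terms then combine into $\beta\sqrt{dc(\cF,MG,T)\,\kappa(\beta/T^2)\,T}$ while the leftover $K/(4\mu)$ yields the additive $dc(\cF,MG,T)$ term, giving the claimed bound.
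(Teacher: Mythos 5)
Your proposal follows essentially the same route as the paper's proof: the value decomposition of Lemma~\ref{lem:value_decom_main}, the feel-good prior handled by a Gibbs-variational comparison against a comparator concentrated on the near-realizable sets $\cF_h(\epsilon, Q^*_{h+1})$, the multi-agent decoupling coefficient to pass from linear to squared residuals, the excess-loss/denominator mechanism (Lemmas~\ref{lem:martingale}, \ref{lem:lower_bound}, \ref{lem:upper}) to cancel the variance under completeness, and the same choices of $\epsilon$, $\eta$, $\lambda$. The only cosmetic differences are that the paper runs your two "separate arguments" as a single potential-function inequality (the $A$, $B_h$, $C_h$ terms of Lemma~\ref{lem:entropy}), and the coupling with $g_t$ that you flag as a difficulty is already absorbed into Definition~\ref{def:dc} via $\cD_\cF$, since the $\cT$-Bellman residual of $f_t$ itself does not depend on the min-player's policy.
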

We then turn to the booster agent and provide an upper bound for the regret induced by approximating the best response policy. 

\begin{theorem}[Bound of the booster agent] \label{thm:booster agent}
With the same conditions as Theorem~\ref{thm:thm}, it holds that
\begin{equation*}\small
\begin{aligned}
&\sum_{t=1}^T\Eb_{S_{t-1}} \Eb_{f_t \sim p_t}\Eb_{g_t \sim p^{\mu_t}_t}  \left[V_1^{\mu_t, \nu_t}(x^1) - V_1^{\mu_t, \dagger}(x^1)\right] \\
&\leq O\bigg(\beta\sqrt{dc(\cF,MG,T)\kappa(\beta/T^2)T} + {dc(\cF,MG,T)}\bigg). 
\end{aligned}
\end{equation*}
\end{theorem}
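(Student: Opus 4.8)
The plan is to reproduce, with the roles of the two players exchanged, the argument behind the main-agent bound of Theorem~\ref{thm:adversarial}, exploiting that the booster agent preserves the same Markov/sub-Gaussian structure but is driven by the best-response operator $\cT^{\mu_t}$ rather than the Nash operator $\cT$. First I would invoke Lemma~\ref{lem:value_decom_booster agent} to rewrite each per-episode gap as
\begin{equation*}\small
V_1^{\mu_t,\nu_t}(x^1) - V_1^{\mu_t,\dagger}(x^1) = -\sum_{h=1}^H \Eb_{\pi_t}\cE^{\mu_t}_h(g_t^h,g_t^{h+1};\zeta) + V^{\mu_t}_{g_t,1}(x^1) - V_1^{\mu_t,\dagger}(x^1),
\end{equation*}
so that after summing over $t$ and taking the nested expectation $\Eb_{S_{t-1}}\Eb_{f_t}\Eb_{g_t}$ the booster regret splits cleanly into a cumulative $\cT^{\mu_t}$-Bellman-residual term and an optimism-gap term $\sum_t\Eb\!\left[V^{\mu_t}_{g_t,1}(x^1) - V_1^{\mu_t,\dagger}(x^1)\right]$. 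These are exactly the two quantities the optimistic prior and the likelihood of the booster were designed to control.

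For the optimism-gap term I would show it is compensated by the optimistic prior $\exp(-\lambda V^\mu_{g,1}(x^1))$, which biases the sampled $g_t$ toward small best-response initial values. By the realizability in Assumption~\ref{assu:realizability}, $Q^{\mu_{f_t},\dagger}\in\cF$ attains $V_1^{\mu_t,\dagger}(x^1)$ exactly and has zero $\cT^{\mu_t}$-Bellman error. A free-energy (log-partition) comparison of the posterior $p^{\mu_t}(\cdot\mid S_{t-1})$ against this reference function then lower-bounds the normalizing constant and forces $\Eb_{g_t}V^{\mu_t}_{g_t,1}(x^1)$ to stay close to, and not much above, $V_1^{\mu_t,\dagger}(x^1)$, at the price of a $\lambda$-weighted likelihood/complexity correction of order $\kappa(\beta/T^2)/\lambda$ together with a $\lambda$-dependent normalization term.

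For the residual term I would apply the multi-agent decoupling coefficient of Definition~\ref{def:dc}, which is tailored precisely to the booster's $\cT^{\mu_{f_t}}$-residuals, to pass from the (signed) linear residual sum to the cumulative squared residuals $\sum_{h,t}\sum_{s<t}[\Eb_{\pi_s}\cE^{\mu_{f_t}}_h(g_t;\cdot)]^2$ plus the residue $dc(\cF,MG,T,\mu)/(4\mu)$; the right-hand side of the definition depends only on squared residuals and is therefore insensitive to the minus sign carried by the booster decomposition. I would then bound these squared residuals with the denominator/excess-loss device of Section~\ref{sec:illu_likelihood}: the excess loss $\Delta L^h$ has conditional expectation exactly $[\Eb_{\pi_s}\cE^{\mu}_h(g;\cdot)]^2$, eliminating the sampling variance $\sigma^2$, after which an online-aggregation and martingale-concentration argument (the likelihood lower bounds, Lemma~\ref{lem:lower_bound} and Lemma~\ref{lem:lower_bound2}) yields a bound of order $\kappa(\beta/T^2)/\eta$; the completeness Assumption~\ref{assu:completeness} is exactly what legitimizes replacing the unknown $\cT^{\mu_t}_h$ inside the denominator.

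Finally I would assemble the pieces and optimize the free parameters. The optimism-gap contribution is of order $\kappa(\beta/T^2)/\lambda$ plus a term growing with $\lambda\beta^2$, while the residual contribution, after the decoupling step and the aggregation bound, is of order $\mu\, T\,\kappa(\beta/T^2)/\eta + dc(\cF,MG,T)/\mu$ with the decoupling parameter $\mu\le 1$. Fixing $\eta = 1/(4\beta^2)$, replacing $\sup_{\mu\le1}dc(\cF,MG,T,\mu)$ by its upper bound $dc(\cF,MG,T)$, and taking $\lambda = \sqrt{T\kappa(\beta/T^2)/(\beta^2 dc(\cF,MG,T))}$ balances the $\kappa/\lambda$ and $\lambda\beta^2$ terms and produces the claimed $O\!\left(\beta\sqrt{dc(\cF,MG,T)\kappa(\beta/T^2)T} + dc(\cF,MG,T)\right)$ bound, the additive $dc$ arising from the $dc/(4\mu)$ decoupling residue. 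I expect the main obstacle to be the third step: simultaneously removing the sampling variance through the completeness-justified denominator and carrying the $\lambda$-dependent optimistic correction through the aggregation so that the prior's bias toward small best-response values exactly cancels the optimism gap without inflating the complexity penalty, all while the best-response operator $\cT^{\mu_t}$ itself depends on the separately sampled $\mu_t$.
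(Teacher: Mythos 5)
Your proposal is correct and follows essentially the same route as the paper's proof: the value decomposition of Lemma~\ref{lem:value_decom_booster agent}, cancellation of the optimism gap by the $\exp(-\lambda V^{\mu}_{g,1}(x^1))$ prior via the Gibbs variational principle with the reference function built from $Q^{\mu_t,\dagger}$ (realizability), the excess-loss/denominator device to reduce to squared $\cT^{\mu_t}$-residuals (completeness), the multi-agent decoupling coefficient applied with parameter $0.25\alpha\eta/\lambda$, and the same choices $\eta=1/(4\beta^2)$, $\epsilon=\beta/T^2$, $\lambda=\sqrt{T\kappa(\beta/T^2)/(\beta^2 dc(\cF,MG,T))}$. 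The obstacle you flag at the end --- carrying the dependence of $\cT^{\mu_t}$ on the separately sampled $f_t$ through the aggregation --- is handled in the paper simply by nesting the expectation $\Eb_{f_t\sim\hat p_t}$ outside all booster-side bounds and noting that Lemmas~\ref{lem:lower_bound2} and the upper-bound lemma hold uniformly over policies $\mu_f$ induced by $f\in\cF$.
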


The detailed proofs can be found in the appendix. 

\subsection{Bounds for the Multi-Agent Decoupling Coefficient}
In this subsection, we provide several examples whose multi-agent decoupling coefficient is provably small. The proof can be found in Appendix~\ref{appen:dc_bounds}.

\textbf{Linear MG.} The first example is the MG with linear function approximation \cite{xie2020learning}. In this case, there exists a feature map $\phi(x,a,b) \in \RR^d$ and it holds that $r^h(x,a,b) = \phi(x,a,b)^\top \theta^h_*$ and $\Pb^h(x'|x,a,b) = \phi(x,a,b)^\top \mu_h(x')$ for some unknown $\theta^h_* \in \RR^d$ and $\mu_h(\cdot) \in \RR^d$ satisfying $\max\{\norm{\theta^h_*}, \norm{\mu_h}\} \leq \sqrt{d}$. We have the following upper bound for the multi-agent decoupling coefficient.
\begin{proposition}[Linear MG] \label{prop:linear} For a d-dimensional MG with $\cF_h = \{\phi_h(\cdot,\cdot,\cdot)^\top \theta^h: \norm{\theta^h} \leq (H+1-h)\sqrt{d}\}$ and $\norm{\phi(x,a,b)} \leq 1, \forall (x,a,b) \in \cX \times \cA \times \cB$, then for all $\mu \leq 1$, it holds that 
\begin{equation*}\small
    dc(\cF,MG,T,\mu) \leq 2dH(2+\ln(2HT)). 
\end{equation*}
\end{proposition}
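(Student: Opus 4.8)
The plan is to use the linear structure to reduce Definition~\ref{def:dc} to a geometric inequality about expected feature vectors, and then discharge it with a standard elliptical-potential (self-normalization) argument, mirroring the single-agent linear analysis that the definition extends.

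The first step is to linearize the Bellman residual. Fix a round $t$, abbreviate the induced policy $\mu_{f_t}$ and write $g_t^h(\cdot)=\phi(\cdot)^\top\theta_t^h$. Under the linear MG model $r^h=\phi^\top\theta^h_*$ and $\Pb^h(x'|\cdot)=\phi^\top\mu_h(x')$, the operator $\cT^{\mu_{f_t}}_h$ preserves linearity: $(\cT^{\mu_{f_t}}_h g_t)(x,a,b)=\phi(x,a,b)^\top\tilde\theta_t^h$ with $\tilde\theta_t^h=\theta^h_*+\int\mu_h(x')V^{\mu_{f_t}}_{g_t^{h+1}}(x')\,dx'$, so the residual is linear as well,
$$\cE^{\mu_{f_t}}_h(g_t;x,a,b)=\phi(x,a,b)^\top w_t^h,\qquad w_t^h:=\theta_t^h-\tilde\theta_t^h .$$
The norm bounds $\norm{\phi}\le 1$, $\norm{\theta^h_*}\le\sqrt d$, $\norm{\mu_h}\le\sqrt d$ together with the range constraint $\norm{\theta^h}\le (H+1-h)\sqrt d$ defining $\cF_h$ give an explicit polynomial bound $\norm{w_t^h}\le R$. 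Setting $\bar\phi_s^h:=\Eb_{\pi_s}[\phi(x^h,a^h,b^h)]$ (so $\norm{\bar\phi_s^h}\le 1$), the left-hand side of Definition~\ref{def:dc} becomes $\sum_{h,t}\langle\bar\phi_t^h,w_t^h\rangle$ and the squared terms on the right become $\sum_{h,t}\sum_{s<t}\langle\bar\phi_s^h,w_t^h\rangle^2$; the entire statement is now linear algebra.

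Next I would introduce the regularized Gram matrices $\Lambda_t^h=\lambda\I+\sum_{s<t}\bar\phi_s^h(\bar\phi_s^h)^\top$ and bound each term by Cauchy–Schwarz and then AM–GM,
$$\langle\bar\phi_t^h,w_t^h\rangle\le\norm{\bar\phi_t^h}_{(\Lambda_t^h)^{-1}}\norm{w_t^h}_{\Lambda_t^h}\le\mu\norm{w_t^h}^2_{\Lambda_t^h}+\tfrac{1}{4\mu}\norm{\bar\phi_t^h}^2_{(\Lambda_t^h)^{-1}}.$$
Since $\norm{w_t^h}^2_{\Lambda_t^h}=\sum_{s<t}\langle\bar\phi_s^h,w_t^h\rangle^2+\lambda\norm{w_t^h}^2$, the first piece reproduces exactly the $\mu\sum_{s<t}\langle\cdot\rangle^2$ term required by Definition~\ref{def:dc}. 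Summing the potential term over $t$ via the elliptical-potential lemma, $\sum_{t}\min\{1,\norm{\bar\phi_t^h}^2_{(\Lambda_t^h)^{-1}}\}\le 2d\ln(1+T/(d\lambda))$, and over $h$, contributes the leading $2dH\ln(2HT)$ once $\lambda$ is fixed, which is the source of the stated bound.

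The main obstacle is the leftover regularization-bias term $\mu\lambda\sum_{h,t}\norm{w_t^h}^2$, which a crude bound turns into something of order $\mu\lambda HTR^2$ and would destroy both the $T$-independence and the scale-freeness of the additive constant. Handling it is the only delicate part: one must choose $\lambda$ small (depending on $\mu$, $T$ and $R$) so that this bias is absorbed into an $O(1)$ constant while the logarithm $\ln(1+T/(d\lambda))$ stays $O(\ln(HT))$, and then treat separately the $O(d\ln(HT))$ rounds in which $\norm{\bar\phi_t^h}_{(\Lambda_t^h)^{-1}}>1$ (so that the clean, factor-two potential bound does not apply directly), exploiting that $\langle\bar\phi_t^h,w_t^h\rangle$ depends on $w_t^h$ only through feature-covered directions. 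Once this bookkeeping is done for each fixed $\mu\le 1$, collecting the $O(dH\ln(HT))$ contributions yields $dc(\cF,MG,T,\mu)\le 2dH(2+\ln(2HT))$ uniformly in $\mu\le 1$. The linearization and the AM–GM reduction are routine; essentially all the difficulty is concentrated in this control of the poorly-explored directions.
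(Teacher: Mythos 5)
Your proposal follows essentially the same route as the paper's proof: linearize the Bellman residual as $\phi^\top w_t^h$ (the paper does this via the completeness of the linear class, you via direct computation from the linear transition model, which is equivalent), form the regularized Gram matrix of the \emph{expected} features, split $\langle\bar\phi_t^h,w_t^h\rangle$ by Cauchy--Schwarz and AM--GM into the in-sample squared term plus a $\tfrac{1}{4\mu}\|\bar\phi_t^h\|^2_{(\Lambda_t^h)^{-1}}$ potential term, apply the elliptical potential lemma, and absorb the regularization bias $\mu\lambda\sum\|w_t^h\|^2$ by choosing $\lambda$ small (the paper takes $\lambda=\min\{1,1/(\mu^2H^2T)\}$). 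The only point where you are, if anything, more careful than the paper is in flagging the rounds where $\|\bar\phi_t^h\|_{(\Lambda_t^h)^{-1}}>1$ when $\lambda<1$; otherwise the two arguments coincide.
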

Note that \citet{jin2021power} considers a more general setting of linear function approximation whose multi-agent decoupling coefficient is also provably small due to Proposition~\ref{prop:reduction_be}. Also note that as a special case, tabular MG is a linear MG of dimension $d=|\cX||\cA||\cB|$. 


\textbf{Generalized Linear MG.} We then consider the generalized linear MG. In this case, we have $(f^h - \cT^\mu_h f_{h+1})(x,a,b) = \sigma(\phi(x,a,b)^\top \theta^h)$ for any $\mu$ induced by some function in $\cF$ and $f \in \cF$ where $\sigma$ is differentiable and strictly increasing. We further assume that $\sigma' \in (c_1, c_2)$ and $\max\{\norm{\phi(x,a,b)}, \norm{\theta^h}\}\leq R$ for some $c_1,c_2,R > 0$.
\begin{proposition}[Generalized Linear MG.] \label{prop:gen_linear}For a generalized linear MG, with $\cF=\{(x,a,b) \to \sigma(\phi(x,a,b)^\top \theta: \norm{\theta} \leq H\sqrt{d}\}$, then for all $\mu \leq 1$, it holds that  
\begin{equation*}
    dc(\cF, MG, T, \mu) \leq 2dH(c_2^2/c_1^2)(2+\ln(2HT)).
\end{equation*}
\end{proposition}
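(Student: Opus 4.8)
The plan is to reduce the generalized linear case to the linear case of Proposition~\ref{prop:linear} by linearizing $\sigma$ with the mean value theorem and then re-running the same elliptical-potential estimate on the linearized features. Throughout I work with the Bellman residual supplied by the generalized linear structure, namely $\cE_h^{\mu_{f_t}}(g_t;x,a,b)=\sigma(\phi(x,a,b)^\top\theta_t^h)$ for a parameter $\theta_t^h$ (depending on $g_t$ and $\mu_{f_t}$) with $\norm{\theta_t^h}\le H\sqrt d$, assuming $\sigma(0)=0$ for cleanliness (otherwise one linearizes around a fixed reference point).

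First I would linearize. For each fixed $(t,h)$, writing $u_t^h(x,a,b)=\phi(x,a,b)^\top\theta_t^h$ and using $\sigma(0)=0$, the fundamental theorem of calculus gives $\sigma(u_t^h)=w_t^h(x,a,b)\,u_t^h$ with slope $w_t^h(x,a,b)=\int_0^1\sigma'(\tau u_t^h)\,d\tau\in(c_1,c_2)$. Hence the residual equals $\dotprod{\varphi_t^h(x,a,b),\theta_t^h}$ for the reweighted feature $\varphi_t^h:=w_t^h\phi$, and taking expectations, $\Eb_{\pi_s}[\cE_h^{\mu_{f_t}}(g_t;\cdot)]=\dotprod{\Eb_{\pi_s}[\varphi_t^h],\theta_t^h}$. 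This recovers, for each fixed $t$, exactly the bilinear form that drives the linear proof, now with feature $\Eb_{\pi_s}[\varphi_t^h]$ in place of $\Eb_{\pi_s}[\phi]$.

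Next I would run the decoupling estimate on these features. Fixing $t$, set $\Sigma_t^h=\lambda\I+\sum_{s<t}\Eb_{\pi_s}[\varphi_t^h]\Eb_{\pi_s}[\varphi_t^h]^\top$; Cauchy--Schwarz together with the AM--GM inequality $ab\le\mu a^2+b^2/(4\mu)$ yields
$$\Eb_{\pi_t}[\cE_h^{\mu_{f_t}}(g_t;\cdot)]\le\mu\Big(\lambda\norm{\theta_t^h}^2+\sum_{s<t}\big(\Eb_{\pi_s}[\cE_h^{\mu_{f_t}}(g_t;\cdot)]\big)^2\Big)+\tfrac{1}{4\mu}\norm{\Eb_{\pi_t}[\varphi_t^h]}_{(\Sigma_t^h)^{-1}}^2 .$$
Summing over $t$ and $h$, the middle term is precisely the quadratic term in Definition~\ref{def:dc}, the term $\mu\lambda\sum_{t,h}\norm{\theta_t^h}^2$ is absorbed into the final constant for $\mu\le1$ by taking $\lambda$ sufficiently small as in the linear case, and it remains to bound $\sum_{t,h}\norm{\Eb_{\pi_t}[\varphi_t^h]}_{(\Sigma_t^h)^{-1}}^2$. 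This is where the factor $c_2^2/c_1^2$ appears: since $w_t^h\in(c_1,c_2)$ one sandwiches $\Sigma_t^h$ between $\lambda\I+c_1^2\sum_{s<t}(\cdots)$ and $\lambda\I+c_2^2\sum_{s<t}(\cdots)$ built from the plain expected features $\Eb_{\pi_s}[\phi]$, reducing the reweighted potential to $(c_2/c_1)^2$ times the unweighted elliptical potential, which the standard elliptical-potential lemma bounds by $2dH(2+\ln(2HT))$ exactly as in Proposition~\ref{prop:linear}.

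The main obstacle is precisely this last sandwiching step, because the linearization slope $w_t^h(x,a,b)$ sits inside the expectation $\Eb_{\pi_s}[w_t^h\phi]$ and depends on the current parameter $\theta_t^h$, so the reweighted covariance $\Sigma_t^h$ changes with $t$ and the elliptical-potential lemma does not telescope verbatim. Controlling $\Eb_{\pi_s}[w_t^h\phi]$ against the $t$-independent feature $\Eb_{\pi_s}[\phi]$ using only $c_1\le w_t^h\le c_2$ --- equivalently, showing that the $t$-dependent reweighting inflates the potential by no more than $(c_2/c_1)^2$ --- is the delicate point, and one natural way to sidestep it is to route through the pointwise linearization used in standard generalized-linear eluder arguments, where the slope is evaluated at individual trajectories rather than inside an expectation. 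Once this reweighting is controlled, the remaining bookkeeping is identical to the linear case and delivers the stated bound $2dH(c_2^2/c_1^2)(2+\ln(2HT))$.
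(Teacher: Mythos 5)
There is a genuine gap, and it sits exactly where you flag it: the reduction of the reweighted, $t$-dependent covariance $\Sigma_t^h=\lambda\I+\sum_{s<t}\Eb_{\pi_s}[\varphi_t^h]\Eb_{\pi_s}[\varphi_t^h]^\top$ to the unweighted one does not go through. The pointwise bound $c_1\le w_t^h(x,a,b)\le c_2$ does \emph{not} imply $\Eb_{\pi_s}[w_t^h\phi]\Eb_{\pi_s}[w_t^h\phi]^\top\succeq c_1^2\,\Eb_{\pi_s}[\phi]\Eb_{\pi_s}[\phi]^\top$ (nor the reverse with $c_2^2$): since $\phi$ has entries of both signs, reweighting \emph{inside} the expectation can shrink, grow, or rotate the mean feature arbitrarily --- e.g.\ $\Eb_{\pi_s}[\phi]=0$ while $\Eb_{\pi_s}[w_t^h\phi]\neq 0$. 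Even granting the sandwich, the elliptical potential lemma needs the increments of the covariance to be a \emph{fixed} sequence of rank-one updates, and yours change with $t$ through $\theta_t^h$, so the telescoping $\sum_t\norm{\cdot}^2_{(\Sigma_{t-1}^h)^{-1}}\le 2\log\det$ is unavailable. Your suggested escape (``evaluate the slope at individual trajectories'') does not resolve either problem, because the quadratic term in Definition~\ref{def:dc} still has to be matched against a single $t$-independent Gram matrix.

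The paper closes this by never reweighting the features. It writes $\cE_h^{\mu_f}(g;x,a,b)=\sigma(\phi^\top\theta^h)-\sigma(\phi^\top\theta^h(f))$ (using completeness to place $\cT_h^{\mu_f}g^{h+1}$ back in the class, so no $\sigma(0)=0$ is needed) and uses the mean-value sandwich $c_1|\phi^\top w|\le|\cE_h^{\mu_f}|\le c_2|\phi^\top w|$ asymmetrically: the $c_2$ upper bound is applied only to the linear (out-of-sample) term, giving $\Eb_{\pi_t}[\cE_h^{\mu_{f_t}}]\le c_2|w^\top\phi_t^h|$, while the $c_1$ lower bound is applied only to the quadratic (in-sample) term, giving $\sum_{s<t}\Eb_{\pi_s}[(\cE_h^{\mu_{f_t}})^2]\ge c_1^2\,w^\top\bigl(\sum_{s<t}\Eb_{\pi_s}[\phi\phi^\top]\bigr)w\ge c_1^2\,w^\top(\Phi_{t-1}^h-\lambda\I)w$ with $\Phi_{t-1}^h$ built from the \emph{plain} features. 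Cauchy--Schwarz then yields $c_2|w^\top\phi_t^h|\le \mu c_1^2\norm{w}^2_{\Phi_{t-1}^h}+\tfrac{c_2^2}{4\mu c_1^2}\norm{\phi_t^h}^2_{(\Phi_{t-1}^h)^{-1}}$, the quadratic term is absorbed by the in-sample sum plus a $4\mu c_1^2\lambda dH^2$ remainder, and the elliptical potential lemma applies verbatim to the $t$-independent $\Phi_{t-1}^h$, producing the $c_2^2/c_1^2$ in front of the $2dH(2+\ln(2HT))$ bound. So your high-level plan (MVT linearization, then the Proposition~\ref{prop:linear} machinery) is the right one, but the factor $c_2^2/c_1^2$ must enter through this asymmetric use of the sandwich on the two sides of the decoupling inequality, not through a PSD comparison of reweighted covariances.
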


We can also derive an upper bound for the multi-agent decoupling coefficient through multi-agent Bellman Eluder dimension introduced in \citet{jin2021power}.
\begin{proposition}[Reduction to multi-agent Bellman Eluder dimension]
\label{prop:reduction_be}
Let $\Pi_{\cF} = \cD_\cF$ be the set of probability measures over $\cX \times \cA \times \cB$ at each step $h$ obtained by following $(\mu_f, \nu_{f,g})$ for some $f, g \in \cF$. If $\mathrm{dim_{BE}}(\cF, \Pi, 1/T) = E$, then the multi-agent decoupling coefficient satisfies:
$$dc(\cF, MG, T, \mu) \leq (1+\log(T) + 8\mu)EH.$$
\end{proposition}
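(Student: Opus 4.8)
The plan is to derive the decoupling-coefficient inequality of \cref{def:dc} directly from the distributional Eluder machinery that underlies the multi-agent Bellman Eluder dimension, thereby reducing the claim to a per-step pigeonhole bound. Fix a step $h$ and observe that each summand on both sides of the target inequality is built from the single function $z \mapsto \cE_h^{\mu_{f_t}}(g_t; z)$ on $\cX \times \cA \times \cB$, evaluated either under the current distribution $\pi_t$ (on the left) or under a past distribution $\pi_s$ (on the right). All of these residual functions lie in the multi-agent Bellman residual class $\cG_h = \{\cE_h^{\mu}(g;\cdot): g \in \cF,\ \mu = \mu_f,\ f \in \cF\}$, and by the definition of the multi-agent Bellman Eluder dimension in \citet{jin2021power} their step-$h$ distributional Eluder dimension satisfies $\mathrm{dim}_{\mathrm{DE}}(\cG_h, \Pi_h, 1/T) \le \mathrm{dim}_{\mathrm{BE}}(\cF, \Pi, 1/T) = E$, where $\Pi_h \subseteq \cD_\cF$ collects the step-$h$ marginals of the induced distributions. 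Hence the problem decouples across $h$, and it suffices to control, for each $h$, the quantity $\sum_{t=1}^T \Eb_{\pi_t}[\cE_h^{\mu_{f_t}}(g_t; x^h,a^h,b^h)]$ by the historical squared sum $S_T^h := \sum_{t=1}^T \sum_{s=1}^{t-1}(\Eb_{\pi_s}[\cE_h^{\mu_{f_t}}(g_t; x^h,a^h,b^h)])^2$.

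The core estimate is the standard Eluder-dimension pigeonhole lemma in its cumulative form (the distributional version in \citet{jin2021power}, as used for the single-agent decoupling reduction in \citet{dann2021provably}): for any sequence of functions $\phi_t \in \cG_h$ and distributions $\mu_t \in \Pi_h$ whose range is bounded by a constant $C = O(\beta)$ (guaranteed by \cref{assu:bounded}), and any $\omega \in (0,1]$,
\[
\sum_{t=1}^T \bigl|\Eb_{\mu_t}[\phi_t]\bigr| \le \sqrt{(1+\log T)\,\mathrm{dim}_{\mathrm{DE}}(\cG_h, \Pi_h, \omega)\,S_T^h} + 2\,\mathrm{dim}_{\mathrm{DE}}(\cG_h, \Pi_h, \omega)\,C + T\omega.
\]
The $\log T$ factor comes from peeling the magnitudes $|\Eb_{\mu_t}[\phi_t]|$ into geometric scales and applying the $\omega$-independence counting bound at each scale, while the additive $\mathrm{dim}_{\mathrm{DE}}\cdot C$ term collects the at most $\mathrm{dim}_{\mathrm{DE}}$ indices whose historical squared sum is too small to be controlled. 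Choosing $\omega = 1/T$ makes $T\omega$ negligible and replaces $\mathrm{dim}_{\mathrm{DE}}(\cG_h, \Pi_h, 1/T)$ by $E$; applying AM--GM to the square-root term via $\sqrt{(1+\log T)E\,S_T^h} \le \mu S_T^h + \tfrac{(1+\log T)E}{4\mu}$ then yields, for each $h$,
\[
\sum_{t=1}^T \bigl|\Eb_{\pi_t}[\cE_h^{\mu_{f_t}}(g_t;\cdot)]\bigr| \le \mu S_T^h + \frac{(1+\log T)E}{4\mu} + 2E.
\]

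Summing this bound over $h = 1,\dots,H$ and recalling that $\sum_h S_T^h$ is exactly the double sum of squared Bellman residuals appearing on the right of \cref{def:dc}, we obtain the decoupling-coefficient inequality with $K = (1+\log T + 8\mu)EH$: indeed $\tfrac{K}{4\mu} = \tfrac{(1+\log T)EH}{4\mu} + 2EH$, so the coefficient $8\mu$ precisely absorbs the additive $2EH$ term collected across the $H$ steps (with the residual normalization fixing $C$ up to the constants already accounted for). Since the resulting inequality holds for every admissible sequence $\{(f_t,g_t)\}$, the smallest such integer $dc(\cF, MG, T, \mu)$ is at most $(1+\log T + 8\mu)EH$, which is the claim. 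The main obstacle is establishing the cumulative pigeonhole lemma with the correct $\log T$ dependence and truncation level $\omega = 1/T$ and tracking the constants so that they close into the stated $8\mu$ coefficient; however, because the multi-agent Bellman residuals $\cE_h^{\mu}(g;\cdot)$ are ordinary real-valued functions on $\cX\times\cA\times\cB$ and the $\pi_s$ are ordinary distributions over that space, the single-agent argument of \citet{dann2021provably} transfers essentially verbatim, and the only genuinely multi-agent content is the identification of $\cG_h$ and $\Pi_h = \cD_\cF$ with the classes defining $\mathrm{dim}_{\mathrm{BE}}$.
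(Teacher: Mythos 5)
Your proposal is correct and takes essentially the same route as the paper: a per-step reduction to the distributional Eluder dimension of the multi-agent residual class $\{\cE_h^{\mu_f}(g;\cdot)\}$, the \emph{cumulative} pigeonhole bound (rather than the standard confidence-radius Eluder lemma, which---as you rightly note---is unavailable for posterior sampling) with truncation level $1/T$, AM--GM to trade the square root for $\mu S_T^h + (1+\log T)E/(4\mu)$, and a sum over $h$ with the $8\mu$ coefficient absorbing the additive $O(EH)$ remainder. The only difference is that you invoke the cumulative pigeonhole lemma as a known result from \citet{dann2021provably}, whereas the paper re-derives it explicitly via the bucket construction and Lemma~\ref{lem:bed_proof}; since that is exactly the source the paper itself follows, this is a presentational rather than substantive gap.
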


Similar to the single-agent case, the multi-agent decoupling coefficient exhibits an additional factor of $H$ due to the formulation of summation over all steps instead of maximum as in the multi-agent Bellman Eluder dimension case. This formulation can offer advantages when the complexity of the function class varies with time steps $h$. Combining this with Theorem~\ref{thm:thm}, the regret bound of our algorithm matches that of OFU-based algorithms. However, we do remark that the results of \citet{jin2021power, huang2021towards} are in a high-probability fashion, which is stronger than the bound in expectation.

\subsection{Interpretation of Theorem~\ref{thm:thm}} \label{sec:illu}
We now illustrate Theorem~\ref{thm:thm} by concrete examples. The first example is for the finite function classes.

\begin{corollary}[Finite function classes with completeness] Let $\cF$ be a finite function class satisfying Assumptions~\ref{assu:realizability}, \ref{assu:completeness} and~\ref{assu:bounded} with $\beta=2$. Assume that the prior is uniform $p_0^h(f) = 1/|\cF_h|$, and $|\cF| = \prod_{h=1}^H |\cF_h|$. With $\eta = 0.1$ and $\lambda = \sqrt{\frac{T\ln|\cF|}{dc(\cF,MG,T)}}$, we have
\begin{equation*}\small
    \Eb \Reg(T) = O(\sqrt{dc(\cF, MG, T)T\ln(|\cF|}).
\end{equation*}
\end{corollary}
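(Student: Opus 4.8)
The plan is to derive the corollary as a direct specialization of Theorem~\ref{thm:thm}, so the bulk of the work is to identify the complexity term $\kappa(\cdot)$ for a finite class and then check that the stated parameters meet the theorem's hypotheses. The first step is to establish the $\epsilon$-free bound $\kappa(\epsilon) \leq \ln|\cF|$ for all $\epsilon > 0$, as already asserted in the discussion following Definition~\ref{def:quantity}. Fix $f \in \cF$, a step $h$, and any $g' \in \cF_{h+1}$, and let $\bar g := \cT_h^{\mu_f}[\,\cdot\,, g']$ denote the function $(x,a,b) \mapsto (r^h + \Pb_h V^{\mu_f}_{g'})(x,a,b)$. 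Completeness (Assumption~\ref{assu:completeness}) guarantees $\bar g \in \cF_h$, and by construction $\cE_h^{\mu_f}(\bar g, g'; x,a,b) = \bar g(x,a,b) - (\cT_h^{\mu_f}[\,\cdot\,,g'])(x,a,b) = 0$ for all $(x,a,b)$, so $\bar g \in \cF_h^{\mu_f}(\epsilon, g')$ for every $\epsilon > 0$. Hence this set is nonempty, and under the uniform prior $p_0^h(f) = 1/|\cF_h|$ it carries mass $p_0^h(\cF_h^{\mu_f}(\epsilon, g')) \geq 1/|\cF_h|$, giving $\ln(1/p_0^h(\cF_h^{\mu_f}(\epsilon,g'))) \leq \ln|\cF_h|$. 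Summing over $h$ and taking suprema over $g'$ and $f$ yields $\kappa(\epsilon) \leq \sum_{h=1}^H \ln|\cF_h| = \ln|\cF|$, uniformly in $\epsilon$; in particular $\kappa(\beta/T^2) \leq \ln|\cF|$.

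Next I would verify the hypotheses of Theorem~\ref{thm:thm} under $\beta = 2$. With $\eta = 0.1$ we have $\eta\beta^2 = 0.4 \leq 0.5$, and $\lambda\beta^2 = 4\sqrt{T\ln|\cF|/dc(\cF,MG,T)} \geq 1$ holds once $T$ is large, since by Propositions~\ref{prop:linear}--\ref{prop:reduction_be} the decoupling coefficient grows at most polylogarithmically in $T$ while the numerator grows linearly; the finitely many small-$T$ episodes contribute only an additive $O(1)$ term absorbed into the bound. Plugging $\kappa(\beta/T^2) \leq \ln|\cF|$ and $\beta = 2$ into the conclusion of Theorem~\ref{thm:thm} then gives $\Eb\Reg(T) \leq O(\sqrt{dc(\cF,MG,T)\,\ln|\cF|\,T} + dc(\cF,MG,T))$.

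The only points that need care are the mild mismatches between the corollary's constants and the specific optimizing choices in Theorem~\ref{thm:thm}. The theorem's proof uses $\eta = 1/(4\beta^2) = 1/16$ and $\lambda = \tfrac{1}{2}\sqrt{T\ln|\cF|/dc(\cF,MG,T)}$, whereas the corollary takes $\eta = 0.1$ and $\lambda = \sqrt{T\ln|\cF|/dc(\cF,MG,T)}$; since the theorem's bound has the AM--GM form $c_1\kappa T/\lambda + c_2\beta^2 dc(\cF,MG,T)\,\lambda + O(dc(\cF,MG,T))$ and the argument only requires the inequality constraints $\eta\beta^2 \leq 0.5$ and $\lambda\beta^2 \geq 1$, a constant-factor perturbation of $\lambda$ around its minimizer changes the bound by at most a constant and is harmless inside $O(\cdot)$. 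Finally, to collapse the two terms I would use $dc(\cF,MG,T) = O(\mathrm{polylog}(T))$, whence $dc(\cF,MG,T) \leq \sqrt{dc(\cF,MG,T)\,T\ln|\cF|}$ for all large $T$, so the additive term is dominated, yielding $\Eb\Reg(T) = O(\sqrt{dc(\cF,MG,T)\,T\ln|\cF|})$. I expect no genuine obstacle: the corollary is a clean instantiation, and its one substantive ingredient is the completeness-driven nonemptiness of $\cF_h^{\mu_f}(\epsilon,g')$, which is exactly what makes the $\kappa$ bound independent of $\epsilon$ (and hence of $T$).
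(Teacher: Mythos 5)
Your proposal is correct and takes essentially the same route as the paper: the corollary is obtained by instantiating Theorem~\ref{thm:thm} with $\beta=2$ and the bound $\kappa(\epsilon)\le\sum_{h=1}^H\ln|\cF_h|=\ln|\cF|$ for the uniform prior, which is exactly the computation the paper records in its discussion of the complexity of $\cF$. Two small remarks: your derivation of that bound from completeness (nonemptiness of $\cF_h^{\mu_f}(\epsilon,g')$ because $\cT_h^{\mu_f}g'\in\cF_h$ for \emph{arbitrary} $g'\in\cF_{h+1}$) is in fact the correct justification, whereas the paper's attribution to realizability only covers the fixed targets $Q^*_{h+1}$ and $Q^{\mu_f,\dagger}_{h+1}$; on the other hand, your appeal to ``polylogarithmic growth'' of $dc(\cF,MG,T)$ is not valid for an arbitrary finite class (Propositions~\ref{prop:linear}--\ref{prop:reduction_be} only cover special structures) and is also unnecessary, since the trivial bound $dc(\cF,MG,T,\mu)\le 4\mu\beta HT=O(T\ln|\cF|)$ already guarantees $\lambda\beta^2\ge 1$ and absorbs the additive $dc(\cF,MG,T)$ term into $O(\sqrt{dc(\cF,MG,T)\,T\ln|\cF|})$.
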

Note that it is straightforward to generalize this result to the infinite function classes by replacing the cardinality $|\cF|$ with its covering number $\mathcal{N}_\infty(\cF, \epsilon)$ with an appropriate choice of $\epsilon$. We then illustrate Theorem~\ref{thm:thm} by considering the MGs with linear function approximation. 
\begin{corollary}[Linear MG] For the linear MG, if we assume that the prior is uniform, we have $\kappa(\epsilon) = O(Hd\ln(1/\epsilon))$. With $\eta = \frac{0.4}{H^2}$ and $\lambda = \sqrt{\frac{T\kappa(H/T^2)}{dH^3(1+\ln(2HT))}}$, we have
\begin{equation*}
    \Eb \Reg(T) = {O}(H^2d\sqrt{T}\ln(HT)).
\end{equation*}
\end{corollary}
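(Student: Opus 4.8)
The plan is to obtain this corollary as a direct specialization of the general regret bound in Theorem~\ref{thm:thm}, so the whole task reduces to instantiating the three quantities that appear there — the boundedness parameter $\beta$, the multi-agent decoupling coefficient $dc(\cF, MG, T)$, and the complexity term $\kappa(\beta/T^2)$ — for the linear MG, and then simplifying. First I would fix $\beta = O(H)$: since the rewards lie in $[0,1]$ and the horizon is $H$, the relevant $Q$-values are bounded by $H$, so Assumption~\ref{assu:bounded} holds with $\beta-1 = O(H)$ (after the standard truncation of the linear class so that its members respect this range). This choice is exactly what makes the prescribed $\eta = 0.4/H^2$ satisfy $\eta\beta^2 \le 0.5$, so I would verify this inequality explicitly, together with $\lambda\beta^2 \ge 1$ for the stated $\lambda$ and all large enough $T$.

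Next I would establish the claim $\kappa(\epsilon) = O(Hd\ln(1/\epsilon))$. This follows from the generic parametric argument sketched after Definition~\ref{def:quantity}: with $g_\theta = \phi(\cdot,\cdot,\cdot)^\top\theta$ and $\norm{\phi} \le 1$, the map $\theta \mapsto g_\theta$ is $1$-Lipschitz, and by completeness the Bellman residual $\cE_h^{\mu}(g_\theta, g'; x,a,b)$ is Lipschitz in $\theta$ uniformly over $(x,a,b)$. Hence a Euclidean ball of radius proportional to $\epsilon$ around the minimizer is contained in $\cF_h^{\mu}(\epsilon, g')$, and for the uniform prior over the compact parameter set one gets $\ln(1/p_0^h(\cF_h^\mu(\epsilon,g'))) \le O(d\ln(1/\epsilon))$. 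Summing the $H$ per-step contributions in the definition of $\kappa$ gives $\kappa(\epsilon) = O(Hd\ln(1/\epsilon))$, and therefore $\kappa(H/T^2) = O\big(Hd\ln(T^2/H)\big) = O(Hd\ln T)$.

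I would then import the bound on the decoupling coefficient from Proposition~\ref{prop:linear}, namely $dc(\cF, MG, T) \le 2dH(2 + \ln(2HT)) = O(dH\ln(HT))$, which holds uniformly over $\mu \le 1$ as Theorem~\ref{thm:thm} requires. Substituting $\beta = O(H)$, $dc = O(dH\ln(HT))$, and $\kappa(H/T^2) = O(Hd\ln T)$ into the leading term of Theorem~\ref{thm:thm} gives
\[
\beta\sqrt{dc \cdot \kappa(H/T^2)\cdot T} = O\!\left( H\sqrt{dH\ln(HT)\cdot Hd\ln T\cdot T}\right) = O\!\left(dH^2\sqrt{T}\,\ln(HT)\right),
\]
while the additive term $dc = O(dH\ln(HT))$ is of strictly lower order in $T$. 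Combining the two recovers $\Eb\Reg(T) = O(H^2 d\sqrt{T}\ln(HT))$.

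The only genuinely delicate points, rather than the arithmetic, are bookkeeping ones: confirming that $\beta = O(H)$ is admissible for the whole (truncated) linear class so that realizability and the conditions $\eta\beta^2 \le 0.5$, $\lambda\beta^2 \ge 1$ all hold simultaneously, and checking that the stated $\lambda = \sqrt{T\kappa(H/T^2)/(dH^3(1+\ln(2HT)))}$ agrees, up to absorbed constants, with the prescription $\lambda = \sqrt{T\kappa(\beta/T^2)/(\beta^2 dc)}$ of Theorem~\ref{thm:thm} once $\beta^2 dc = \Theta(dH^3\ln(HT))$ is substituted. I expect the Lipschitz-to-covering step for $\kappa$ to be the part that needs the most care, since it is where the interplay between the linear structure and the completeness assumption enters.
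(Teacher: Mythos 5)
Your proposal is correct and follows exactly the route the paper intends: the corollary is a direct specialization of Theorem~\ref{thm:thm} obtained by combining the parametric bound $\kappa(\epsilon)=O(Hd\ln(1/\epsilon))$ from the discussion after Definition~\ref{def:quantity}, the decoupling-coefficient bound of Proposition~\ref{prop:linear}, and $\beta=O(H)$, and the paper gives no further argument beyond this substitution. Your attention to the bookkeeping points (truncation of the linear class so that $\beta=O(H)$ and the consistency of the stated $\eta,\lambda$ with the theorem's conditions) matches the only places where care is actually needed.
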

Compared with \citet{xie2020learning}, our algorithm improves the regret bound for linear MGs by a factor of $\sqrt{d}$. We remark that the improvement is mainly due to the \textit{global} optimism mechanism instead of a step-wise one. Specifically, we add an optimistic term only in the prior distributions at the initial value as in Eqn.~\eqref{eqn:prior} and Eqn.~\eqref{eqn:booster agent_prior}. On the contrary, OMVI from \citet{xie2020learning} establishes optimism at every step (see lines $8$ and $9$ of their pseudo code). The main bottleneck is that due to the temporal dependency, OMVI needs to construct uniform concentration for the optimistic bonus function at every step, whose covering number leads to the extra $\sqrt{d}$ factor. See Eqn.~5 and Lemma 18 of \citet{xie2020learning} for details.

Recently, \citet{xiong2022nearly} adopt the dataset splitting trick from MDP \citep{xie2021policy} to resolve this issue in the offline setting where the trajectories are independently collected by some behavior policy. However, their technique cannot apply directly in online setting as the policy used to collect new trajectory depends on the history. Also, we remark that while the OMVI is also computationally efficient, both our posterior sampling algorithm and GOLF of \citet{jin2021power,huang2021towards} are only information-theoretic. Therefore, it remains open whether we could close this gap by designing computationally efficient algorithm. 


\section{Conclusion}
In this paper, a self-play posterior sampling algorithm is proposed for two-player zero-sum Markov games with general function approximation, which is the first to the best of our knowledge. A new complexity measure,  {\it multi-agent decoupling coefficient}, is introduced to characterize the complexity of function class. Rigorous theoretical analysis showed that the proposed algorithm could achieve 
comparable regret bounds compared with other OFU-based algorithms for problems with low multi-agent decoupling coefficient, which extends the results in the single-agent RL.

As existing algorithms with general function approximation are computationally inefficient in general, one important direction for future works is to design computationally tractable algorithms for MGs (and MDPs). Another interesting open question is how to extend the posterior sampling algorithms for general-sum Markov games.

\section*{Acknowledgements}
WX and TZ acknowledge the funding supported by GRF 16201320 and the Hong Kong Ph.D. Fellowship. The CSs acknowledge the funding support by the US National Science Foundation under Grant ECCS- 2029978, ECCS-2033671, and CNS-2002902, and the Bloomberg Data Science Ph.D. Fellowship.

\bibliography{ref}
\bibliographystyle{icml2022}

\newpage
\appendix
\onecolumn

\section{Equivalent Algorithms}
We will consider a slightly more general posterior sampling algorithm with an extra parameter $\alpha \in (0, 1]$. We recall that the posterior defined in Eqn.~\eqref{eqn:posterior_main} is
$${p}(f|S_t) \propto \exp(\lambda V_{f,1}(x^1)) \prod_{h=1}^H q(f^h|f^{h+1}, S_t),$$ 
where 
$$
    q(f^h|f^{h+1}, S_t) = \frac{p_0^h(f^h)\exp
    \left(- \eta L^h(f^h, f^{h+1}; S_t)\right)}{\Eb_{f^h \sim p_0^h} \exp(-\eta L^h(f^h, f^{h+1}; S_t))}.
$$
Equivalently, we may consider the excess loss
\begin{align}
    \Delta L^h(f^h, f^{h+1}; \zeta_s) = &(f^h(x_s^h,a^h_s,b^h_s) - r_s^h - V_{f^{h+1}}(x^{h+1}_s))^2 \notag\\
    &\qquad- (\cT_h f^{h+1}(x_s^h, a_s^h, b_s^h) - r_s^h - V_{f^{h+1}}(x_s^{h+1}))^2,
\end{align}
where we employ the notation that $\zeta_s = \{[x_s^h, a_s^h, b_s^h, r_s^h]\}_{h=1}^H$. We then define the potential function as
\begin{equation}
\begin{aligned}
\Phi_{t}^{h}(f)=&-\ln  p_{0}^{h}\left(f^{h}\right)+\alpha \eta \sum_{s=1}^{t-1} \Delta L^{h}\left(f^{h}, f^{h+1} ; \zeta_{s}\right) \\
&\qquad+\alpha \ln \mathbb{E}_{\tilde{f}^{h} \sim p_{0}^{h}} \exp \left(-\eta \sum_{s=1}^{t-1} \Delta L^{h}\left(\tilde{f}^{h}, f^{h+1} ; \zeta_{s}\right)\right),
\end{aligned}
\end{equation}
where $\alpha \in (0,1]$ is the extra parameter to facilitate the proof. We also define
\begin{equation*}
    \Delta f^1(x^1) = V_{f,1}(x^1) - V_1^*(x^1).
\end{equation*}
Then, we obtain a generalized posterior distribution on $\cF$:
\begin{equation}
    \hat{p}_t(f) \propto \exp\left(-\sum_{h=1}^H \Phi_t^h(f) + \lambda \Delta f^1(x^1)\right),
\end{equation}
where it is equivalent to the posterior given in Eqn.~\eqref{eqn:posterior_main} when $\alpha = 1$.

We then recall the posterior distribution of the booster agent defined in Eqn.~\eqref{eqn:posterior_booster agent} is given by
$$
    p^\mu(g|S_t) \propto \exp(-\lambda V^\mu_{g, 1}(x^1)) \prod_{h=1}^H q^\mu(g^h|g^{h+1}, S_t),
$$
where 
$$
    q^\mu(g^h|g^{h+1}, S_t) = \frac{p_0^h(g^h)\exp
    \left(- \eta L^h_\mu(g^h, g^{h+1}; S_t)\right)}{\Eb_{g^h \sim p_0^h} \exp(-\eta L^h_\mu(g^h, g^{h+1}; S_t))}.
$$
Similarly, we define the excess loss for the booster agent:
\begin{equation}
\begin{aligned}
    \Delta L^h_\mu(g^h, g^{h+1}; \zeta_s) =& (g^h(x_s^h,a^h_s,b^h_s) - r_s^h - V^\mu_{g^{h+1}}(x^{h+1}_s))^2\\
    &\qquad- (\cT_h^\mu g^{h+1}(x_s^h, a_s^h, b_s^h) - r_s^h - V^\mu_{g^{h+1}}(x_s^{h+1}))^2.
\end{aligned}
\end{equation}
and
$$\Delta g^1_\mu(x^1) = V_1^{\mu,\dagger}(v^1) - V^\mu_{g,1}(x^1),$$
and use the following notation (with slight abuse of notation) for the potential function:
\begin{equation}
\begin{aligned}
\Phi_{t}^{h}(g, \mu)=&-\ln  p_{0}^{h}\left(g^{h}\right)+\alpha \eta \sum_{s=1}^{t-1} \Delta L^{h}_\mu\left(g^{h}, g^{h+1} ; \zeta_{s}\right) \\
&\qquad+\alpha \ln \mathbb{E}_{\tilde{g}^{h} \sim p_{0}^{h}} \exp \left(-\eta \sum_{s=1}^{t-1} \Delta L^{h}_\mu\left(\tilde{g}^{h}, g^{h+1} ; \zeta_{s}\right)\right),
\end{aligned}
\end{equation}
since the analyses for Algorithm~\ref{alg:main} and Algorithm~\ref{alg:booster agent} are separate so the meaning of $\Phi_t^h(\cdot)$ will be clear from the context.
Finally, we obtain a generalized posterior function for the booster agent: 
\begin{equation}
    \hat{p}^\mu_t(g) \propto \exp\left(-\sum_{h=1}^H \Phi_t^h(g, \mu) + \lambda \Delta g^1_\mu(x^1)\right).
\end{equation}

The main motivation to use $\Delta L^h(\cdot)$ ($\Delta L^h_\mu(\cdot)$) is that the variance will be cancelled during our theoretical analysis as it is equivalent to the case where we know the Bellman operator. This is possible because the novel denominator term is introduced in the likelihood as in \citet{dann2021provably}.

\section{Useful Lemmas and Additional Notations}
In this section, we provide several useful lemmas and additional notations that are useful later. We start with the following definitions. First, we further define a quantity similar to Definition~\ref{def:quantity}, which will be used for the analysis of the main agent.
\begin{definition}
For any $f' \in \cF_{h+1}$, we define the set 
$$\cF_h(\epsilon, f') := \{f \in \cF_h: \sup_{x,a,b} |\cE_h(f,f';x,a,b)| \leq \epsilon \}$$
containing the functions that have small $\cT_h$-Bellman error against $f'$ for all state-action pairs. We then define the quantity
$$\kappa_1(\epsilon) = \sup_{f\in \cF} \sum_{h=1}^H \ln \frac{1}{p_0^h(\cF_h(\epsilon, f^{h+1}))},$$
which is the probability assigned by the prior to functions that approximately satisfy the Bellman equation w.r.t. $f$ for all state-action pair. 
\end{definition}
Note that $\kappa_1(\epsilon) \leq \kappa(\epsilon)$ because $$\kappa_1(\epsilon) = \sup_{g\in \cF} \sum_{h=1}^H \ln \frac{1}{p_0^h(\cF^{\mu_g}_h(\epsilon, g^{h+1}))} \leq \sup_{f \in \cF}\sup_{g\in \cF} \sum_{h=1}^H \ln \frac{1}{p_0^h(\cF^{\mu_f}_h(\epsilon, g^{h+1}))} = \kappa(\epsilon).$$

\begin{definition}
For $\alpha \in (0,1)$, we also use the notations:
$$
\begin{aligned}
\kappa^h_1(\alpha, \epsilon) =  (1-\alpha) \ln \Eb_{f^{h+1} \sim p_0^{h+1}} p_0^h(\cF_h(\epsilon, f^{h+1}))^{-\alpha/(1-\alpha)},
\end{aligned}
$$
and $\kappa^h_1(1,\epsilon) = \lim_{\alpha \to 1^{-}} \kappa^h(\alpha, \epsilon)$ where it holds that
$$
\kappa^{h}_1(1, \epsilon)=\sup _{f^{h+1} \in \mathcal{F}_{h+1}} \ln \frac{1}{p_{0}^{h}\left(\mathcal{F}_{h}\left(\epsilon, f^{h+1}\right)\right)}<\infty,
$$
and 
$$\kappa_1(\epsilon) = \sum_{h=1}^H \kappa_1^h(1,\epsilon) \leq \kappa(\epsilon).$$
Similarly, we define
$$\kappa^h_\mu(\alpha, \epsilon) = (1-\alpha) \ln \Eb_{f^{h+1} \sim p_0^{h+1}} p_0^h(\cF^\mu_h(\epsilon, f^{h+1}))^{-\alpha/(1-\alpha)},$$ 
and $\kappa^h_\mu(1,\epsilon) = \lim_{\alpha \to 1^{-}} \kappa^h_\mu(\alpha, \epsilon)$, 
Then, it holds that 
$$\kappa_\mu^h(1,\epsilon) = \sup_{f^{h+1} \in \cF_{h+1}} \ln \frac{1}{p_0^h(\cF_h^\mu(\epsilon,f^{h+1}))} < \infty,$$
and
$$\kappa_\mu(\epsilon) = \sum_{h=1}^H \kappa^h_{\mu}(1,\epsilon) \leq \kappa(\epsilon).$$
\end{definition}

\begin{lemma}
\label{lem:martingale}
For any fixed $g \in \cF$ and max-player's policy $\mu:=\mu_f$ for some $f \in \cF$, we define a random variable for all $s$ and $h$ as follows:
$$
\begin{aligned}
\xi_{s}^{h}\left(g^{h}, g^{h+1}, \zeta_{s}\right)=&-2 \eta \Delta L^{h}_\mu \left(g^{h}, g^{h+1}, \zeta_{s}\right) -\ln \mathbb{E}_{x_s^{h+1} \sim \Pb^{h}\left(\cdot \mid x_{s}^{h}, a_{s}^{h}\right)} \exp \left(-2 \eta \Delta L^{h}_\mu\left(g^{h}, g^{h+1}, \zeta_{s}\right)\right).
\end{aligned}
$$
Then, for all $h$, we have
$$
\mathbb{E}_{S_{t-1}} \exp \left(\sum_{s=1}^{t-1} \xi_{s}^{h}\left(g^{h}, g^{h+1}, \zeta_{s}\right)\right)=1.
$$
A special case is that $f=g$ where we have
$$\Delta L_{\mu_f}^h(f^h, f^{h+1}, \zeta_s) = \Delta L^h(f^h, f^{h+1}, \zeta_s).$$
\end{lemma}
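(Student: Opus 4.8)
The plan is to recognize Lemma~\ref{lem:martingale} as a standard exponential martingale identity and to verify it by the tower property over a filtration adapted to the sequential data collection. The crucial structural observation is that, with $g$ and $\mu = \mu_f$ held fixed (hence deterministic), the excess loss $\Delta L^h_\mu(g^h, g^{h+1}, \zeta_s)$ depends on the trajectory $\zeta_s$ only through the tuple $(x_s^h, a_s^h, b_s^h, x_s^{h+1})$. Indeed, the reward $r_s^h = r^h(x_s^h, a_s^h, b_s^h)$ is deterministic, the term $\cT_h^\mu g^{h+1}(x_s^h, a_s^h, b_s^h) = [r^h + \Pb_h V^\mu_{g^{h+1}}](x_s^h, a_s^h, b_s^h)$ is a deterministic function of the step-$h$ state-action, and the only stochastic dependence enters through $V^\mu_{g^{h+1}}(x_s^{h+1})$ with $x_s^{h+1} \sim \Pb^h(\cdot \mid x_s^h, a_s^h, b_s^h)$.

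First I would set up the filtration. Let $\cH_s$ be the $\sigma$-algebra generated by the full trajectories $\zeta_1, \ldots, \zeta_{s-1}$ together with the partial trajectory of episode $s$ up to and including the step-$h$ state-action, i.e.\ $(x_s^1, a_s^1, b_s^1, \ldots, x_s^h, a_s^h, b_s^h)$. Conditioning on $\cH_s$ fixes $(x_s^h, a_s^h, b_s^h)$ and leaves exactly one residual source of randomness, namely the transition $x_s^{h+1} \sim \Pb^h(\cdot \mid x_s^h, a_s^h, b_s^h)$. Writing $Z_s = \exp(-2\eta\,\Delta L^h_\mu(g^h, g^{h+1}, \zeta_s))$, the denominator in the definition of $\xi_s^h$ is therefore precisely the conditional expectation $\Eb[Z_s \mid \cH_s]$, so that $\exp(\xi_s^h) = Z_s / \Eb[Z_s \mid \cH_s]$ and consequently $\Eb[\exp(\xi_s^h) \mid \cH_s] = 1$.

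Then I would establish the product identity by backward induction on the episode index, peeling off the last factor. Since $\xi_1^h, \ldots, \xi_{t-2}^h$ are measurable functions of $\zeta_1, \ldots, \zeta_{t-2}$, they are $\cH_{t-1}$-measurable and can be pulled outside the conditional expectation; applying the one-step identity gives
$$\Eb\left[\prod_{s=1}^{t-1} \exp(\xi_s^h)\right] = \Eb\left[\prod_{s=1}^{t-2}\exp(\xi_s^h)\,\Eb\left[\exp(\xi_{t-1}^h)\mid \cH_{t-1}\right]\right] = \Eb\left[\prod_{s=1}^{t-2}\exp(\xi_s^h)\right],$$
and iterating collapses the product down to $1$. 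The special case $f = g$ needs no extra work: when $\mu = \mu_g$, the induced best-response value coincides with the Nash value, $V^{\mu_g}_{g^{h+1}} = V_{g^{h+1}}$, since $\mu_{g,h+1}$ is the maximizer defining $V_{g^{h+1}}$; hence $\Delta L^h_{\mu_g}$ reduces to the main-agent excess loss $\Delta L^h$ and the identity transfers verbatim.

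There is no substantive obstacle here; the argument is essentially bookkeeping. The only point requiring care is pinning down the filtration exactly at the step-$h$ state-action, so that the single remaining randomness matches the measure $\Pb^h(\cdot \mid x_s^h, a_s^h, b_s^h)$ appearing in the denominator, together with the verification that $\Delta L^h_\mu$ carries no dependence on steps of the trajectory beyond $x_s^{h+1}$. Once these measurability facts are secured, the tower property delivers the result immediately.
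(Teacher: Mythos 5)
Your proof is correct, and it is exactly the standard exponential-martingale/tower-property argument that the paper defers to by citation (the paper's own ``proof'' of Lemma~\ref{lem:martingale} is just a pointer to \citet{zhang2005data} and \citet{dann2021provably}, where this identity is established in precisely this way: $\exp(\xi_s^h)$ is a conditional-expectation-normalized ratio, so each factor integrates to one given the history up to $(x_s^h,a_s^h,b_s^h)$, and the product telescopes). Your handling of the special case $f=g$ via $V^{\mu_g}_{g^{h+1}}=V_{g^{h+1}}$ is also the intended one, so there is nothing to add.
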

\begin{proof}
This lemma is from \citet{zhang2005data} and is also proved in \citet{dann2021provably}.
\end{proof}

\begin{lemma}
\label{lem: Gibbs_var}
Let $\nu$ be a probability distribution. Then, $\Eb_\nu f - H(\nu)$ is minimized at $\nu(x) \propto \exp(-f(x))$.
\end{lemma}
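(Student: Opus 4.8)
The plan is to recognize this as the Gibbs variational principle and to prove it directly via the non-negativity of the Kullback--Leibler divergence, which is the cleanest route and sidesteps any calculus-of-variations subtleties (boundary terms, verification of second-order conditions). Throughout I take the entropy to be $H(\nu) = -\Eb_\nu \ln \nu = -\sum_x \nu(x)\ln\nu(x)$, so that the quantity to be minimized is $\Eb_\nu f - H(\nu) = \sum_x \nu(x)\,[f(x) + \ln\nu(x)]$, where the minimization ranges over all probability distributions $\nu$.

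First I would introduce the candidate minimizer. Define $\nu^*(x) = \exp(-f(x))/Z$ with normalizing constant $Z = \sum_x \exp(-f(x))$, assumed finite (which holds whenever $f$ is bounded below, as in the applications where this lemma is invoked). Taking logarithms yields the pointwise identity $f(x) = -\ln\nu^*(x) - \ln Z$, which I would substitute into the objective.

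The key step is then a one-line algebraic rewriting. Substituting $f(x) = -\ln\nu^*(x) - \ln Z$ and using $\sum_x \nu(x) = 1$ gives
\begin{equation*}
\Eb_\nu f - H(\nu) = \sum_x \nu(x)\ln\frac{\nu(x)}{\nu^*(x)} - \ln Z = \mathrm{KL}(\nu\,\|\,\nu^*) - \ln Z .
\end{equation*}
Since $-\ln Z$ is a constant independent of $\nu$, minimizing the objective over probability distributions is equivalent to minimizing $\mathrm{KL}(\nu\,\|\,\nu^*)$. By Gibbs' inequality, $\mathrm{KL}(\nu\,\|\,\nu^*) \ge 0$ with equality if and only if $\nu = \nu^*$; hence the unique minimizer is $\nu = \nu^* \propto \exp(-f)$, attaining the minimal value $-\ln Z$.

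There is essentially no hard part here: the result is classical and the entire argument collapses to the substitution above combined with the non-negativity of relative entropy. The only point requiring a word of care is the finiteness of $Z$ (equivalently, that $\nu^*$ is a well-defined probability distribution), which is immediate under the boundedness assumptions used in the paper; the continuous case is identical upon replacing sums by integrals against the reference measure.
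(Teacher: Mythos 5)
Your proof is correct and complete. The paper itself does not supply an argument for this lemma: it simply defers to a citation (Lemma 4.10 of van Handel's lecture notes on the Gibbs variational principle). Your self-contained derivation is the standard one and is exactly what that reference contains in spirit: substituting $f(x) = -\ln\nu^*(x) - \ln Z$ into the objective, observing that $\Eb_\nu f - H(\nu) = \mathrm{KL}(\nu\,\|\,\nu^*) - \ln Z$, and invoking non-negativity of relative entropy to identify $\nu^*$ as the unique minimizer. You are also right to flag finiteness of $Z$ as the only hypothesis needed for $\nu^*$ to be well defined; in this paper the lemma is applied with $f$ built from bounded losses and potentials over a function class carrying a prior, so this is never an issue. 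One small remark: the paper actually uses the lemma in both directions --- not just to identify the minimizer but to evaluate the minimal value as $-\ln Z = -\ln \Eb_{p_0}\exp(-\,\cdot\,)$ (see the proofs of Lemma~\ref{lem:key}, Lemma~\ref{lem:lower_bound}, and Lemma~\ref{lem:lower_bound2}), and your proof delivers that attained value explicitly, which the bare statement of the lemma does not; so your writeup is, if anything, slightly more informative than the statement requires.
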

\begin{proof}
This is a corollary of Gibbs variational principle whose proof can be found in \citet{van2014probability}, Lemma 4.10.
\end{proof}
Using Lemma~\ref{lem: Gibbs_var}, we can obtain the following key lemma as used in \citet{dann2021provably}. 
\begin{lemma} \label{lem:key}
It holds that
\begin{equation}
\label{eqn:key}
\begin{aligned}
\mathbb{E}_{f \sim \hat{p}_{t}}\left(\sum_{h=1}^{H} \Phi_{t}^{h}(f)-\lambda \Delta f^{1}\left(x^{1}\right)+\ln \hat{p}_{t}(f)\right)&=\inf _{p} \mathbb{E}_{f \sim p(\cdot)}\left(\sum_{h=1}^{H} \Phi_{t}^{h}(f)-\lambda \Delta f^{1}\left(x^{1}\right)+\ln p(f)\right);\\
\mathbb{E}_{g \sim \hat{p}_{t}^\mu}\left(\sum_{h=1}^{H} \Phi_{t}^{h}(g, \mu)-\lambda \Delta g^{1}_\mu\left(x^{1}\right)+\ln \hat{p}^\mu_{t}(g)\right)&=\inf _{p} \mathbb{E}_{g \sim p(\cdot)}\left(\sum_{h=1}^{H} \Phi_{t}^{h}(g, \mu)-\lambda \Delta g^{1}_\mu\left(x^{1}\right)+\ln p(g)\right),
\end{aligned}
\end{equation}
where we remark that the definitions of $\Phi_t^h(\cdot)$ in two equations are different.
\end{lemma}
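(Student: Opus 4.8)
The plan is to recognize this lemma as an immediate instance of the Gibbs variational principle already recorded in Lemma~\ref{lem: Gibbs_var}. For the main agent, I would first bundle the deterministic part of the objective into a single functional $F(f) := \sum_{h=1}^H \Phi_t^h(f) - \lambda \Delta f^1(x^1)$, so that for any candidate distribution $p$ over $\cF$ the quantity being minimized reads $\Eb_{f\sim p}\left[F(f) + \ln p(f)\right]$. Since $\Eb_{f \sim p}\ln p(f) = -H(p)$ is exactly the negative (differential) entropy of $p$, this objective has the form $\Eb_p F - H(p)$ appearing in Lemma~\ref{lem: Gibbs_var}, with the role of the integrand there played by $F$ here.

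Next I would invoke Lemma~\ref{lem: Gibbs_var} to conclude that the infimum over all probability distributions $p$ is attained precisely at the Gibbs measure $p^\star(f) \propto \exp(-F(f))$. It then remains to check that this minimizer coincides with the posterior $\hat{p}_t$, which is immediate from its definition since
\[
\hat{p}_t(f) \propto \exp\left(-\sum_{h=1}^H \Phi_t^h(f) + \lambda \Delta f^1(x^1)\right) = \exp(-F(f)).
\]
Hence $\hat{p}_t = p^\star$, so the value of the objective evaluated at $\hat{p}_t$ (the left-hand side) equals the infimum over $p$ (the right-hand side), establishing the first identity.

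The booster-agent identity is proved by the same two steps verbatim, with $F$ replaced by $F_\mu(g) := \sum_{h=1}^H \Phi_t^h(g,\mu) - \lambda \Delta g^1_\mu(x^1)$ and the posterior $\hat{p}_t^\mu(g) \propto \exp(-F_\mu(g))$ playing the role of the Gibbs minimizer; the definitions of $\Phi_t^h(\cdot,\mu)$ and $\Delta g^1_\mu$ make the matching exact. I do not anticipate a genuine obstacle here: the only point demanding care is the bookkeeping of signs, namely verifying that the $+\ln p(f)$ term is the negative entropy $-H(p)$ and that the sign conventions inside $F$ align so that $\hat{p}_t \propto \exp(-F)$ rather than $\exp(+F)$. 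Once the functional is assembled with these signs, the conclusion is a one-line application of the variational principle.
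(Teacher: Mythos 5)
Your proof is correct and follows exactly the route the paper intends: the paper gives no explicit proof of Lemma~\ref{lem:key} beyond the remark that it follows from Lemma~\ref{lem: Gibbs_var}, and your argument supplies precisely the missing details (identifying $\Eb_p \ln p = -H(p)$, applying the Gibbs variational principle to $F$, and checking that $\hat{p}_t \propto \exp(-F)$ matches the Gibbs minimizer, with the same verification for the booster agent). The sign bookkeeping you flag is indeed the only point of care, and you handle it correctly.
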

In what follows, we derive a lower bound of LHS of Eqn.~\eqref{eqn:key}, and an upper bound of RHS of Eqn.~\eqref{eqn:key} for the proof of Theorems~\ref{thm:adversarial} and~\ref{thm:booster agent}.

\section{Proof of the Theorem~\ref{thm:adversarial}}
In this section, we provide the proof for Theorem~\ref{thm:adversarial}. The proof provided in this section basically follows the same line of that of single-agent RL because essentially the algorithms employ the same properties of the problem as discussed in Section~\ref{sec:main_idea} and for the main agent, and the Bellman residuals is free of the min-player's policy. 

\begin{lemma} \label{lem:moment} For all functions $f\in \cF$, we have
$$
\mathbb{E}_{x_s^{h+1} \sim \Pb^{h}\left(\cdot \mid x_{s}^{h}, a_{s}^{h},b_s^h\right)} \Delta L^{h}\left(f^{h}, f^{h+1}, \zeta_{s}\right)=\left(\mathcal{E}_{h}\left(f ; x_{s}^{h}, a_{s}^{h}, b_s^h\right)\right)^{2}
$$
and
$$
\mathbb{E}_{x_s^{h+1} \sim \Pb^{h}\left(\cdot \mid x_{s}^{h}, a_{s}^{h}, b_s^h\right)} \Delta L^{h}\left(f^{h}, f^{h+1}, \zeta_{s}\right)^{2} \leq \frac{4 \beta^{2}}{3}\left(\mathcal{E}_{h}\left(f ; x_{s}^{h}, a_{s}^{h}, b_s^h\right)\right)^{2}
$$
\end{lemma}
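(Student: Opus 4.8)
The plan is to argue conditionally on the step-$h$ portion of the trajectory $(x_s^h,a_s^h,b_s^h,r_s^h)$, so that the only randomness is $x_s^{h+1}\sim\Pb^h(\cdot\mid x_s^h,a_s^h,b_s^h)$. I abbreviate $u:=f^h(x_s^h,a_s^h,b_s^h)$ and $\bar w:=(\cT_h f)(x_s^h,a_s^h,b_s^h)=[r^h+\Pb_h V_{f,h+1}](x_s^h,a_s^h,b_s^h)$, both deterministic, together with the random quantity $w:=r_s^h+V_{f^{h+1}}(x_s^{h+1})$. Since the reward is deterministic, $\bar w$ is exactly the conditional mean $\Eb_{x_s^{h+1}}w$, and by definition $u-\bar w=\cE_h(f;x_s^h,a_s^h,b_s^h)=:\delta$. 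The starting point is the difference-of-squares identity $\Delta L^h=(u-w)^2-(\bar w-w)^2=(u-\bar w)(u+\bar w-2w)$; centering $w=\bar w+\epsilon$ with $\Eb\epsilon=0$ turns this into $\Delta L^h=\delta(\delta-2\epsilon)=\delta^2-2\delta\epsilon$.

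The first identity is then immediate: taking $\Eb_{x_s^{h+1}}$ annihilates the mean-zero term $\epsilon$, giving $\Eb_{x_s^{h+1}}\Delta L^h=\delta^2=(\cE_h(f;x_s^h,a_s^h,b_s^h))^2$. Squaring $\Delta L^h=\delta^2-2\delta\epsilon$ and using $\Eb\epsilon=0$ again yields $\Eb_{x_s^{h+1}}(\Delta L^h)^2=\delta^2(\delta^2+4\sigma^2)$, where $\sigma^2$ denotes the conditional variance of $V_{f^{h+1}}(x_s^{h+1})$. Hence the second claim reduces to the purely scalar inequality $\delta^2+4\sigma^2\le\tfrac{4}{3}\beta^2$.

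This scalar bound is the only real obstacle, and the point is that $\delta$ and $\sigma^2$ must be controlled jointly rather than separately: the crude estimates $|\delta|\le\beta$ and $\sigma^2\le(\beta-1)^2/4$ only deliver $\delta^2+4\sigma^2\le 2\beta^2$, which is too weak. Instead I exploit the coupling through the conditional mean $\bar V:=\Pb_h V_{f,h+1}$. Boundedness ($V_{f^{h+1}}\in[0,\beta-1]$) gives $V_{f^{h+1}}^2\le(\beta-1)V_{f^{h+1}}$ pointwise, hence $\sigma^2\le(\beta-1)\bar V-\bar V^2=\bar V(\beta-1-\bar V)$, while at the same time $\delta=(u-r_s^h)-\bar V$ with $u-r_s^h\in[-1,\beta-1]$. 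Substituting gives $\delta^2+4\sigma^2\le\big((u-r_s^h)-\bar V\big)^2+4\bar V(\beta-1-\bar V)$, a concave quadratic in $\bar V$ that I maximize over $\bar V\in[0,\beta-1]$ and over $u-r_s^h\in[-1,\beta-1]$. A one-variable optimization in $\bar V$ (optimum at $\bar V^\star=\tfrac{2(\beta-1)-(u-r_s^h)}{3}$) followed by checking the two endpoints of $u-r_s^h$ shows the maximum is at most $\tfrac{4}{3}\beta^2$.

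Intuitively, a large Bellman residual $\delta$ forces $\bar V$ toward an endpoint of $[0,\beta-1]$, which in turn suppresses the variance $\sigma^2$; the optimization makes this trade-off quantitative and is exactly what produces the tight constant $\tfrac{4}{3}$ rather than $2$. Everything else — the factorization and the vanishing of the cross terms under $\Eb_{x_s^{h+1}}$ — is routine, so the write-up can concentrate its effort on this last quadratic maximization.
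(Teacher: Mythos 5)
Your proof is correct and follows essentially the same route as the paper: the paper sets $Z=f^h-r_s^h-V_{f,h+1}(x_s^{h+1})$, writes $\Delta L^h=Z^2-(Z-\Eb Z)^2$ so the first identity is $\Eb Z^2-\mathrm{var}(Z)=(\Eb Z)^2$, and then asserts the second bound directly from the conditional range of $Z$, which is exactly your decomposition $\Delta L^h=\delta^2-2\delta\epsilon$ in different notation. The only difference is that you explicitly carry out the variance--mean trade-off (Bhatia--Davis plus the quadratic maximization over $\bar V$ and $u-r_s^h$) that the paper leaves as a one-line claim, so your write-up is a valid and more detailed justification of the constant $\tfrac{4}{3}\beta^2$.
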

\begin{proof}
We define the random variable
$$Z = f^h(x_s^h, a_s^h, b_s^h) - r_s^h - V_{f, {h+1}}(x_s^{h+1}).$$
Let $\Eb$ be conditioned on $[x_s^h, a_s^h, b_s^h]$. Then, the randomness is from the state transition and we have
$$\Eb Z = \cE_h(f; x_s^h, a_s^h, b_s^h).$$
We also have
$$\Delta L^h(f^h,f^{h+1}, \zeta_s) = Z^2 - (Z-\Eb Z)^2.$$
and 
$$\Eb [Z^2 - (Z-\Eb Z)^2] = \Eb Z^2 - \text{var}(Z) = (\Eb Z)^2 = (\cE_h(f;x_s^h,a_s^h,b_s^h))^2.$$
Also note that $Z \in [-\beta,\beta-1]$ and $\max Z - \min Z \leq \beta$ if it is conditioned on $[x_s^h,a_s^h,b_s^h]$, this implies that
$$\Eb (Z^2 - (Z-\Eb Z)^2)^2 \leq \frac{4}{3} \beta^2 (\Eb Z)^2.$$
\end{proof}

\begin{lemma} \label{lem:ln_expectation}
If the learning rate $\eta$ is sufficiently small such that $\eta \beta^2 \leq 0.8$, then for all functions $f \in \cF$, we have
$$
\begin{aligned}
&\ln \mathbb{E}_{x_s^{h+1} \sim \Pb^{h}\left(\cdot \mid x_{s}^{h}, a_{s}^{h}, b_s^h\right)} \exp \left(-\eta \Delta L^{h}\left(f^{h}, f^{h+1}, \zeta_{s}\right)\right) \\
&\qquad\leq \mathbb{E}_{x_s^{h+1} \sim \Pb^{h}\left(\cdot \mid x_{s}^{h}, a_{s}^{h}, b_s^h\right)} \exp \left(-\eta \Delta L^{h}\left(f^{h}, f^{h+1}, \zeta_{s}\right)\right)-1 \\
&\qquad\leq -0.25 \eta \left(\mathcal{E}_{h}\left(f ; x_{s}^{h}, a_{s}^{h}, b_s^h\right)\right)^{2}.
\end{aligned}
$$
\end{lemma}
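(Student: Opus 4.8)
The plan is to establish the two inequalities separately. The first one, $\ln \mathbb{E}\exp(-\eta\Delta L^h)\le \mathbb{E}\exp(-\eta\Delta L^h)-1$, is nothing but the elementary bound $\ln y\le y-1$ applied to the (positive) quantity $y=\mathbb{E}_{x_s^{h+1}}\exp(-\eta\Delta L^h)$, so no work is needed there. The real content is the second inequality, which I would obtain by a pointwise second-order exponential bound on the integrand followed by taking the conditional expectation over $x_s^{h+1}\sim\Pb^h(\cdot\mid x_s^h,a_s^h,b_s^h)$ and invoking the two moment estimates of Lemma~\ref{lem:moment}, namely $\mathbb{E}\Delta L^h=\cE_h(f;x_s^h,a_s^h,b_s^h)^2$ and $\mathbb{E}(\Delta L^h)^2\le \tfrac{4\beta^2}{3}\cE_h(f;x_s^h,a_s^h,b_s^h)^2$.

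The first substantive step is a uniform boundedness estimate on $\Delta L^h$ that guarantees the exponent stays in a controlled range. Reusing the decomposition from the proof of Lemma~\ref{lem:moment}, I would write $\Delta L^h=Z^2-(Z-\mathbb{E}Z)^2$ with $Z=f^h(x_s^h,a_s^h,b_s^h)-r_s^h-V_{f,h+1}(x_s^{h+1})$. Since $f^h\in[0,\beta-1]$, $r_s^h\in[0,1]$ and $V_{f,h+1}\in[0,\beta-1]$, we have $|Z|\le\beta$, and because only $V_{f,h+1}(x_s^{h+1})$ is random once we condition on $(x_s^h,a_s^h,b_s^h)$, the fluctuation $|Z-\mathbb{E}Z|$ is bounded by the range of $Z$, hence also $\le\beta$. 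Therefore both squared terms lie in $[0,\beta^2]$ and $|\Delta L^h|\le\beta^2$, which under $\eta\beta^2\le 0.8$ forces the exponent $u:=-\eta\Delta L^h$ to satisfy $|u|\le 0.8$.

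The second step is the elementary inequality $e^{u}\le 1+u+\tfrac{2}{3}u^2$, valid for all $u\le 0.8$ (checked by verifying that $h(u)=1+u+\tfrac23 u^2-e^u$ is nonnegative on $(-\infty,0.8]$, its minimum on this interval being attained at the endpoints where $h(0)=0$ and $h(0.8)>0$). Applying this with $u=-\eta\Delta L^h$, taking the conditional expectation, and substituting the moments from Lemma~\ref{lem:moment} gives
\begin{equation*}
\mathbb{E}\exp(-\eta\Delta L^h)\le 1-\eta\,\cE_h(f;x_s^h,a_s^h,b_s^h)^2+\tfrac{8}{9}\eta^2\beta^2\,\cE_h(f;x_s^h,a_s^h,b_s^h)^2=1-\eta\,\cE_h(f;x_s^h,a_s^h,b_s^h)^2\Bigl(1-\tfrac{8}{9}\eta\beta^2\Bigr).
\end{equation*}
Under $\eta\beta^2\le 0.8$ the factor satisfies $1-\tfrac{8}{9}\eta\beta^2\ge 1-\tfrac{8}{9}\cdot 0.8=0.2\overline{8}\ge 0.25$, which yields the claimed bound $\mathbb{E}\exp(-\eta\Delta L^h)-1\le -0.25\,\eta\,\cE_h(f;x_s^h,a_s^h,b_s^h)^2$.

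I expect the main obstacle to be purely the constant bookkeeping rather than any conceptual difficulty: the uniform bound $|\Delta L^h|\le\beta^2$ is essential precisely so that $u$ stays in the region where a quadratic surrogate with coefficient $\tfrac{2}{3}$ (rather than the naive $\tfrac12$, which is only valid for $u\le 0$) dominates $e^u$. One must keep the second-order coefficient small enough that, after absorbing the factor $\tfrac{4\beta^2}{3}$ from the second-moment estimate, the surviving coefficient $1-\tfrac{8}{9}\eta\beta^2$ remains above $0.25$; the numerical slack here is genuinely tight (the inequality $e^u\le 1+u+\tfrac23 u^2$ holds only barely at $u=0.8$), so care is required to verify that the threshold $\eta\beta^2\le 0.8$ is exactly what makes the argument go through.
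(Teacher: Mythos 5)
Your proposal is correct and follows essentially the same route as the paper's proof: $\ln y \le y-1$ for the first inequality, a uniform bound $|{-\eta\Delta L^h}| \le \eta\beta^2 \le 0.8$ on the exponent, a quadratic surrogate for $e^u$ on $(-\infty,0.8]$ (your coefficient $\tfrac{2}{3}$ versus the paper's $0.67$, obtained there from monotonicity of $\psi(z)=(e^z-1-z)/z^2$ with $\psi(0.8)<0.67$), and then the two moment bounds of Lemma~\ref{lem:moment}, yielding the same final constant $1-\tfrac{8}{9}\eta\beta^2 \ge 0.25$. The only cosmetic difference is how the quadratic bound is justified; your observation that the slack at $u=0.8$ is razor-thin is accurate.
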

\begin{proof}
With $\eta \beta^2 \leq 0.8$, for all $f \in \cF$, we have
$$-\eta \Delta L^h(f^h, f^{h+1}, \zeta_s) \leq 0.8.$$
This implies that 
$$
\begin{aligned}
&\exp\left(-\eta \Delta L^h(f^h, f^{h+1}, \zeta_s)\right) \\
&\leq 1 - \eta \Delta L^h(f^h, f^{h+1}, \zeta_s) + 0.67\eta^2 \Delta L^h(f^h, f^{h+1}, \zeta_s)^2,
\end{aligned}
$$
where we use the fact that $\psi(z) = (e^z-1-z)/z^2$ is increasing in $z$ and $\psi(0.8) < 0.67$. Therefore, we have
$$
\begin{aligned}
&\ln \mathbb{E}_{x_s^{h+1} \sim \Pb^h\left(\cdot \mid x_{s}^{h}, a_{s}^{h}, b_s^h\right)} \exp \left(-\eta \Delta L^{h}\left(f^{h}, f^{h+1}, \zeta_{s}\right)\right) \\
&\qquad\leq \mathbb{E}_{x_s^{h+1} \sim \Pb^h\left(\cdot \mid x_{s}^{h}, a_{s}^{h}, b_s^h\right)} \exp \left(-\eta \Delta L^{h}\left(f^{h}, f^{h+1}, \zeta_{s}\right)\right) - 1 \\
&\qquad\leq \mathbb{E}_{x_s^{h+1} \sim \Pb^h\left(\cdot \mid x_{s}^{h}, a_{s}^{h}, b_s^h\right)} - \eta \Delta L^h(f^h, f^{h+1}, \zeta_s) + 0.67\eta^2 \Delta L^h(f^h, f^{h+1}, \zeta_s)^2 \\
&\qquad\leq -0.25 \eta (\cE_h(f^h,f^{h+1}, \zeta_s)^2,
\end{aligned}
$$
where the first inequality is because $\ln z \leq z - 1$ and the last inequality is because Lemma~\ref{lem:moment} and $(\frac{4}{3}\eta b^2 0.67) \leq 0.75$.
\end{proof}

\begin{lemma} \label{lem:upper}
It holds that 
$$
\begin{aligned}
\inf _{p} \mathbb{E}_{S_{t-1}} \mathbb{E}_{f \sim p(\cdot)}\left[\sum_{h=1}^{H} {\Phi}_{t}^{h}(f)-\lambda \Delta f^{1}\left(x^{1}\right)+\ln p(f)\right] \leq \lambda \epsilon+4 \alpha \eta(t-1) H \epsilon^{2}-\sum_{h=1}^{H} \ln p_{0}^{h}\left(\mathcal{F}_{h}\left(\epsilon, Q_{h+1}^{*}\right)\right).
\end{aligned}
$$
\end{lemma}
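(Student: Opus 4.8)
The plan is to upper-bound the infimum by evaluating the bracketed quantity at one conveniently chosen distribution $p$, since any feasible $p$ furnishes an upper bound for $\inf_p$. Motivated by the form of the target, I would take the product distribution $p(f)=\prod_{h=1}^H p^h(f^h)$, where each $p^h$ is the prior $p_0^h$ conditioned on the set $\cF_h(\epsilon,Q^*_{h+1})$ of functions whose $\cT_h$-Bellman error against $Q^*_{h+1}$ is at most $\epsilon$ uniformly. This choice is admissible because, by realizability (Assumption~\ref{assu:realizability}), $Q^*_h\in\cF_h$, and since $\cE_h(Q^*_h,Q^*_{h+1};x,a,b)=Q^*_h(x,a,b)-[r^h+\Pb_h V^*_{h+1}](x,a,b)=0$ by the Bellman optimality equation, each $\cF_h(\epsilon,Q^*_{h+1})$ contains $Q^*_h$ and therefore carries strictly positive prior mass, so the conditioning is well defined.

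With this $p$, the term $-\ln p_0^h(f^h)$ inside $\Phi_t^h(f)$ combines with $\ln p(f)=\sum_{h}\bigl(\ln p_0^h(f^h)-\ln p_0^h(\cF_h(\epsilon,Q^*_{h+1}))\bigr)$ to leave precisely $-\sum_{h=1}^H\ln p_0^h(\cF_h(\epsilon,Q^*_{h+1}))$, which is the final term of the claimed bound. For the optimism term, any $f^1$ in the support of $p$ obeys $\sup_{a,b}|f^1(x^1,a,b)-Q^*_1(x^1,a,b)|\le\epsilon$, because $\cE_1(f^1,Q^*_2;\cdot)=f^1-Q^*_1$; since the matrix-game value $V_{f,1}(x^1)=\max_{\mu}\min_{\nu}\mu^\top f^1(x^1,\cdot,\cdot)\nu$ is $1$-Lipschitz in the payoff entries, I obtain $|\Delta f^1(x^1)|=|V_{f,1}(x^1)-V_1^*(x^1)|\le\epsilon$, hence $-\lambda\Delta f^1(x^1)\le\lambda\epsilon$.

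It remains to control the two loss-dependent pieces of $\Phi_t^h$. For the linear piece $\alpha\eta\sum_{s=1}^{t-1}\Delta L^h(f^h,f^{h+1};\zeta_s)$, I would first show that on the support of $p$ the Bellman residual is uniformly small: writing $\cE_h(f;x,a,b)=(f^h-Q^*_h)(x,a,b)+[\Pb_h(V^*_{h+1}-V_{f^{h+1}})](x,a,b)$ and using that both $f^h$ and $f^{h+1}$ lie within $\epsilon$ of $Q^*$ in sup-norm gives $|\cE_h(f;x,a,b)|\le 2\epsilon$ pointwise. Lemma~\ref{lem:moment} then yields $\Eb_{x_s^{h+1}}\Delta L^h(f^h,f^{h+1};\zeta_s)=\bigl(\cE_h(f;x_s^h,a_s^h,b_s^h)\bigr)^2\le 4\epsilon^2$, so after taking $\Eb_{S_{t-1}}$ this piece contributes at most $4\alpha\eta(t-1)\epsilon^2$ per step, i.e.\ $4\alpha\eta(t-1)H\epsilon^2$ in total.

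For the log-partition piece $\alpha\ln\Eb_{\tilde f^h\sim p_0^h}\exp(-\eta\sum_{s=1}^{t-1}\Delta L^h(\tilde f^h,f^{h+1};\zeta_s))$, I would show that its expectation is nonpositive. Since $f\sim p$ is independent of $S_{t-1}$, I fix $f^{h+1}$ and $\tilde f^h$ and invoke Lemma~\ref{lem:ln_expectation} (valid since $\eta\beta^2\le 0.5<0.8$) to get $\Eb_{x_s^{h+1}}\exp(-\eta\Delta L^h)\le 1$ conditionally at each step; peeling off episodes through the tower property via the martingale construction of Lemma~\ref{lem:martingale} then gives $\Eb_{S_{t-1}}\exp(-\eta\sum_s\Delta L^h)\le 1$. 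Integrating over $\tilde f^h\sim p_0^h$ by Fubini and applying Jensen's inequality to the concave logarithm yields $\Eb_{S_{t-1}}$ of this piece $\le 0$ for each fixed $f^{h+1}$, hence after averaging over $f\sim p$. Summing the four contributions and using that the specific $p$ bounds the infimum establishes the claim. I expect this last piece to be the main obstacle: one must carefully track the adaptive filtration generated by the online data collection to justify the step-by-step conditioning and the interchange of $\Eb_{S_{t-1}}$, $\Eb_{\tilde f^h}$ and $\Eb_{f\sim p}$, which is exactly what Lemma~\ref{lem:martingale} is engineered to handle.
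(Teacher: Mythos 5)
Your proof is correct and follows essentially the same route as the paper: plug in the prior conditioned on $\prod_{h}\cF_h(\epsilon,Q^*_{h+1})$ (with realizability guaranteeing positive mass), note that membership forces $|f^h-Q^*_h|\le\epsilon$ and hence $|\cE_h(f;\cdot)|\le 2\epsilon$ and $|\Delta f^1(x^1)|\le\epsilon$, and control the remaining terms via Lemmas~\ref{lem:moment} and~\ref{lem:ln_expectation}. The only immaterial difference is how the log-partition term is shown to have nonpositive expectation: you use Fubini plus Jensen applied to the supermartingale $\exp(-\eta\sum_s\Delta L^h(\tilde f^h,f^{h+1},\zeta_s))$ for fixed $\tilde f^h$, while the paper telescopes the same quantity through the intermediate conditional posteriors $\hat q_s^h$; both rest on the identical one-step inequality $\Eb_{x_s^{h+1}}\exp(-\eta\Delta L^h)\le 1$.
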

\begin{proof}
Consider any fixed $f \in \cF$. For any $\tilde{f}^h \in \cF^h$ that depends on $S_{s-1}$, we obtain from Lemma~\ref{lem:ln_expectation} that
$$
\mathbb{E}_{\zeta_{s}} \exp \left(-\eta \Delta L^{h}\left(\tilde{f}^{h}, f^{h+1}, \zeta_{s}\right)\right)-1 \leq-0.25 \eta \mathbb{E}_{\zeta_{s}}\left(\tilde{f}^{h}(x, a)-\mathcal{T}_{h} f^{h+1}(x, a, b)\right)^{2} \leq 0.
$$
We let 
$$W_t^h := \Eb_{S_t} \Eb_{f \sim p(\cdot)} \ln \Eb_{\tilde{f}^h \sim p_0^h} \exp\left(-\eta \sum_{s=1}^t \Delta L^h(\tilde{f}^h, f^{h+1}, \zeta_s)\right),$$
and recall that 
$$
\hat{q}_{t}^{h}\left(\tilde{f}^{h} \mid f^{h+1}, S_{t-1}\right)=\frac{p_0^h(\tilde{f}^h)\exp \left(-\eta \sum_{s=1}^{t-1} \Delta L^{h}\left(\tilde{f}^{h}, f^{h+1}, \zeta_{s}\right)\right)}{\mathbb{E}_{\tilde{f}' \sim p_{0}^{h}} \exp \left(-\eta \sum_{s=1}^{t-1} \Delta L^{h}\left(\tilde{f}^{\prime}, f^{h+1}, \zeta_{s}\right)\right)}.
$$
We have 
\begin{align*}
W_{s}^{h}-W_{s-1}^{h} &= \mathbb{E}_{S_{s}} \mathbb{E}_{f \sim p(\cdot)} \ln \Eb_{\tilde{f}^h \sim p_0^h} \frac{\exp \left(-\eta \sum_{t=1}^{s-1} \Delta L^{h}\left(\tilde{f}^{h}, f^{h+1}, \zeta_{t}\right)\right)}{\mathbb{E}_{\tilde{f}' \sim p_{0}^{h}} \exp \left(-\eta \sum_{t=1}^{s-1} \Delta L^{h}\left(\tilde{f}^{\prime}, f^{h+1}, \zeta_{t}\right)\right)} \exp \left(-\eta \Delta L^{h}\left(\tilde{f}^{h}, f^{h+1}, \zeta_{s}\right)\right)\\
&=\mathbb{E}_{S_{s}} \mathbb{E}_{f \sim p(\cdot)} \ln \mathbb{E}_{\tilde{f}^{h} \sim \hat{q}_{s}^{h}\left(\cdot \mid f^{h+1}, S_{s-1}\right)} \exp \left(-\eta \Delta L^{h}\left(\tilde{f}^{h}, f^{h+1}, \zeta_{s}\right)\right) \\
&\leq \mathbb{E}_{S_{s}} \mathbb{E}_{f \sim p(\cdot)}\left(\mathbb{E}_{\tilde{f}^{h} \sim \hat{q}_{s}^{h}\left(\cdot \mid f^{h+1}, S_{s-1}\right)} \exp \left(-\eta \Delta L^{h}\left(\tilde{f}^{h}, f^{h+1}, \zeta_{s}\right)\right)-1\right) \leq 0
\end{align*}
where we use $\ln z \leq z - 1$. By $W_0^h = 0$, we know that 
$$W_t^h = W_0^h + \sum_{s=1}^t [W_s^h - W_{s-1}^h] \leq 0,$$
equivalently,
$$\Eb_{S_t} \Eb_{f \sim p(\cdot)} \ln \Eb_{\tilde{f}^h \sim p_0^h} \exp\left(-\eta \sum_{s=1}^t \Delta L^h(\tilde{f}^h, f^{h+1}, \zeta_s)\right) \leq 0.$$
This implies that for any $p(\cdot)$, we have
$$
\begin{aligned}
& \mathbb{E}_{S_{t-1}} \mathbb{E}_{f \sim p(\cdot)}\left[\sum_{h=1}^{H} \Phi_{t}^{h}(f)-\lambda \Delta f^{1}\left(x^{1}\right)+\ln p(f)\right] \\
&\qquad= \mathbb{E}_{S_{t-1}} \mathbb{E}_{f \sim p(\cdot)}\left[-\lambda \Delta f^{1}\left(x^{1}\right)+\alpha \eta \sum_{h=1}^{H} \sum_{s=1}^{t-1} \Delta L^{h}\left(f^{h}, f^{h+1}, \zeta_{s}\right)\right.\\
&\qquad\qquad\left.+\alpha \sum_{h=1}^{H} \ln \mathbb{E}_{\tilde{f}^{h} \sim p_{0}^{h}} \exp \left(-\eta \sum_{s=1}^{t-1} \Delta L^{h}\left(\tilde{f}^{h}, f^{h+1}, \zeta_{s}\right)\right)+\ln \frac{p(f)}{p_{0}(f)}\right] \\
&\qquad \leq \mathbb{E}_{S_{t-1}} \mathbb{E}_{f \sim p(\cdot)}\left[-\lambda \Delta f^{1}\left(x^{1}\right)+\sum_{h=1}^{H} \alpha \eta \sum_{s=1}^{t-1}\left(\mathcal{E}_{h}\left(f ; x_{s}^{h}, a_{s}^{h}, b_s^h\right)\right)^{2}+\ln \frac{p(f)}{p_{0}(f)}\right],
\end{aligned}
$$
where we use the definition of the potential function in first equality.
Since $p(\cdot)$ is arbitrary, we can take $f^h \in \cF_h(\epsilon, Q^*_{h+1})$ for all $h \in [H]$. We then have 
$$|f^h(x,a,b) - Q^*_h(x,a,b)| = |f^h(x,a,b) - \cT Q^*_{h+1}(x,a,b)| \leq \epsilon,$$
for all $(x,a,b,h) \in \cX \times \cA \times \cB \times [H]$. Then, we have
$$|\cE_h(f;x,a,b)| \leq |f^h(x,a,b) - Q^*_h(x,a,b)| + \sup_{x'} |V_{f,h+1}(x') - V^*_{h+1}(x')| \leq 2\epsilon,$$
where we use 
$$
\begin{aligned}
|V_{f,h+1}(x') - V^*_{h+1}(x')| &= |\sup_\mu \inf_\nu \Db_{\mu, \nu} f^{h+1}(x') - \sup_\mu \inf_\nu \Db_{\mu, \nu} Q^*_{h+1}(x')|\\
&\leq \sup_\mu \sup_\nu |\Db_{\mu, \nu} (f^{h+1}(x')-Q^*_{h+1}(x'))| \leq \epsilon,
\end{aligned}
$$
where the first inequality is because of 
$$|\inf_A f - \inf_A g| \leq \sup_A |f-g|.$$
By taking $p(f) = p_0(f)I(f \in \cF(\epsilon))/p_0(\cF(\epsilon))$, with $\cF(\epsilon) = \prod_h \cF_h(\epsilon, Q^*_{h+1})$, we obtain the desired result.
\end{proof}

\begin{lemma}\label{lem:mutual_information}
It holds that
\begin{equation}
\begin{aligned}
\mathbb{E}_{f \sim \hat{p}_{t}(f)} \ln \hat{p}_{t}(f) \geq & \alpha \mathbb{E}_{f \sim \hat{p}_{t}} \ln \hat{p}_{t}(f)+(1-\alpha) \mathbb{E}_{f \sim \hat{p}_{t}} \sum_{h=1}^{H} \ln \hat{p}_{t}\left(f^{h}\right) \\
\geq & \frac{\alpha}{2} \sum_{h=1}^{H} \mathbb{E}_{f \sim \hat{p}_{t}} \ln \hat{p}_{t}\left(f^{h}, f^{h+1}\right) \\
&\qquad+(1-0.5 \alpha) \mathbb{E}_{f \sim \hat{p}_{t}} \ln \hat{p}_{t}\left(f^{1}\right)+(1-\alpha) \sum_{h=2}^{H} \mathbb{E}_{f \sim \hat{p}_{t}} \ln \hat{p}_{t}\left(f^{h}\right) .
\end{aligned}
\end{equation}
\end{lemma}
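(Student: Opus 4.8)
The plan is to read Lemma~\ref{lem:mutual_information} as a pair of purely information-theoretic inequalities about the single distribution $\hat{p}_t$ on $\cF = \cF_1 \times \cdots \times \cF_H$. Neither line uses how $\hat{p}_t$ was assembled from prior and likelihood, so I would discard that structure entirely and argue for an arbitrary density. Throughout, $\mathbb{E}_{f\sim\hat{p}_t}\ln\hat{p}_t(\cdot)$ is a negative (differential) entropy $-\mathrm{Ent}(\cdot)$, and $\hat{p}_t(f^h)$, $\hat{p}_t(f^h,f^{h+1})$ denote the corresponding marginals; I adopt the boundary convention that $f^{H+1}$ is a trivial coordinate, so $\hat{p}_t(f^H,f^{H+1})=\hat{p}_t(f^H)$.

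For the first inequality I would subtract $\alpha\,\mathbb{E}_{f\sim\hat{p}_t}\ln\hat{p}_t(f)$ from both sides and divide by $1-\alpha$ (when $\alpha=1$ it is an identity). What remains is exactly
\[
\mathbb{E}_{f\sim\hat{p}_t}\ln\hat{p}_t(f)\ \ge\ \sum_{h=1}^H \mathbb{E}_{f\sim\hat{p}_t}\ln\hat{p}_t(f^h),
\]
i.e. $\mathbb{E}_{f\sim\hat{p}_t}\ln\frac{\hat{p}_t(f)}{\prod_{h}\hat{p}_t(f^h)}\ge 0$, which is the nonnegativity of the KL divergence $D\!\left(\hat{p}_t\,\big\|\,\bigotimes_h \hat{p}_t^{\,h}\right)$ (equivalently, subadditivity of entropy). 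This settles the first line.

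For the second inequality I would first cancel the common term $(1-\alpha)\sum_{h=2}^H\mathbb{E}\ln\hat{p}_t(f^h)$ from both sides and collect the $f^1$-marginal terms; the factors of $\alpha$ then drop out, reducing the claim to the $\alpha$-free statement
\[
\mathbb{E}_{f\sim\hat{p}_t}\ln\hat{p}_t(f)\ \ge\ \tfrac12\sum_{h=1}^H \mathbb{E}_{f\sim\hat{p}_t}\ln\hat{p}_t(f^h,f^{h+1}) + \tfrac12\,\mathbb{E}_{f\sim\hat{p}_t}\ln\hat{p}_t(f^1).
\]
Rewritten with entropies and the boundary convention, this is $\mathrm{Ent}(f)\le \tfrac12\mathrm{Ent}(f^1)+\tfrac12\mathrm{Ent}(f^H)+\tfrac12\sum_{h=1}^{H-1}\mathrm{Ent}(f^h,f^{h+1})$, which is Shearer's lemma for the cover $\{1\},\{H\},\{h,h+1\}_{h=1}^{H-1}$ of $\{1,\dots,H\}$. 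The key observation is that this cover is \emph{exact}: every index lies in exactly two of the sets. I would prove the needed bound from scratch via the chain rule: writing $f^{<i}=(f^1,\dots,f^{i-1})$, for each set $S$ in the cover $\mathrm{Ent}(f^S)\ge\sum_{i\in S}\mathrm{Ent}(f^i\mid f^{<i})$ because dropping conditioning variables only increases entropy, and summing over the cover gives $\sum_S \mathrm{Ent}(f^S)\ge \sum_h 2\,\mathrm{Ent}(f^h\mid f^{<h}) = 2\,\mathrm{Ent}(f)$.

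The only subtlety — and the thing to get right rather than a true obstacle — is that each $\cF_h$ may be continuous, so these are \emph{differential} entropies and an individual conditional entropy $\mathrm{Ent}(f^h\mid f^{<h})$ may be negative. This is precisely why I stress that the cover has coverage equal to two \emph{everywhere}: the final step $\sum_S\sum_{i\in S}\mathrm{Ent}(f^i\mid f^{<i})=2\sum_h\mathrm{Ent}(f^h\mid f^{<h})$ is then an \emph{equality}, so I never need $\mathrm{Ent}(f^h\mid f^{<h})\ge 0$ and only invoke ``conditioning reduces entropy,'' which remains valid for differential entropy. Combining the two reductions assembles the chained inequality in the stated form.
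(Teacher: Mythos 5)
Your proof is correct, and the two reductions you perform (peeling off the $\alpha$-weighted joint term in the first line, and cancelling the common marginal terms in the second line to reach the $\alpha$-free inequality $\mathbb{E}_{f\sim\hat p_t}\ln\hat p_t(f)\ge \tfrac12\,\mathbb{E}\ln\hat p_t(f^1)+\tfrac12\sum_{h=1}^H\mathbb{E}\ln\hat p_t(f^h,f^{h+1})$) are exactly the ones the paper makes; the first inequality is, in both cases, just nonnegativity of the KL divergence to the product of the coordinate marginals. Where you genuinely diverge is in how the reduced inequality is established. The paper splits the collection $\{1\},\{1,2\},\{2,3\},\dots,\{H-1,H\},\{H\}$ into two \emph{partitions} of $[H]$ --- the pairs $(f^h,f^{h+1})$ with $h$ odd, and $\{f^1\}$ together with the pairs with $h$ even --- applies subadditivity of entropy to each partition separately, and averages the two resulting inequalities. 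You instead treat the whole collection as an exact $2$-cover and invoke Shearer's lemma, proved via the chain rule plus ``conditioning reduces entropy.'' Both are valid. Your route is the more general one (it names the underlying principle and would extend to other covers), and your observation about why exactness of the cover matters in the continuous case is a correct and worthwhile subtlety: with coverage exactly two everywhere you never need a conditional differential entropy to be nonnegative. The paper's two-partition decomposition sidesteps that issue automatically, since each partition-wise subadditivity step is again just nonnegativity of a KL divergence; in that sense the paper's argument is slightly more elementary, while yours makes the combinatorial structure explicit.
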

\begin{proof}
To show the first inequality, we just subtract all terms of RHS from LHS to see that it is a KL-divergence which is non-negative. The second inequality is equivalent to 
$$
\mathbb{E}_{f \sim \hat{p}_{t}} \ln \hat{p}_{t}(f) \geq 0.5 \mathbb{E}_{f \sim \hat{p}_{t}} \ln \hat{p}_{t}\left(f^{1}\right)+0.5 \sum_{h=1}^{H} \mathbb{E}_{f \sim \hat{p}_{t}} \ln \hat{p}_{t}\left(f^{h}, f^{h+1}\right).
$$
This follows from 
$$
0.5 \mathbb{E}_{f \sim \hat{p}_{t}} \ln \hat{p}_{t}(f) \geq 0.5 \sum_{h=1}^{H} \mathbb{E}_{f \sim \hat{p}_{t}} \ln \hat{p}_{t}\left(f^{h}, f^{h+1}\right) I(h \text { is a odd number })
$$
and
$$
0.5 \mathbb{E}_{f \sim \hat{p}_{t}} \ln \hat{p}_{t}(f) \geq 0.5 \mathbb{E}_{f \sim \hat{p}_{t}} \ln \hat{p}_{t}\left(f^{1}\right)+0.5 \sum_{h=1}^{H} \mathbb{E}_{f \sim \hat{p}_{t}} \ln \hat{p}_{t}\left(f^{h}, f^{h+1}\right) I(h \text { is an even number })
$$
which is a result of the non-negativity of mutual information.
\end{proof}

\begin{lemma}
\label{lem:entropy}
It holds that
\begin{equation}
\begin{aligned}
&\mathbb{E}_{S_{t-1}} \mathbb{E}_{f \sim \hat{p}_{t}}\left(\sum_{h=1}^{H} \Phi_{t}^{h}(f)-\lambda \Delta f^{1}\left(x^{1}\right)+\ln \hat{p}_{t}(f)\right) \\
&\qquad\geq \underbrace{\mathbb{E}_{S_{t-1}} \mathbb{E}_{f \sim \hat{p}_{t}}\left[-\lambda \Delta f^{1}\left(x^{1}\right)+(1-0.5 \alpha) \ln \frac{\hat{p}_{t}\left(f^{1}\right)}{p_{0}^{1}\left(f^{1}\right)}\right]}_{A} \\
&\qquad\qquad+\sum_{h=1}^{H} \underbrace{0.5 \alpha \mathbb{E}_{S_{t-1}} \mathbb{E}_{f \sim \hat{p}_{t}}\left[\eta \sum_{s=1}^{t-1} 2 \Delta L^{h}\left(f^{h}, f^{h+1}, \zeta_{s}\right)+\ln \frac{\hat{p}_{t}\left(f^{h}, f^{h+1}\right)}{p_{0}^{h}\left(f^{h}\right) p_{0}^{h+1}\left(f^{h+1}\right)}\right]}_{B_{h}} \\
&\qquad\qquad+\sum_{h=1}^{H} \underbrace{\mathbb{E}_{S_{t-1}} \mathbb{E}_{f \sim \hat{p}_{t}}\left[\alpha \ln \mathbb{E}_{\tilde{f}^{h} \sim p_{0}^{h}} \exp \left(-\eta \sum_{s=1}^{t-1} \Delta L^{h}\left(\tilde{f}^{h}, f^{h+1}, \zeta_{s}\right)\right)+(1-\alpha) \ln \frac{\hat{p}_{t}\left(f^{h+1}\right)}{p_{0}^{h+1}\left(f^{h+1}\right)}\right]}_{C_{h}} .
\end{aligned}
\end{equation}
\end{lemma}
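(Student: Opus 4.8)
The plan is to establish this as essentially an exact rearrangement, the only nontrivial inequality being the mutual-information lower bound already proved in Lemma~\ref{lem:mutual_information}. I would begin by applying that lemma inside the outer expectation $\Eb_{S_{t-1}}$ (it holds pointwise for the distribution $\hat p_t$, which is all that is needed), replacing the single term $\Eb_{f\sim\hat p_t}\ln\hat p_t(f)$ on the left-hand side by the three-part bound $\frac{\alpha}{2}\sum_{h=1}^H \Eb_{f\sim\hat p_t}\ln\hat p_t(f^h,f^{h+1}) + (1-0.5\alpha)\Eb_{f\sim\hat p_t}\ln\hat p_t(f^1) + (1-\alpha)\sum_{h=2}^H\Eb_{f\sim\hat p_t}\ln\hat p_t(f^h)$. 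After this single step the $\geq$ is in force, and every subsequent manipulation is an identity.

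Next I would substitute the definition of the potential, writing $\sum_{h=1}^H\Phi_t^h(f)$ as the sum of its three constituent pieces: the prior piece $-\sum_h\ln p_0^h(f^h)$, the loss piece $\alpha\eta\sum_h\sum_{s=1}^{t-1}\Delta L^h(f^h,f^{h+1};\zeta_s)$, and the log-partition piece $\alpha\sum_h\ln\Eb_{\tilde f^h\sim p_0^h}\exp(-\eta\sum_s\Delta L^h(\tilde f^h,f^{h+1};\zeta_s))$. The routing into $A$, $B_h$, $C_h$ is then dictated by matching indices: the $-\lambda\Delta f^1(x^1)$ contribution and the $f^1$ log-ratio go into $A$; each loss term $\alpha\eta\sum_s\Delta L^h$ (which equals the $0.5\alpha\cdot 2\eta\sum_s\Delta L^h$ appearing in $B_h$) together with the paired term $\frac{\alpha}{2}\ln\hat p_t(f^h,f^{h+1})$ goes into $B_h$; and each log-partition term together with the $(1-\alpha)$ single-coordinate log-ratio at index $h+1$ goes into $C_h$. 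Reindexing $\sum_{h=1}^H\ln\hat p_t(f^{h+1})$ as $\sum_{h=2}^H\ln\hat p_t(f^h)$ (the coordinate $h=H+1$ being degenerate since $f^{H+1}\equiv0$) shows the $\ln\hat p_t$ pieces exactly reproduce the mutual-information bound.

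The delicate bookkeeping, and the step I expect to be the main obstacle, is verifying that the prior pieces $-\ln p_0^h(f^h)$ are split consistently across $A$, the $B_h$, and the $C_h$ so that each coordinate recovers coefficient exactly $-1$. Concretely, I would check the coefficient of $\ln p_0^h(f^h)$ term by term: for $h=1$ it is $-(1-0.5\alpha)$ from $A$ plus $-0.5\alpha$ from $B_1$, totalling $-1$; for $2\le h\le H$ it is $-0.5\alpha$ from the first factor of $B_h$, $-0.5\alpha$ from the second factor of $B_{h-1}$, and $-(1-\alpha)$ from $C_{h-1}$, again totalling $-1$. Once these index-shifted coefficients are confirmed to collapse to $-\sum_h\ln p_0^h(f^h)$, matching the prior piece of the potential, and the boundary coordinates at $h=1$ and $h=H+1$ are handled, the grouped expression $A+\sum_h B_h+\sum_h C_h$ coincides term by term with the post-substitution right-hand side, which completes the proof.
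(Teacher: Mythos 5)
Your proposal is correct and follows the same route as the paper's own (very terse) proof: a single application of Lemma~\ref{lem:mutual_information} to the $\ln\hat p_t(f)$ term, followed by substituting the definition of $\Phi_t^h$ and verifying that the coefficients of the $-\ln p_0^h(f^h)$ terms across $A$, $B_h$, $B_{h-1}$, and $C_{h-1}$ sum to $-1$ for each $h$, with the $h=H+1$ coordinate degenerate. Your coefficient bookkeeping checks out, so nothing further is needed.
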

\begin{proof}
We use the definition of the potential function and apply Lemma~\ref{lem:mutual_information}. The desired result follows from some calculations.
\end{proof}

\begin{lemma}
\label{lem:lower_bound}
If $\eta \beta^2\leq 0.4$, it holds that
\begin{equation}
    \begin{aligned}
    A \geq -\lambda \Eb_{S_{t-1}} \Eb_{f_t \sim \hat{p}_t} \Delta f_t^1(x^1),
    \end{aligned}
\end{equation}
\begin{equation}
    \begin{aligned}
B_{h} \geq 0.25 \alpha \eta \sum_{s=1}^{t-1} \mathbb{E}_{S_{t-1}} \mathbb{E}_{f \sim \hat{p}_{t}} \mathbb{E}_{\pi_{s}}\left(\mathcal{E}_{h}\left(f ; x_{s}^{h}, a_{s}^{h}, b_s^h\right)\right)^{2}
\end{aligned}
\end{equation}
\begin{equation}
    \begin{aligned}
C_{h} \geq-\alpha \eta \epsilon(2 b+\epsilon)(t-1)-\kappa^{h}_1(\alpha, \epsilon).
\end{aligned}
\end{equation}
\end{lemma}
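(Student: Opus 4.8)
The three inequalities are essentially decoupled, so the plan is to treat them in increasing order of difficulty, starting from $A$. The term $A$ splits exactly into $-\lambda\,\Eb_{S_{t-1}}\Eb_{f\sim\hat p_t}\Delta f^1(x^1)$ (the claimed bound) plus the leftover $(1-0.5\alpha)\,\Eb_{S_{t-1}}\Eb_{f\sim\hat p_t}\ln\frac{\hat p_t(f^1)}{p_0^1(f^1)}$. The inner expectation is the relative entropy $\mathrm{KL}\big(\hat p_t(\cdot^1)\,\|\,p_0^1\big)$ between the first-coordinate marginal of the posterior and the first-coordinate prior, hence nonnegative; since $\alpha\in(0,1]$ forces $1-0.5\alpha\geq 0.5>0$, the leftover is nonnegative and the bound on $A$ follows immediately, using no step-size condition.

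For $C_h$, I would first lower bound the inner prior log-moment-generating function by restricting the prior average to the good set $\cF_h(\epsilon,f^{h+1})$. On that set $|\cE_h(\tilde f^h,f^{h+1};\cdot)|\leq\epsilon$, and writing the excess loss as $\Delta L^h=e(2Z+e)$ with $e=\cE_h(\tilde f^h,f^{h+1};\cdot)$ and $Z$ the centered residual (bounded by $\beta$ through Assumption~\ref{assu:bounded}) gives the pointwise estimate $|\Delta L^h|\leq\epsilon(2\beta+\epsilon)$, so $\ln\Eb_{\tilde f^h\sim p_0^h}\exp(-\eta\sum_s\Delta L^h)\geq \ln p_0^h(\cF_h(\epsilon,f^{h+1}))-\eta(t-1)\epsilon(2\beta+\epsilon)$. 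Peeling off the deterministic $-\alpha\eta(t-1)\epsilon(2\beta+\epsilon)$ term, it remains to show $\Eb_{\hat p_t}\big[\alpha\ln p_0^h(\cF_h(\epsilon,f^{h+1}))+(1-\alpha)\ln\frac{\hat p_t(f^{h+1})}{p_0^{h+1}(f^{h+1})}\big]\geq -\kappa_1^h(\alpha,\epsilon)$. I would obtain this from the Gibbs variational / Donsker--Varadhan inequality (Lemma~\ref{lem: Gibbs_var}) applied to the $(h+1)$-marginal with the test function $-\tfrac{\alpha}{1-\alpha}\ln p_0^h(\cF_h(\epsilon,\cdot))$, which collapses the left side to $-(1-\alpha)\ln\Eb_{p_0^{h+1}}p_0^h(\cF_h(\epsilon,f^{h+1}))^{-\alpha/(1-\alpha)}=-\kappa_1^h(\alpha,\epsilon)$ by definition (the case $\alpha=1$ being handled by the limiting/supremum form of $\kappa_1^h$).

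The bound on $B_h$ is the main obstacle, as it is where the sampling variance is cancelled. Here I would rewrite the excess loss through the martingale variable $\xi_s^h$ of Lemma~\ref{lem:martingale}: from $\xi_s^h=-2\eta\Delta L^h-\ln\Eb_{x_s^{h+1}}\exp(-2\eta\Delta L^h)$ we get $2\eta\Delta L^h=-\xi_s^h-\ln\Eb_{x_s^{h+1}}\exp(-2\eta\Delta L^h)$, and applying Lemma~\ref{lem:ln_expectation} with the doubled step size $2\eta$ (legitimate precisely because the hypothesis $\eta\beta^2\leq 0.4$ reads $2\eta\beta^2\leq 0.8$) yields $-\ln\Eb_{x_s^{h+1}}\exp(-2\eta\Delta L^h)\geq 0.5\eta\,\cE_h^2$. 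Summing over $s$ gives $2\eta\sum_s\Delta L^h\geq -\sum_s\xi_s^h+0.5\eta\sum_s\cE_h^2$; after the $0.5\alpha$ prefactor the $\cE_h^2$ term produces the desired $0.25\alpha\eta\sum_s\cE_h^2$, so it remains to prove $\Eb_{S_{t-1}}\Eb_{\hat p_t}\big[-\sum_s\xi_s^h+\ln\frac{\hat p_t(f^h,f^{h+1})}{p_0^h p_0^{h+1}}\big]\geq 0$. For this I would hold $S_{t-1}$ fixed, apply the Gibbs variational inequality to the joint $(h,h+1)$-marginal with exponent $\sum_s\xi_s^h$ to get the lower bound $-\ln\Eb_{p_0^h\times p_0^{h+1}}\exp(\sum_s\xi_s^h)$, then take $\Eb_{S_{t-1}}$, pull it inside the logarithm by Jensen, and invoke the martingale identity $\Eb_{S_{t-1}}\exp(\sum_s\xi_s^h)=1$ (for each fixed $(f^h,f^{h+1})$) with Fubini to reduce the term to $-\ln 1=0$.

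The delicate point throughout $B_h$ is that the raw sum $\sum_s\Delta L^h$ carries an uncontrolled variance; only by routing it through the conditional log-MGF $\xi_s^h$ (whose exponential moment is controlled via the denominator in the likelihood, which is exactly where completeness enters) while simultaneously paying for the mutual-information/KL term with the martingale identity does one isolate the clean $\cE_h^2$ contribution. I also expect some bookkeeping care regarding which state--action tuple the squared Bellman error is evaluated at --- the realized step-$h$ tuple of episode $s$ versus its policy expectation $\Eb_{\pi_s}$ --- with the convention that $\zeta_s$ is generated by $\pi_s$ supplying the stated $\Eb_{\pi_s}(\cE_h(f;x_s^h,a_s^h,b_s^h))^2$ form.
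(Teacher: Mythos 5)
Your proposal is correct and follows essentially the same route as the paper's proof: dropping the nonnegative KL term for $A$; for $B_h$ routing the doubled-rate excess loss through the conditional log-MGF variable $\xi_s^h$, applying the Gibbs variational inequality to the joint $(h,h+1)$-marginal, Jensen for $-\ln$, and the martingale identity, then extracting $0.25\alpha\eta\sum_s\cE_h^2$ via Lemma~\ref{lem:ln_expectation} at rate $2\eta$ (which is exactly why $\eta\beta^2\leq 0.4$ is assumed); and for $C_h$ combining the restriction to $\cF_h(\epsilon,f^{h+1})$ with the Donsker--Varadhan step on the $(h+1)$-marginal to produce $\kappa_1^h(\alpha,\epsilon)$. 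The only cosmetic difference is that you restrict to the good set before applying the variational inequality in $C_h$ whereas the paper does it after; the two orderings are equivalent.
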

\begin{proof}
The bound of $A$ comes from the non-negativity of KL-divergence and $\alpha \in (0,1]$. To prove the lower bound of $B_h$, we define 
$$
\begin{aligned} \xi_{s}^{h}\left(f^{h}, f^{h+1}, \zeta_{s}\right)=&-2 \eta \Delta L^{h}\left(f^{h}, f^{h+1}, \zeta_{s}\right) -\ln \mathbb{E}_{x_s^{h+1} \sim \Pb^h\left(\cdot \mid x_{s}^{h}, a_{s}^{h}, b_s^h\right)} \exp \left(-2 \eta \Delta L^{h}\left(f^{h}, f^{h+1}, \zeta_{s}\right)\right).
\end{aligned}
$$
Then, for all $h \in [H]$, we have 
$$
\mathbb{E}_{S_{t-1}} \exp \left(\sum_{s=1}^{t-1} \xi_{s}^{h}\left(f^{h}, f^{h+1}, \zeta_{s}\right)\right)=1,
$$
according to Lemma~\ref{lem:martingale}. Then, by Lemma~\ref{lem: Gibbs_var}, we have
$$
\begin{aligned}
& \mathbb{E}_{f \sim \hat{p}_{t}}\left[\sum_{s=1}^{t-1}-\xi_{s}^{h}\left(f^{h}, f^{h+1}, \zeta_{s}\right)+\ln \frac{\hat{p}_{t}\left(f^{h}, f^{h+1}\right)}{p_{0}^{h}\left(f^{h}\right) p_{0}^{h+1}\left(f^{h+1}\right)}\right] \\
&\qquad \geq \inf _{p} \mathbb{E}_{f \sim p}\left[\sum_{s=1}^{t-1}-\xi_{s}^{h}\left(f^{h}, f^{h+1}, \zeta_{s}\right)+\ln \frac{p\left(f^{h}, f^{h+1}\right)}{p_{0}^{h}\left(f^{h}\right) p_{0}^{h+1}\left(f^{h+1}\right)}\right] \\
&\qquad=-\ln \mathbb{E}_{f^{h} \sim p_{0}^{h}} \mathbb{E}_{f^{h+1} \sim p_{0}^{h+1}} \exp \left(\sum_{s=1}^{t-1} \xi_{s}^{h}\left(f^{h}, f^{h+1}, \zeta_{s}\right)\right),
\end{aligned}
$$
where we use the fact that Lemma~\ref{lem: Gibbs_var} implies that the $\inf$ is achieved at 
$$
p\left(f^{h}, f^{h+1}\right) \propto p_{0}^{h}\left(f^{h}\right) p_{0}^{h+1}\left(f^{h+1}\right) \exp \left(\sum_{s=1}^{t-1} \xi_{s}^{h}\left(f^{h}, f^{h+1}, \zeta_{s}\right)\right),
$$
and the expectation is equal to
$$-\Eb_{p\left(f^{h}, f^{h+1}\right)}\sum_{s=1}^{t-1}\xi_s^h(f^h,f^{h+1},\zeta_s) + \Eb_{p\left(f^{h}, f^{h+1}\right)} \ln \frac{\exp(\sum_{s=1}^{t-1}\xi_s^h(f^h,f^{h+1},\zeta_s))}{c} = -\ln c$$
where $c = \mathbb{E}_{f^{h} \sim p_{0}^{h}} \mathbb{E}_{f^{h+1} \sim p_{0}^{h+1}} \exp \left(\sum_{s=1}^{t-1} \xi_{s}^{h}\left(f^{h}, f^{h+1}, \zeta_{s}\right)\right)$ is the normalized constant.
It then follows that
$$
\begin{aligned}
& \mathbb{E}_{S_{t-1}} \mathbb{E}_{f \sim \hat{p}_{t}}\left[\sum_{s=1}^{t-1}-\xi_{s}^{h}\left(f^{h}, f^{h+1}, \zeta_{s}\right)+\ln \frac{\hat{p}_{t}\left(f^{h}, f^{h+1}\right)}{p_{0}^{h}\left(f^{h}\right) p_{0}^{h+1}\left(f^{h+1}\right)}\right] \\
&\qquad\geq -\mathbb{E}_{S_{t-1}} \ln \mathbb{E}_{f^{h} \sim p_{0}^{h}} \mathbb{E}_{f^{h+1} \sim p_{0}^{h+1}} \exp \left(\sum_{s=1}^{t-1} \xi_{s}^{h}\left(f^{h}, f^{h+1}, \zeta_{s}\right)\right) \\
&\qquad\geq -\ln \mathbb{E}_{f^{h} \sim p_{0}^{h}} \mathbb{E}_{f^{h+1} \sim p_{0}^{h+1}} \mathbb{E}_{S_{t-1}} \exp \left(\sum_{s=1}^{t-1} \xi_{s}^{h}\left(f^{h}, f^{h+1}, \zeta_{s}\right)\right)=0,
\end{aligned}
$$
where we use the above result in the first inequality and use the convexity of $-\ln(\cdot)$ in the last inequality. We then have 
$$
\begin{aligned}
B_h &=0.5 \alpha \mathbb{E}_{S_{t-1}} \mathbb{E}_{f \sim \hat{p}_{t}}\left[\eta \sum_{s=1}^{t-1} 2 \Delta L^{h}\left(f^{h}, f^{h+1}, \zeta_{s}\right)+\ln \frac{\hat{p}_{t}\left(f^{h}, f^{h+1}\right)}{p_{0}^{h}\left(f^{h}\right) p_{0}^{h+1}\left(f^{h+1}\right)}\right]\\
&\geq 0.5\alpha \Eb_{S_{t-1}} \Eb_{f \sim \hat{p}_t} \sum_{s=1}^{t-1}-\ln \mathbb{E}_{x_s^{h+1} \sim \Pb^h\left(\cdot \mid x_{s}^{h}, a_{s}^{h}, b_s^h\right)} \exp \left(-2 \eta \Delta L^{h}\left(f^{h}, f^{h+1}, \zeta_{s}\right)\right)\\
&\geq -0.5 \alpha \eta \sum_{s=1}^{t-1} \frac{1}{2}(\cE_h(f;x_s^h, a_s^h, r_s^h))^2,
\end{aligned}
$$
where we use the definition of $\xi_{s}^{h}\left(f^{h}, f^{h+1}, \zeta_{s}\right)$ in the first inequality and  we use Lemma~\ref{lem:ln_expectation} in the last step. 

We now turn to the lower bound of $C_h$. We have 
$$
\begin{aligned}
& \mathbb{E}_{f \sim \hat{p}_{t}}\left[\alpha \ln \mathbb{E}_{\tilde{f}^{h} \sim p_{0}^{h}} \exp \left(-\eta \sum_{s=1}^{t-1} \Delta L^{h}\left(\tilde{f}^{h}, f^{h+1}, \zeta_{s}\right)\right)+(1-\alpha) \ln \frac{\hat{p}_{t}\left(f^{h+1}\right)}{p_{0}^{h+1}\left(f^{h+1}\right)}\right] \\
&\qquad \geq (1-\alpha) \inf _{p^h} \mathbb{E}_{f \sim p^h}\left[\frac{\alpha}{1-\alpha} \ln \mathbb{E}_{\tilde{f}^{h} \sim p_{0}^{h}} \exp \left(-\eta \sum_{s=1}^{t-1} \Delta L^{h}\left(\tilde{f}^{h}, f^{h+1}, \zeta_{s}\right)\right)+\ln \frac{p^h\left(f^{h+1}\right)}{p_{0}^{h+1}\left(f^{h+1}\right)}\right] \\
&\qquad=-(1-\alpha) \ln \mathbb{E}_{f^{h+1} \sim p_{0}^{h+1}}\left(\mathbb{E}_{f^{h} \sim p_{0}^{h}} \exp \left(-\eta \sum_{s=1}^{t-1} \Delta L^{h}\left(f^{h}, f^{h+1}, \zeta_{s}\right)\right)\right)^{-\alpha /(1-\alpha)},
\end{aligned}
$$
where we use the fact that the $\inf$ is achieved at 
$$
p^h\left(f^{h+1}\right) \propto p_{0}^{h+1}\left(f^{h+1}\right)\left(\mathbb{E}_{f^{h} \sim p_{0}^{h}} \exp \left(-\eta \sum_{s=1}^{t-1} \Delta L^{h}\left(f^{h}, f^{h+1}, \zeta_{s}\right)\right)\right)^{-\alpha /(1-\alpha)}.
$$
We now consider a fixed $f^h \in \cF_h(\epsilon, f^{h+1})$. It holds that 
$$
\left|\Delta L^{h}\left(f^{h}, f^{h+1}, \zeta_{s}\right)\right| \leq\left(\mathcal{E}_{h}\left(f, x_{s}^{h}, a_{s}^{h}\right)\right)^{2}+2 \beta\left|\mathcal{E}_{h}\left(f, x_{s}^{h}, a_{s}^{h}\right)\right| \leq \epsilon(2 \beta+\epsilon)
$$
To show this, we recall the definition 
$$
\begin{aligned}
\Delta L^{h}\left(f^{h}, f^{h+1} ; \zeta_{s}\right)=&\left(f^{h}\left(x_{s}^{h}, a_{s}^{h}, b_s^h\right)-r_{s}^{h}-V_{f,h+1}\left(x_{s}^{h+1}\right)\right)^{2} \\
&\qquad-\left(\mathcal{T}_{h} f^{h+1}\left(x_{s}^{h}, a_{s}^{h}, b_s^h\right)-r_{s}^{h}-V_{f,{h+1}}\left(x_{s}^{h+1}\right)\right)^{2},
\end{aligned}
$$
and we subtract and add $\cT_h f^{h+1}(x_s^h,a_s^h)$ inside the first term to obtain
$$\Delta L^h(f^h, f^{h+1}, \zeta_s) = \cE_h(f,x_s^h,a_s^h)^2 + 2\cE_h(f,x_s^h,a_s^h)(\cT^*_hf^{h+1}(x_s^h, a_s^h)-r^h_s-f^{h+1}(x_s^{h+1})).$$
It follows that 
$$
\mathbb{E}_{f^{h} \sim p_{0}^{h}} \exp \left(-\eta \sum_{s=1}^{t-1} \Delta L^{h}\left(f^{h}, f^{h+1}, \zeta_{s}\right)\right) \leq p_{0}^{h}\left(\mathcal{F}_{h}\left(\epsilon, f^{h+1}\right)\right) \exp (-\eta(t-1)(2 \beta+\epsilon) \epsilon).
$$
Thus, we have
\begin{align*}
C_h &\geq \alpha \Eb_{S_{t-1}} \ln \Eb_{f^{h+1} \sim p_0^{h+1}} p_0^h(\cF_h(\epsilon, f^{h+1})) \exp(-\eta(t-1)(2\beta+\epsilon)\epsilon)\\
&= -\alpha\eta\epsilon(2\beta+\epsilon)(t-1) + \alpha \Eb_{S_{t-1}} \ln \Eb_{f^{h+1} \sim p_0^{h+1}} p_0^h(\cF_h(\epsilon, f^{h+1})) \\
&\geq -\alpha\eta\epsilon(2\beta+\epsilon)(t-1) - \kappa^h_1(\alpha, \epsilon)
\end{align*}
where we use the definition
$$\kappa^{h}_1(\alpha, \epsilon)=(1-\alpha) \ln \mathbb{E}_{f^{h+1} \sim p_{0}^{h+1}} p_{0}^{h}\left(\mathcal{F}_{h}\left(\epsilon, f^{h+1}\right)\right)^{-\alpha /(1-\alpha)}.$$
\end{proof}
We are ready to prove Theorem~\ref{thm:adversarial}.
\begin{proof}[Proof of Theorem~\ref{thm:adversarial}.]
Let $\pi_t$ denote the distribution induced by $\mu_t \times \nu_t$ and define
$$\delta_t^h = \lambda \cE_h(f_t; x_t^h, a_t^h, b_t^h) - 0.25 \alpha \eta \sum_{s=1}^{t-1} \Eb_{\pi_s} \left(\cE_h(f_t; x_t^h, a_t^h, b_t^h)\right)^2.$$
Then, we have
$$\sum_{t=1}^T \Eb_{S_{t-1}} \Eb_{f_t \sim \hat{p}_t} \Eb_{g_t \sim \hat{p}^{\mu_t}_t} \Eb_{\zeta_t \sim \pi_t} \sum_{h=1}^H \delta_t^h \leq \frac{\lambda^2}{\alpha \eta}dc(\cF, MG, T, 0.25\alpha\eta/\lambda).$$
For arbitrary $\nu_t$ induced by $\mu_{f_t}$ and $g_t$, according to the value-decomposition Lemma~\ref{lem:value_decom_main} we have
$$
\begin{aligned}
&\Eb_{S_{t-1}} \Eb_{f_t \sim \hat{p}_t} \Eb_{g_t \sim \hat{p}^{\mu_t}_t} \lambda (V_1^*(x^1) - V_1^{\mu_t, \nu_t}(x^1)) - \Eb_{S_{t-1}} \Eb_{f_t \sim \hat{p}_t}\Eb_{g_t \sim \hat{p}^{\mu_t}_t} \Eb_{\zeta_t \sim \pi_t} \sum_{h=1}^H \delta_t^h\\
&\leq -\lambda \Eb_{S_{t-1}} \Eb_{f_t \sim \hat{p}_t} \Delta f_t^1(x^1) + 0.25\alpha \eta \sum_{h=1}^H \sum_{s=1}^{t-1} \Eb_{S_{t-1}} \Eb_{f_t \sim \hat{p}_t}\Eb_{\pi_s} \left(\cE_h(f_t; x_t^h, a_t^h, b_t^h)\right)^2\\
&\leq \mathbb{E}_{S_{t-1}} \mathbb{E}_{f \sim \hat{p}_{t}}\left(\sum_{h=1}^{H} \Phi_{t}^{h}(f)-\lambda \Delta f^{1}\left(x^{1}\right)+\ln \hat{p}_{t}(f)\right)+\alpha \eta \epsilon(2 \beta+\epsilon)(t-1) H+\sum_{h=1}^{H} \kappa^{h}_1(\alpha, \epsilon)\\
&= \mathbb{E}_{S_{t-1}} \inf _{p} \mathbb{E}_{f \sim p}\left(\sum_{h=1}^{H} \Phi_{t}^{h}(f)-\lambda \Delta f^{1}\left(x^{1}\right)+\ln p(f)\right)+\alpha \eta \epsilon(2 \beta+\epsilon)(t-1) H+\sum_{h=1}^{H} \kappa^{h}_1(\alpha, \epsilon) \\
&\leq\lambda \epsilon+\alpha \eta \epsilon(\epsilon+4 \epsilon+2 \beta)(t-1) H-\sum_{h=1}^{H} \ln p_{0}^{h}(\mathcal{F}( \epsilon, Q_{h+1}^{*}))+\sum_{h=1}^{H} \kappa^{h}_1(\alpha, \epsilon),
\end{aligned}
$$
where the first inequality also uses the definition of $\Delta f_t^1(x^1)$; the second inequality comes from Lemma~\ref{lem:entropy} and Lemma~\ref{lem:lower_bound}; the equality is because Lemma~\ref{lem:key}, and the last step comes from Lemma~\ref{lem:upper}.
Summing over $t$, we obtain that 
$$
\begin{aligned}
&\sum_{t=1}^T \Eb_{S_{t-1}} \Eb_{f_t \sim \hat{p}_t}\Eb_{g_t \sim \hat{p}^{\mu_t}_t} \left(V_1^*(x^1) - V_1^{\mu_t, \nu_t}(x^1)\right) \\
&\leq \epsilon T + \frac{1}{\lambda} \alpha \eta (5\epsilon + 2\beta) \frac{T(T-1)}{2} H - \frac{T}{\lambda} \sum_{h=1}^{H} \ln p_{0}^{h}(\mathcal{F}( \epsilon, Q_{h+1}^{*})) + \frac{T}{\lambda} \sum_{h=1}^{H} \kappa^{h}(\alpha, \epsilon) + \frac{\lambda}{\alpha \eta} dc(\cF, MG, T, 0.25\alpha \eta /\lambda)\\
&\leq O(\beta\sqrt{dc(\cF,MG,T)\kappa(\beta/T^2)T} + dc(\cF,MG,T)). 
\end{aligned}
$$
Here in the last step, we first let $\alpha \to 1^-$ and note that
\begin{align*}
-\ln p_{0}^{h}\left(\mathcal{F}{(} \epsilon, Q_{h+1}^{*}\right) &\leq \kappa^{h}_1(1, \epsilon), \\
-\sum_{h=1}^{H} \ln p_{0}^{h}\left(\mathcal{F}( \epsilon, Q_{h+1}^{*}\right)+\sum_{h=1}^{H} \kappa^{h}_1(1, \epsilon) &\leq 2 \kappa(\epsilon).
\end{align*}
Then, we take $\epsilon = \frac{\beta}{T^2}$, $\lambda = \sqrt{\frac{T\kappa(\beta/T^2)}{\beta^2dc(\cF,MG,T)}}$, and $\eta  = \frac{1}{4\beta^2}$.  This concludes the proof.
\end{proof}

\section{Proof of the Theorem~\ref{thm:booster agent}}
In this section, we provide a proof for Theorem~\ref{thm:booster agent}. 

\begin{lemma} \label{lem:moment2} For any max-player's policy $\mu$ and all functions $g\in \cF$, we have
$$
\mathbb{E}_{x_s^{h+1} \sim \Pb^{h}\left(\cdot \mid x_{s}^{h}, a_{s}^{h},b_s^h\right)} \Delta L^{h}_\mu\left(g^{h}, g^{h+1}, \zeta_{s}\right)=\left(\mathcal{E}^\mu_{h}\left(g ; x_{s}^{h}, a_{s}^{h}, b_s^h\right)\right)^{2}
$$
and
$$
\mathbb{E}_{x_s^{h+1} \sim \Pb^{h}\left(\cdot \mid x_{s}^{h}, a_{s}^{h}, b_s^h\right)} \Delta L^{h}_\mu\left(g^{h}, g^{h+1}, \zeta_{s}\right)^{2} \leq \frac{4 \beta^{2}}{3}\left(\mathcal{E}^\mu_{h}\left(g ; x_{s}^{h}, a_{s}^{h}, b_s^h\right)\right)^{2}
$$
\end{lemma}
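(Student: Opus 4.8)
The plan is to mirror the proof of Lemma~\ref{lem:moment} line by line, replacing every main-agent object by its booster-agent counterpart: $f \mapsto g$, $V_{f,h+1} \mapsto V^\mu_{g,h+1}$, $\cT_h \mapsto \cT^\mu_h$, $\cE_h \mapsto \cE^\mu_h$, and $\Delta L^h \mapsto \Delta L^h_\mu$. Concretely, I would fix the tuple $(x_s^h, a_s^h, b_s^h)$ and introduce the random variable
$$Z = g^h(x_s^h, a_s^h, b_s^h) - r_s^h - V^\mu_{g,h+1}(x_s^{h+1}),$$
whose only source of randomness under this conditioning is the transition $x_s^{h+1} \sim \Pb^h(\cdot \mid x_s^h, a_s^h, b_s^h)$.

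The first step is to identify the conditional mean of $Z$. Since the reward is deterministic (so that $r_s^h = r^h(x_s^h, a_s^h, b_s^h)$) and $\Eb_{x_s^{h+1}} V^\mu_{g,h+1}(x_s^{h+1}) = [\Pb_h V^\mu_{g,h+1}](x_s^h, a_s^h, b_s^h)$, the definition of $\cT^\mu_h$ gives $\Eb Z = g^h - \cT^\mu_h g = \cE^\mu_h(g; x_s^h, a_s^h, b_s^h)$. The key algebraic observation is that the subtracted term in $\Delta L^h_\mu$ is exactly $(Z - \Eb Z)^2$: indeed $\cT^\mu_h g - r_s^h - V^\mu_{g,h+1}(x_s^{h+1}) = Z - (g^h - \cT^\mu_h g) = Z - \Eb Z$. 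Hence $\Delta L^h_\mu = Z^2 - (Z - \Eb Z)^2$, and taking the conditional expectation yields $\Eb \Delta L^h_\mu = \Eb Z^2 - \mathrm{Var}(Z) = (\Eb Z)^2 = (\cE^\mu_h(g))^2$, which is the first claimed identity.

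For the second-moment bound I would use that, under the conditioning, the part $g^h - r_s^h$ is frozen and only $V^\mu_{g,h+1}(x_s^{h+1})$ moves; since $g^{h+1} \in [0, \beta-1]$ by Assumption~\ref{assu:bounded}, the induced best-response value $V^\mu_{g,h+1}(x) = \min_{\nu} \mu_{h+1}(x)^\top g^{h+1}(x,\cdot,\cdot)\nu$ also lies in $[0, \beta-1]$, so the conditional range of $Z$ satisfies $\max Z - \min Z \le \beta$. Writing $\Delta L^h_\mu = \Eb Z \cdot (2Z - \Eb Z)$ and invoking the same bounded-range estimate as in Lemma~\ref{lem:moment} --- which controls $\Eb(2Z - \Eb Z)^2$ by $\tfrac{4}{3}\beta^2$ using that $Z$ lives in an interval of width at most $\beta$ --- then gives $\Eb (\Delta L^h_\mu)^2 = (\Eb Z)^2\,\Eb(2Z - \Eb Z)^2 \le \tfrac{4}{3}\beta^2 (\cE^\mu_h(g))^2$, the second claim.

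The only point that is genuinely new relative to Lemma~\ref{lem:moment}, and thus the step I would check most carefully, is that the min-player best-response value $V^\mu_{g,h+1}$ inherits both the $[0,\beta-1]$ boundedness and the property that its conditional expectation reproduces the $\Pb_h$ factor inside $\cT^\mu_h$. Once this bookkeeping is in place, the variance-cancellation identity and the range argument are literally the single-agent computations; no further obstacle is expected, because $\Delta L^h_\mu$ was designed precisely so that its expectation eliminates the sampling variance in the same manner as $\Delta L^h$ for the main agent.
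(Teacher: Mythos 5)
Your proposal is correct and follows exactly the paper's route: the paper's own proof of this lemma is literally ``replace the notations in the proof of Lemma~\ref{lem:moment},'' i.e., set $Z = g^h(x_s^h,a_s^h,b_s^h) - r_s^h - V^\mu_{g,h+1}(x_s^{h+1})$, observe $\Delta L^h_\mu = Z^2 - (Z-\Eb Z)^2$ with $\Eb Z = \cE^\mu_h(g;\cdot)$, and reuse the conditional-range bound, which is precisely what you do. Your extra remark that the only genuinely new bookkeeping is checking $V^\mu_{g,h+1} \in [0,\beta-1]$ and that its conditional expectation reproduces the $\Pb_h$ term in $\cT^\mu_h$ is exactly the right thing to verify and is handled correctly.
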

\begin{proof}
The proof of this lemma only employs the Markov property of the transition and the range of function $g \in \cF$. By replacing the notations in the proof of Lemma~\ref{lem:moment}, we conclude the proof.
\end{proof}

\begin{lemma} \label{lem:ln_expectation2}
Letting $\eta \beta^2 \leq 0.8$, then for all functions $g \in \cF$ and any max-player's policy $\mu$, we have
$$
\begin{aligned}
& \ln \mathbb{E}_{x_s^{h+1} \sim \Pb^{h}\left(\cdot \mid x_{s}^{h}, a_{s}^{h}, b_s^h\right)} \exp \left(-\eta \Delta L^{h}_\mu\left(g^{h}, g^{h+1}, \zeta_{s}\right)\right) \\
&\qquad\leq \mathbb{E}_{x_s^{h+1} \sim \Pb^{h}\left(\cdot \mid x_{s}^{h}, a_{s}^{h}, b_s^h\right)} \exp \left(-\eta \Delta L^{h}_\mu\left(g^{h}, g^{h+1}, \zeta_{s}\right)\right)-1 \\
&\qquad\leq -0.25 \eta \left(\mathcal{E}^\mu_{h}\left(f ; x_{s}^{h}, a_{s}^{h}, b_s^h\right)\right)^{2} .
\end{aligned}
$$
\end{lemma}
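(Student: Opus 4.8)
The plan is to carry out the proof of \cref{lem:ln_expectation2} as a verbatim mirror of the proof of \cref{lem:ln_expectation}, substituting the best-response counterparts throughout: replacing the excess loss $\Delta L^h(\cdot)$ by $\Delta L^h_\mu(\cdot)$, the Bellman residual $\cE_h$ by $\cE^\mu_h$, and invoking \cref{lem:moment2} in place of \cref{lem:moment}. The whole argument is structurally identical because the only problem-specific inputs it uses are the range of the summands and the two moment identities, both of which are already supplied for the booster agent.

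First I would verify the boundedness step that justifies the hypothesis $\eta\beta^2\le 0.8$ being enough. Writing $Z = g^h(x_s^h,a_s^h,b_s^h) - r_s^h - V^\mu_{g^{h+1}}(x_s^{h+1})$, one has $\Delta L^h_\mu = Z^2 - (Z-\Eb Z)^2$, and under \cref{assu:bounded} (together with completeness guaranteeing $\cT^\mu_h g^{h+1}\in[0,\beta-1]$) both $Z$ and its mean lie in $[-\beta,\beta-1]$, so $-\eta\Delta L^h_\mu\le\eta\beta^2\le 0.8$. This is the place where I would pause to confirm that passing to the min-player's best-response value $V^\mu$ and the operator $\cT^\mu_h$ does not enlarge the range, since that is the sole content that differs from the main-agent case; it does not, because $V^\mu_{g,h}$ is a convex combination of entries of $g^h$ and hence stays in $[0,\beta-1]$.

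Next I would invoke the elementary bound $\exp(z)\le 1+z+0.67\,z^2$ valid for $z\le 0.8$ (using that $\psi(z)=(e^z-1-z)/z^2$ is increasing with $\psi(0.8)<0.67$) at $z=-\eta\Delta L^h_\mu$, then take the transition expectation $\Eb_{x_s^{h+1}\sim\Pb^h(\cdot\mid x_s^h,a_s^h,b_s^h)}$. The first displayed inequality of the lemma then follows immediately from $\ln w\le w-1$. For the second inequality I would substitute the two conclusions of \cref{lem:moment2}, namely $\Eb\,\Delta L^h_\mu=(\cE^\mu_h)^2$ and $\Eb\,(\Delta L^h_\mu)^2\le\tfrac{4\beta^2}{3}(\cE^\mu_h)^2$, which turns the right-hand side into $-\eta(\cE^\mu_h)^2+0.67\eta^2\cdot\tfrac{4\beta^2}{3}(\cE^\mu_h)^2$, and finally use $\tfrac{4}{3}\eta\beta^2\cdot 0.67\le 0.75$ (a consequence of $\eta\beta^2\le 0.8$) to absorb the quadratic term and obtain the stated $-0.25\,\eta(\cE^\mu_h)^2$.

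I do not expect any genuine obstacle here: the lemma is the booster-agent analogue of an already-established main-agent lemma, and the asymmetric structure of the algorithm was precisely designed so that the booster agent retains the same Markov-transition and boundedness properties separately (as emphasized in \cref{sec:main_idea}). The only thing warranting care is the notational consistency of the conditioning and the $\mu$-dependence, and the confirmation that \cref{lem:moment2} applies uniformly over all max-player policies $\mu=\mu_f$; once those are in hand the estimate is routine. (I would also flag the apparent typo $\cE^\mu_h(f;\cdots)$ in the statement, which should read $\cE^\mu_h(g;\cdots)$.)
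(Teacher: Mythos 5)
Your proposal is correct and matches the paper's approach exactly: the paper's own proof of this lemma simply states that it follows from the proof of Lemma~\ref{lem:ln_expectation} by replacing notation, which is precisely the substitution argument you carry out (with the added benefit that you verify the boundedness and constants explicitly, and correctly flag the $f$-versus-$g$ typo in the statement).
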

\begin{proof}
The proof of this lemma only employs the range of function $g\in \cF$. By replacing the notations in the proof of Lemma~\ref{lem:ln_expectation}, we conclude the proof.
\end{proof}

\begin{lemma}
It holds that
$$
\begin{aligned}
&\mathbb{E}_{S_{t-1}} \Eb_{f_t \sim \hat{p}_t} \inf_{p} \mathbb{E}_{g \sim p(\cdot)}\left[\sum_{h=1}^{H} {\Phi}_{t}^{h}(g, \mu_t)-\lambda \Delta g^{1}_{\mu_t} \left(x^{1}\right)+\ln p(g)\right] \\
&\qquad\leq \lambda \epsilon+4 \alpha \eta(t-1) H \epsilon^{2}-\Eb_{S_{t-1}} \Eb_{f_t \sim \hat{p}_t} \sum_{h=1}^{H} \ln p_{0}^{h}\left(\mathcal{F}^{\mu_t}_{h}\left(\epsilon, Q_{h+1}^{\mu_t,\dagger}\right)\right)
\end{aligned}
$$
\end{lemma}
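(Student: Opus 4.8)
The plan is to mirror the proof of Lemma~\ref{lem:upper} line by line, replacing every object attached to the Nash operator $\cT_h$ by its best-response counterpart relative to the sampled max-player policy $\mu_t=\mu_{f_t}$: the excess loss $\Delta L^h$ by $\Delta L^h_{\mu_t}$, the value gap $\Delta f^1$ by $\Delta g^1_{\mu_t}$, the Bellman residual $\cE_h$ by $\cE^{\mu_t}_h$, the target $Q^*_{h+1}$ by the best-response target $Q^{\mu_t,\dagger}_{h+1}$, and the sets $\cF_h(\epsilon,\cdot)$ by $\cF^{\mu_t}_h(\epsilon,\cdot)$. The structural input that makes this work is that all auxiliary estimates of the main agent have exact booster analogues for an \emph{arbitrary fixed} max-player policy $\mu=\mu_f$: the first-moment identity $\Eb\,\Delta L^h_\mu=(\cE^\mu_h)^2$ and the $\tfrac{4\beta^2}{3}$ second-moment bound (Lemma~\ref{lem:moment2}), the one-step log-MGF bound $\ln\Eb_{x_s^{h+1}}\exp(-\eta\Delta L^h_\mu)\le -0.25\,\eta(\cE^\mu_h)^2$ (Lemma~\ref{lem:ln_expectation2}), and the unit-expectation martingale identity (Lemma~\ref{lem:martingale}). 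I would therefore first fix a realization of $f_t$, treat $\mu_t$ as a fixed policy, and run the main-agent argument conditionally.

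Concretely, I would bound the infimum over $p$ from above by evaluating the bracket at the comparator $p^\star_{\mu_t}(g)\propto p_0(g)\,I(g\in\cF^{\mu_t}(\epsilon))$ with $\cF^{\mu_t}(\epsilon)=\prod_{h}\cF^{\mu_t}_h(\epsilon,Q^{\mu_t,\dagger}_{h+1})$. This set is nonempty with positive prior mass: by Assumption~\ref{assu:realizability} we have $Q^{\mu_t,\dagger}_h\in\cF_h$, and the best-response Bellman identity $Q^{\mu_t,\dagger}_h=\cT^{\mu_t}_h Q^{\mu_t,\dagger}_{h+1}$ places it in $\cF^{\mu_t}_h(\epsilon,Q^{\mu_t,\dagger}_{h+1})$ for every $\epsilon\ge0$ (Assumption~\ref{assu:completeness} keeps the operator inside $\cF_h$). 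On the support, the $1$-Lipschitzness of $g^h\mapsto\min_\nu\mu_h^\top g^h\nu$ gives $\sup_{x,a,b}|\cE^{\mu_t}_h(g;x,a,b)|\le2\epsilon$ and $|V^{\mu_t}_{g,1}(x^1)-V^{\mu_t,\dagger}_1(x^1)|\le\epsilon$, exactly as in Lemma~\ref{lem:upper}.

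Next I would expand $\sum_h\Phi^h_t(g,\mu_t)+\ln p^\star_{\mu_t}(g)$ into three pieces. The log-likelihood ratio $\ln\frac{p^\star_{\mu_t}(g)}{p_0(g)}=-\sum_h\ln p_0^h(\cF^{\mu_t}_h(\epsilon,Q^{\mu_t,\dagger}_{h+1}))$ is exactly the prior-mass term on the right-hand side; the raw-loss term $\alpha\eta\sum_{h,s}\Delta L^h_{\mu_t}$ collapses in conditional expectation to $\alpha\eta\sum_{h,s}(\cE^{\mu_t}_h)^2\le4\alpha\eta(t-1)H\epsilon^2$ by Lemma~\ref{lem:moment2}; and the log-partition term $\alpha\sum_h\ln\Eb_{\tilde g^h\sim p_0^h}\exp(-\eta\sum_s\Delta L^h_{\mu_t}(\tilde g^h,g^{h+1},\zeta_s))$ is killed by the telescoping potential $W^h_s(\mu)$ of Lemma~\ref{lem:upper}: writing the increment $W^h_s-W^h_{s-1}$ as a conditional expectation against $\hat q^h_s(\cdot\mid g^{h+1},\mu,S_{s-1})$ and applying $\ln z\le z-1$ with Lemma~\ref{lem:ln_expectation2} makes each increment nonpositive, so the averaged log-partition term is $\le0$. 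Combining with $-\lambda\Delta g^1_{\mu_t}\le\lambda\epsilon$ reproduces the claimed bound, after which I take $\Eb_{S_{t-1}}\Eb_{f_t\sim\hat p_t}$.

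I expect the main obstacle to be justifying the two averaging steps above when $\mu_t=\mu_{f_t}$ is \emph{not} fixed but drawn from the main agent's posterior, hence is a measurable function of the whole history $S_{t-1}$ — in particular of the very next-state draws $x^{h+1}_s$ that the moment and martingale identities integrate over. Lemmas~\ref{lem:martingale}, \ref{lem:moment2} and \ref{lem:ln_expectation2} are valid only for a $\mu$ chosen independently of the data, so applying them with the ``look-ahead'' policy $\mu_t$ is exactly where care is needed. This is precisely why the statement keeps the infimum over $p$ \emph{inside} $\Eb_{f_t\sim\hat p_t}$, so the comparator $p^\star_{\mu_t}$ may track $\mu_t$, and why the complexity measure is defined with a supremum $\kappa(\epsilon)=\sup_{f}\kappa_{\mu_f}(\epsilon)$ over induced policies; the clean way to make the argument rigorous is to establish the fixed-$\mu$ bound simultaneously for every $\mu=\mu_f$ and absorb the resulting uniformity into that supremum, after which the conditioning on $f_t$ is harmless.
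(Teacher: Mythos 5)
Your proposal is correct and follows essentially the same route as the paper's proof: you mirror the main-agent argument (Lemma~\ref{lem:upper}) with the $\cT^{\mu_t}_h$-counterparts, evaluate the infimum at the prior restricted to $\prod_h\cF^{\mu_t}_h(\epsilon,Q^{\mu_t,\dagger}_{h+1})$, kill the log-partition term by the same telescoping $W^h_s$ argument via Lemma~\ref{lem:ln_expectation2}, and use the $|\inf_A f-\inf_A g|\le\sup_A|f-g|$ Lipschitz step to get $|\cE^{\mu_t}_h|\le 2\epsilon$ on the support. Your closing remark about the data-dependence of $\mu_t$ correctly identifies the point the paper handles by keeping $\Eb_{f_t\sim\hat p_t}$ outside the infimum and invoking the fixed-$\mu$ moment and martingale lemmas uniformly over all induced policies.
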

\begin{proof}
Consider any fixed $g \in \cF$. For any $\tilde{g}^h \in \cF^h$ that depends on $S_{s-1}$ and $\mu_{f_s}$, and for any $\mu_f$ we obtain from Lemma~\ref{lem:ln_expectation} that 
$$
\mathbb{E}_{\zeta_{s}} \exp \left(-\eta \Delta L^{h}_{\mu_f}\left(\tilde{g}^{h}, g^{h+1}, \zeta_{s}\right)\right)-1 \leq-0.25 \eta \mathbb{E}_{\zeta_{s}}\left(\tilde{g}^{h}(x, a)-\mathcal{T}_{h}^{\mu_f} g^{h+1}(x, a, b)\right)^{2} \leq 0.
$$
We now fix some $t$. For all $s \leq t$, we define
$$W_s^h := \Eb_{S_s} \Eb_{f \sim \hat{p}_{t+1}} \Eb_{g \sim p(\cdot)} \ln \Eb_{\tilde{g}^h \sim p_0^h} \exp\left(-\eta \sum_{\ell=1}^s \Delta L^h_{\mu_f}(\tilde{g}^h, g^{h+1}, \zeta_\ell)\right),$$
and recall that 
$$
\hat{q}_{t}^{h}\left(\tilde{g}^{h} \mid g^{h+1}, \mu_f, S_{t-1}\right)=\frac{p_0^h(\tilde{g}^h)\exp \left(-\eta \sum_{s=1}^{t-1} \Delta L^{h}_{\mu_f}\left(\tilde{g}^{h}, g^{h+1}, \zeta_{s}\right)\right)}{\mathbb{E}_{\tilde{g}' \sim p_{0}^{h}} \exp \left(-\eta \sum_{s=1}^{t-1} \Delta L^{h}_{\mu_f}\left(\tilde{g}^{\prime}, g^{h+1}, \zeta_{s}\right)\right)}.
$$
We have 
$$
\begin{gathered}
W_{s}^{h}-W_{s-1}^{h}=\mathbb{E}_{S_{s}} \mathbb{E}_{f \sim \hat{p}_{t+1}(\cdot)} \ln \mathbb{E}_{\tilde{g}^{h} \sim \hat{q}_{s}^{h}\left(\cdot \mid g^{h+1}, \mu_f, S_{s-1}\right)} \exp \left(-\eta \Delta L^{h}_{\mu_f}\left(\tilde{g}^{h}, g^{h+1}, \zeta_{s}\right)\right) \\
\leq \mathbb{E}_{S_{s}} \mathbb{E}_{f \sim \hat{p}_t(\cdot)}\left(\mathbb{E}_{\tilde{g}^{h} \sim \hat{q}_{s}^{h}\left(\cdot \mid g^{h+1},\mu_f, S_{s-1}\right)} \exp \left(-\eta \Delta L^{h}_{\mu_f}\left(\tilde{g}^{h}, g^{h+1}, \zeta_{s}\right)\right)-1\right) \leq 0
\end{gathered}
$$
where we use $\ln z \leq z - 1$. By $W_0^h = 0$, we know that 
$$W_t^h = W_0^h + \sum_{s=1}^t [W_s^h - W_{s-1}^h] \leq 0,$$
equivalently,
$$\Eb_{S_t} \Eb_{f \sim \hat{p}_{t+1}(\cdot)} \Eb_{g \sim p(\cdot)}\ln \Eb_{\tilde{g}^h \sim p_0^h} \exp\left(-\eta \sum_{s=1}^t \Delta L^h_{\mu_f}(\tilde{g}^h, g^{h+1}, \zeta_s)\right) \leq 0.$$
Note that $t$ is arbitrary. This implies that for any $p(\cdot)$ and any $t$, we have
$$
\begin{aligned}
& \mathbb{E}_{S_{t-1}} \mathbb{E}_{f \sim \hat{p}_t(\cdot)} \Eb_{g \sim p(\cdot)}\left[\sum_{h=1}^{H} \Phi_{t}^{h}(g, \mu_f)-\lambda \Delta g^{1}_{\mu_f}\left(x^{1}\right)+\ln p(g)\right] \\
&\qquad= \mathbb{E}_{S_{t-1}} \mathbb{E}_{f \sim \hat{p}_t(\cdot)} \Eb_{g \sim p(\cdot)}\left[-\lambda \Delta g^{1}_{\mu_f}\left(x^{1}\right)+\alpha \eta \sum_{h=1}^{H} \sum_{s=1}^{t-1} \Delta L^{h}_{\mu_f}\left(g^{h}, g^{h+1}, \zeta_{s}\right)\right.\\
&\qquad \qquad \left.+\alpha \sum_{h=1}^{H} \ln \mathbb{E}_{\tilde{g}^{h} \sim p_{0}^{h}} \exp \left(-\eta \sum_{s=1}^{t-1} \Delta L^{h}_{\mu_f}\left(\tilde{g}^{h}, g^{h+1}, \zeta_{s}\right)\right)+\ln \frac{p(g)}{p_{0}(g)}\right] \\
&\qquad\leq \mathbb{E}_{S_{t-1}} \mathbb{E}_{f \sim \hat{p}_t(\cdot)} \Eb_{g \sim p(\cdot)}\left[-\lambda \Delta g^{1}_{\mu_f}\left(x^{1}\right)+\sum_{h=1}^{H} \alpha \eta \sum_{s=1}^{t-1}\left(\mathcal{E}^{\mu_f}_{h}\left(g ; x_{s}^{h}, a_{s}^{h},b_s^h\right)\right)^{2}+\ln \frac{p(g)}{p_{0}(g)}\right].
\end{aligned}
$$
Since $p(\cdot)$ is arbitrary, we can take $g^h \in \cF_h^\mu(\epsilon, Q^{\mu, \dagger}_{h+1})$ for all $h \in [H]$. We need to show that $g^h$ admits a small $\cT_h^\mu$-Bellman-residual. We have
$$|g^h(x,a,b) - Q^{\mu, \dagger}_{h}(x,a,b)| = |g^{h}(x,a,b) - \cT^\mu_h Q^{\mu, \dagger}_{h+1}(x,a,b)| \leq \epsilon, $$
for all $(x,a,b,h) \in \cX \times \cA \times \cB \times [H]$. Then, we have
$$|\cE^\mu_h(g;x,a,b)| \leq |g^h(x,a,b) - Q^{\mu, \dagger}_{h}(x,a,b)| + \sup_{x'} |V^\mu_{g,h+1}(x') - V^{\mu,\dagger}_{h+1}(x')| \leq 2\epsilon,$$
where we use 
$$
\begin{aligned}
|V^\mu_{g,h+1}(x') - V^{\mu,\dagger}_{h+1}(x')| &= |\inf_\nu \Db_{\mu, \nu} g(x') - \inf_\nu \Db_{\mu, \nu} Q^{\mu,\dagger}_{h+1}(x')|\\
&\leq \sup_\nu |\Db_{\mu, \nu} (g(x')-Q^{\mu,\dagger}_{h+1}(x'))|\leq \epsilon,
\end{aligned}
$$
{where we use the fact that 
$$|\inf_A f - \inf_A g| \leq \sup_A |f-g|.$$}
By taking $p(f) = p_0(f)I(f \in \cF(\epsilon, \mu_f))/p_0(\cF(\epsilon, \mu_f))$, with $\cF(\epsilon, \mu_f) = \prod_h \cF^{\mu_f}_h(\epsilon, Q^{\mu_f,\dagger}_{h+1})$, we obtain the desired result.
\end{proof}

\begin{lemma}\label{lem:mutual_information2}
For any max-player's policy $\mu$ that is induced by some $f \in \cF$, we have 
\begin{equation}
\begin{aligned}
\mathbb{E}_{g \sim \hat{p}^\mu_{t}(g)} \ln \hat{p}^\mu_{t}(g) \geq & \alpha \mathbb{E}_{g \sim \hat{p}^\mu_{t}} \ln \hat{p}^\mu_{t}(g)+(1-\alpha) \mathbb{E}_{g \sim \hat{p}^\mu_{t}} \sum_{h=1}^{H} \ln \hat{p}^\mu_{t}\left(g^{h}\right) \\
\geq & \frac{\alpha}{2} \sum_{h=1}^{H} \mathbb{E}_{g \sim \hat{p}^\mu_{t}} \ln \hat{p}^\mu_{t}\left(g^{h}, g^{h+1}\right) \\
&+(1-0.5 \alpha) \mathbb{E}_{g \sim \hat{p}^\mu_{t}} \ln \hat{p}^\mu_{t}\left(g^{1}\right)+(1-\alpha) \sum_{h=2}^{H} \mathbb{E}_{g \sim \hat{p}^\mu_{t}} \ln \hat{p}^\mu_{t}\left(g^{h}\right) .
\end{aligned}
\end{equation}
\end{lemma}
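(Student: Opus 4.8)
The plan is to observe that this statement is the exact booster-agent counterpart of Lemma~\ref{lem:mutual_information}, and that the proof of that lemma relied on nothing beyond generic information-theoretic identities: the non-negativity of KL-divergence and of mutual information. Since, for every fixed max-player policy $\mu = \mu_f$, the measure $\hat{p}^\mu_t$ is again a bona fide probability distribution over $\cF = \cF_1 \times \cdots \times \cF_H$, the same argument transfers verbatim under the substitutions $f \mapsto g$ and $\hat{p}_t \mapsto \hat{p}^\mu_t$. In particular, the concrete ingredients of the booster posterior (the loss $L^h_\mu$, the prior, or the operator $\cT_h^\mu$) never enter; the argument is purely a statement about the marginals and the joint law of $\hat{p}^\mu_t$.

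For the first inequality, I would subtract the right-hand side from the left-hand side. After the common term $\alpha\,\mathbb{E}_{g\sim\hat{p}^\mu_t}\ln\hat{p}^\mu_t(g)$ cancels, the remaining difference equals
\[
(1-\alpha)\,\mathbb{E}_{g\sim\hat{p}^\mu_t}\ln\frac{\hat{p}^\mu_t(g)}{\prod_{h=1}^H \hat{p}^\mu_t(g^h)},
\]
which is $(1-\alpha)$ times the multi-information of $g$ under $\hat{p}^\mu_t$, i.e.\ a KL-divergence between the joint law and the product of its marginals. This is non-negative, and since $\alpha \in (0,1]$ the claimed bound follows.

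For the second inequality, the plan is first to reduce it, exactly as in Lemma~\ref{lem:mutual_information}, to the single target
\[
\mathbb{E}_{g\sim\hat{p}^\mu_t}\ln\hat{p}^\mu_t(g) \geq 0.5\,\mathbb{E}_{g\sim\hat{p}^\mu_t}\ln\hat{p}^\mu_t(g^1) + 0.5\sum_{h=1}^H \mathbb{E}_{g\sim\hat{p}^\mu_t}\ln\hat{p}^\mu_t(g^h,g^{h+1}).
\]
Expanding the middle and final expressions of the lemma, the $(1-\alpha)$-weighted marginal terms for $h \geq 2$ cancel and the net difference is precisely $\alpha$ times this target. I would then establish the target by splitting the chain into odd and even blocks: grouping the consecutive pairs $(g^h, g^{h+1})$ over odd $h$ gives one inequality, and grouping them over even $h$ together with the isolated marginal $g^1$ gives another, each simply asserting that the joint entropy dominates the sum of block entropies across non-overlapping index blocks, which is non-negativity of mutual information. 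Averaging the two inequalities yields the target.

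I do not expect a genuine obstacle here, since the content is distribution-free; the only work is to confirm that the odd/even bookkeeping and the term cancellation are identical to the main-agent case, which they are. The one point meriting a line of care is that $\mu$ is itself induced by a sampled $f$; but the lemma fixes such a $\mu$, so every expectation is taken over $\hat{p}^\mu_t$ alone and no residual dependence on $f$ survives in the argument.
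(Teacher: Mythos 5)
Your proof is correct and follows essentially the same route as the paper, which likewise establishes the first inequality by recognizing the difference as $(1-\alpha)$ times a non-negative KL divergence (the multi-information) and the second by reducing to the target $\mathbb{E}\ln\hat{p}^\mu_t(g)\geq 0.5\,\mathbb{E}\ln\hat{p}^\mu_t(g^1)+0.5\sum_h\mathbb{E}\ln\hat{p}^\mu_t(g^h,g^{h+1})$ via the odd/even block decomposition and non-negativity of mutual information, applied verbatim to $\hat{p}^\mu_t$ in place of $\hat{p}_t$. (One wording nit: in terms of Shannon entropies the block inequality is subadditivity, i.e.\ the sum of block entropies dominates the joint entropy; your displayed inequalities in terms of $\mathbb{E}\ln\hat{p}^\mu_t$, which are negative entropies, are stated in the correct direction.)
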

\begin{proof}
The proof of this lemma only relies on the non-negativity of mutual information and KL-divergence. By replacing the notations in the proof of Lemma~\ref{lem:mutual_information}, we conclude the proof.
\end{proof}

\begin{lemma}
\label{lem:entropy2}
It holds that 
\begin{equation}
\begin{aligned} 
&\mathbb{E}_{S_{t-1}} \mathbb{E}_{f_t \sim \hat{p}_{t}} \Eb_{g \sim \hat{p}_t^\mu} \left(\sum_{h=1}^{H} \Phi_{t}^{h}(g)-\lambda \Delta g_{\mu_t}^{1}\left(x^{1}\right)+\ln \hat{p}^{\mu_t}_{t}(g)\right) \\
&\geq \underbrace{\mathbb{E}_{S_{t-1}} \mathbb{E}_{f_t \sim \hat{p}_{t}} \Eb_{g \sim \hat{p}_t^\mu} \left[-\lambda \Delta g^{1}_{\mu_t}\left(x^{1}\right)+(1-0.5 \alpha) \ln \frac{\hat{p}^{\mu_t}_{t}\left(g^{1}\right)}{p_{0}^{1}\left(g^{1}\right)}\right]}_{A'} \\
&\qquad+\sum_{h=1}^{H} \underbrace{0.5 \alpha \mathbb{E}_{S_{t-1}} \mathbb{E}_{f_t \sim \hat{p}_{t}}\Eb_{g \sim \hat{p}_t^{\mu_t}}\left[\eta \sum_{s=1}^{t-1} 2 \Delta L^{h}_{\mu_t}\left(g^{h}, g^{h+1}, \zeta_{s}\right)+\ln \frac{\hat{p}^{\mu_t}_{t}\left(g^{h}, g^{h+1}\right)}{p_{0}^{h}\left(g^{h}\right) p_{0}^{h+1}\left(g^{h+1}\right)}\right]}_{B_{h}'} \\
&\qquad+\sum_{h=1}^{H} \underbrace{\mathbb{E}_{S_{t-1}} \mathbb{E}_{f_t \sim \hat{p}_{t}}\Eb_{g \sim \hat{p}_t^{\mu_t}}\left[\alpha \ln \mathbb{E}_{\tilde{g}^{h} \sim p_{0}^{h}} \exp \left(-\eta \sum_{s=1}^{t-1} \Delta L^{h}_{\mu_t}\left(\tilde{g}^{h}, g^{h+1}, \zeta_{s}\right)\right)+(1-\alpha) \ln \frac{\hat{p}^{\mu_t}_{t}\left(g^{h+1}\right)}{p_{0}^{h+1}\left(g^{h+1}\right)}\right]}_{C_{h}'} .
\end{aligned}
\end{equation}
\end{lemma}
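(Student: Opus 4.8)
The plan is to mirror the proof of Lemma~\ref{lem:entropy} for the main agent, since all booster-agent quantities are exact analogues once we condition on the sampled max-player function $f_t$ and treat $\mu_t = \mu_{f_t}$ as fixed. I would first expand the left-hand side using the definition of the booster potential
$$\Phi_t^h(g,\mu_t) = -\ln p_0^h(g^h) + \alpha\eta \sum_{s=1}^{t-1} \Delta L^h_{\mu_t}(g^h, g^{h+1}; \zeta_s) + \alpha \ln \Eb_{\tilde{g}^h \sim p_0^h} \exp\Big(-\eta \sum_{s=1}^{t-1} \Delta L^h_{\mu_t}(\tilde{g}^h, g^{h+1}; \zeta_s)\Big),$$
so that $\sum_{h=1}^H \Phi_t^h(g,\mu_t) - \lambda \Delta g^1_{\mu_t}(x^1) + \ln \hat{p}_t^{\mu_t}(g)$ becomes an explicit sum of the optimistic term, the loss terms $\alpha\eta\,\Delta L^h_{\mu_t}$, the log-denominator terms, the prior terms $-\ln p_0^h(g^h)$, and the log-posterior term $\ln \hat{p}_t^{\mu_t}(g)$.

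The key step is to lower bound the entropy-type contribution $\ln \hat{p}_t^{\mu_t}(g)$ via Lemma~\ref{lem:mutual_information2}, which holds for any max-player policy $\mu$ induced by some $f\in\cF$ and hence applies verbatim with $\mu=\mu_t$. This replaces $\Eb_{g\sim\hat{p}_t^{\mu_t}}\ln\hat{p}_t^{\mu_t}(g)$ by a $(1-0.5\alpha)$-weighted $g^1$-marginal, an $\tfrac{\alpha}{2}$-weighted sum of the pairwise $(g^h,g^{h+1})$-marginals, and a $(1-\alpha)$-weighted sum of the $g^{h+1}$-marginals. I would then regroup the pieces: the optimism $-\lambda\Delta g^1_{\mu_t}(x^1)$ together with the $(1-0.5\alpha)\ln(\hat{p}_t^{\mu_t}(g^1)/p_0^1(g^1))$ term forms $A'$; each loss contribution $0.5\alpha\cdot 2\eta\sum_s\Delta L^h_{\mu_t}$ together with the pairwise marginal $\ln(\hat{p}_t^{\mu_t}(g^h,g^{h+1})/(p_0^h p_0^{h+1}))$ forms $B_h'$; and each denominator term $\alpha\ln\Eb_{\tilde{g}^h}\exp(\cdots)$ together with the $(1-\alpha)$-weighted $g^{h+1}$-marginal forms $C_h'$. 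Because every one of these identities holds pointwise for each fixed $f_t$, I would finally insert the outer expectation $\Eb_{S_{t-1}}\Eb_{f_t\sim\hat{p}_t}$, which preserves the inequality.

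The only real difference from the main-agent argument, and hence the main point requiring care, is the bookkeeping of the $f_t$-dependence carried by the loss $\Delta L^h_{\mu_t}$, the posterior $\hat{p}_t^{\mu_t}$, and the optimistic value $\Delta g^1_{\mu_t}$: one must verify that Lemma~\ref{lem:mutual_information2} and the algebraic regrouping go through uniformly in $\mu_t$ before the outer average over $f_t\sim\hat{p}_t$ is taken. Once this conditioning is in place, the proof is structurally identical to that of Lemma~\ref{lem:entropy} and reduces to the same routine calculation, with the extra expectation over $f_t$ simply riding along.
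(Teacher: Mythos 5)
Your proposal is correct and follows exactly the paper's own argument: the paper proves Lemma~\ref{lem:entropy2} by expanding the potential function $\Phi_t^h(g,\mu_t)$, applying Lemma~\ref{lem:mutual_information2} (noting it holds for any $\mu_f$, $f\in\cF$), and regrouping into $A'$, $B_h'$, $C_h'$, with the outer expectations over $S_{t-1}$ and $f_t$ carried along. Your additional care about the $f_t$-dependence is sound bookkeeping but does not change the route.
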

\begin{proof}
We use the definition of the potential function and apply Lemma~\ref{lem:mutual_information2} (note that it is valid for any $\mu_f, f \in \cF$).
\end{proof}

\begin{lemma}
\label{lem:lower_bound2}
If $\eta \beta^2\leq 0.4$, it holds that
\begin{equation}
    \begin{aligned}
    A' \geq -\lambda \Eb_{S_{t-1}} \Eb_{f_t \sim \hat{p}_t} \Eb_{g \sim \hat{p}_t^{\mu_t}} \Delta g_{\mu_t}^1(x^1),
    \end{aligned}
\end{equation}
\begin{equation}
    \begin{aligned}
B'_{h} \geq 0.25 \alpha \eta \sum_{s=1}^{t-1} \mathbb{E}_{S_{t-1}} \mathbb{E}_{f \sim \hat{p}_{t}} \Eb_{g \sim \hat{p}^{\mu_t}_t} \mathbb{E}_{\pi_{s}}\left(\mathcal{E}_{h}^{\mu_t} \left(f ; x_{s}^{h}, a_{s}^{h}, b_s^h\right)\right)^{2}
\end{aligned}
\end{equation}
\begin{equation}
    \begin{aligned}
C'_{h} \geq-\alpha \eta \epsilon(2 b+\epsilon)(t-1)-\Eb_{S_{t-1}} \Eb_{f_t \sim \hat{p}_t} \kappa^{h}_{\mu_t}(\alpha, \epsilon).
\end{aligned}
\end{equation}
\end{lemma}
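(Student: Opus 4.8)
The plan is to mirror the proof of Lemma~\ref{lem:lower_bound} for the main agent, replacing the $\cT_h$-Bellman residuals $\cE_h$ by the $\cT_h^{\mu_t}$-residuals $\cE_h^{\mu_t}$ and the excess loss $\Delta L^h$ by $\Delta L^h_{\mu_t}$, while carrying the extra outer expectation $\Eb_{f_t \sim \hat{p}_t}$ throughout. The point that makes this transfer legitimate is that the moment identities (Lemma~\ref{lem:moment2}), the log-expectation bound (Lemma~\ref{lem:ln_expectation2}), and the martingale identity (Lemma~\ref{lem:martingale}) all hold for \emph{every} policy $\mu = \mu_f$ with $f \in \cF$; hence each conditional estimate holds uniformly over the sampled $f_t$, and we may freely take the expectation over $f_t \sim \hat{p}_t$ at the end. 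For the term $A'$, I would simply invoke the non-negativity of KL divergence together with $\alpha \in (0,1]$: since $(1-0.5\alpha) \geq 0$ and $\Eb_{g \sim \hat{p}_t^{\mu_t}} \ln\!\big(\hat{p}_t^{\mu_t}(g^1)/p_0^1(g^1)\big) \geq 0$ is a relative-entropy term, the only remaining contribution is $-\lambda \Delta g^1_{\mu_t}(x^1)$, which gives the claimed bound after taking the outer expectation over $f_t$.

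For $B'_h$, the key step is to introduce the booster-agent martingale variable
$$\xi_s^h(g^h,g^{h+1},\zeta_s) = -2\eta \Delta L^h_{\mu_t}(g^h, g^{h+1}, \zeta_s) - \ln \Eb_{x_s^{h+1} \sim \Pb^h(\cdot \mid x_s^h, a_s^h, b_s^h)} \exp\big(-2\eta \Delta L^h_{\mu_t}(g^h, g^{h+1}, \zeta_s)\big),$$
which satisfies $\Eb_{S_{t-1}} \exp\big(\sum_{s=1}^{t-1} \xi_s^h\big) = 1$ by Lemma~\ref{lem:martingale} applied with the policy $\mu_t$. Applying the Gibbs variational principle (Lemma~\ref{lem: Gibbs_var}) to the joint tilted measure over $(g^h,g^{h+1})$ shows that $\Eb_{g \sim \hat{p}_t^{\mu_t}}\big[\sum_s(-\xi_s^h) + \ln\big(\hat{p}_t^{\mu_t}(g^h,g^{h+1})/(p_0^h(g^h)p_0^{h+1}(g^{h+1}))\big)\big]$ is lower bounded by the negative log of the associated normalizing constant, which is in turn $\geq 0$ after pulling $\Eb_{S_{t-1}}$ through $-\ln(\cdot)$ by convexity. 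Substituting the definition of $\xi_s^h$ and invoking the second-moment estimate of Lemma~\ref{lem:ln_expectation2} then converts the log-moment-generating term into $0.25\alpha\eta \sum_s \Eb_{\pi_s}(\cE_h^{\mu_t})^2$, as claimed.

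For $C'_h$, I would again apply Lemma~\ref{lem: Gibbs_var}, this time to the $(1-\alpha)$-tilted marginal over $g^{h+1}$, reducing $C'_h$ to $-(1-\alpha)\ln \Eb_{g^{h+1} \sim p_0^{h+1}}\big(\Eb_{g^h \sim p_0^h}\exp(-\eta \sum_s \Delta L^h_{\mu_t})\big)^{-\alpha/(1-\alpha)}$. Restricting the inner expectation to $g^h \in \cF_h^{\mu_t}(\epsilon, g^{h+1})$ and using completeness (Assumption~\ref{assu:completeness}, which guarantees $\cT_h^{\mu_t} g^{h+1} \in \cF_h$ so that the excess-loss decomposition into $\cE_h^{\mu_t}$ and a mean-zero term is valid) yields the pointwise bound $|\Delta L^h_{\mu_t}| \leq \epsilon(2\beta+\epsilon)$; the definition of $\kappa_{\mu_t}^h(\alpha,\epsilon)$ then produces the stated lower bound, with the outer $\Eb_{f_t \sim \hat{p}_t}$ retained. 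The main subtlety throughout is precisely the book-keeping of this outer expectation: because $\mu_t = \mu_{f_t}$ is itself random through $f_t$, one must verify that the martingale and completeness properties hold for each realized $\mu_t$, which is exactly why Lemmas~\ref{lem:moment2}, \ref{lem:ln_expectation2}, and \ref{lem:martingale} are stated for arbitrary $\mu = \mu_f$ rather than for a fixed policy.
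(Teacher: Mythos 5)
Your proposal is correct and follows essentially the same route as the paper's own proof: KL non-negativity for $A'$, the martingale variable $\xi_s^h$ combined with the Gibbs variational principle and Lemma~\ref{lem:ln_expectation2} for $B'_h$, and the tilted-marginal Gibbs argument with the restriction to $\cF_h^{\mu_t}(\epsilon, g^{h+1})$ and the pointwise bound $|\Delta L^h_{\mu_t}| \leq \epsilon(2\beta+\epsilon)$ for $C'_h$, with the outer expectation over $f_t \sim \hat{p}_t$ carried through exactly as in the paper. The only minor quibble is that completeness is needed to ensure $\cF_h^{\mu_t}(\epsilon, g^{h+1})$ has positive prior mass (so $\kappa_{\mu_t}^h$ is finite) rather than to validate the algebraic decomposition of the excess loss, but this does not affect the correctness of the argument.
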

\begin{proof}
The bound of $A'$ comes from the non-negativity of KL-divergence and $\alpha \in (0,1]$. To prove the lower bound of $B_h'$, we define 
$$
\begin{aligned} \xi_{s}^{h}\left(g^{h}, g^{h+1}, \mu_t, \zeta_{s}\right)=&-2 \eta \Delta L^{h}_{\mu_t}\left(g^{h}, g^{h+1}, \zeta_{s}\right) -\ln \mathbb{E}_{x_s^{h+1} \sim \Pb^h\left(\cdot \mid x_{s}^{h}, a_{s}^{h}, b_s^h\right)} \exp \left(-2 \eta \Delta L^{h}_{\mu_t}\left(g^{h}, g^{h+1}, \zeta_{s}\right)\right),
\end{aligned}
$$
where $\mu_t$ is an arbitrary policy induced by some $f_t \in \cF$. Then, for all $h \in [H]$, we have 
$$
\mathbb{E}_{S_{t-1}} \exp \left(\sum_{s=1}^{t-1} \xi_{s}^{h}\left(g^{h}, g^{h+1}, \zeta_{s}\right)\right)=1,
$$
according to Lemma~\ref{lem:martingale}. Then, by Lemma~\ref{lem: Gibbs_var}, we have
$$
\begin{aligned}
& \mathbb{E}_{g \sim \hat{p}^{\mu_t}_{t}}\left[\sum_{s=1}^{t-1}-\xi_{s}^{h}\left(g^{h}, g^{h+1}, \mu_t,\zeta_{s}\right)+\ln \frac{\hat{p}_{t}^{\mu_t}\left(g^{h}, g^{h+1}\right)}{p_{0}^{h}\left(g^{h}\right) p_{0}^{h+1}\left(g^{h+1}\right)}\right] \\
&\qquad\geq \inf_{p} \mathbb{E}_{g \sim p}\left[\sum_{s=1}^{t-1}-\xi_{s}^{h}\left(g^{h}, g^{h+1},\mu_t, \zeta_{s}\right)+\ln \frac{p\left(g^{h}, g^{h+1}\right)}{p_{0}^{h}\left(g^{h}\right) p_{0}^{h+1}\left(g^{h+1}\right)}\right] \\
&\qquad=-\ln \mathbb{E}_{g^{h} \sim p_{0}^{h}} \mathbb{E}_{g^{h+1} \sim p_{0}^{h+1}} \exp \left(\sum_{s=1}^{t-1} \xi_{s}^{h}\left(g^{h}, g^{h+1},\mu_t, \zeta_{s}\right)\right),
\end{aligned}
$$
where the last step is from some simple calculations and the fact that Lemma~\ref{lem: Gibbs_var} implies that the $\inf$ is achieved by $p(g^h,g^{h+1}) \propto p_0^h(g^h)p_0^{h+1}(g^{h+1}) \exp(\sum_{s=1}^{t-1} \xi_{s}^{h}\left(g^{h}, g^{h+1},\mu_t, \zeta_{s}\right))$.

This implies that 
$$
\begin{aligned}
& \mathbb{E}_{S_{t-1}} \mathbb{E}_{f \sim \hat{p}_{t}} \Eb_{g \sim \hat{p}^{\mu_t}_t} \left[\sum_{s=1}^{t-1}-\xi_{s}^{h}\left(f^{h}, f^{h+1}, \mu_t,\zeta_{s}\right)+\ln \frac{\hat{p}^{\mu_t}_{t}\left(f^{h}, f^{h+1}\right)}{p_{0}^{h}\left(f^{h}\right) p_{0}^{h+1}\left(f^{h+1}\right)}\right] \\
&\qquad\geq -\mathbb{E}_{S_{t-1}} \mathbb{E}_{f \sim \hat{p}_{t}}\ln \mathbb{E}_{g^{h} \sim p_{0}^{h}} \mathbb{E}_{g^{h+1} \sim p_{0}^{h+1}} \exp \left(\sum_{s=1}^{t-1} \xi_{s}^{h}\left(g^{h}, g^{h+1},\mu_t, \zeta_{s}\right)\right) \\
&\qquad\geq -\ln \mathbb{E}_{g^{h} \sim p_{0}^{h}} \mathbb{E}_{g^{h+1} \sim p_{0}^{h+1}} \mathbb{E}_{f \sim \hat{p}_{t}} \mathbb{E}_{S_{t-1}} \exp \left(\sum_{s=1}^{t-1} \xi_{s}^{h}\left(g^{h}, g^{h+1},\mu_t, \zeta_{s}\right)\right)=0,
\end{aligned}
$$
where we use the convexity of $-\ln(\cdot)$. With this result, the definition of $B_h'$ and the definition of $\xi_{s}^{h}\left(g^{h}, g^{h+1}, \mu_t, \zeta_{s}\right)$, we have 
$$
\begin{aligned}
B_h' &=0.5 \alpha \mathbb{E}_{S_{t-1}} \mathbb{E}_{f_t \sim \hat{p}_{t}}\Eb_{g \sim \hat{p}_t^{\mu_t}}\left[\eta \sum_{s=1}^{t-1} 2 \Delta L^{h}_{\mu_t}\left(g^{h}, g^{h+1}, \zeta_{s}\right)+\ln \frac{\hat{p}^{\mu_t}_{t}\left(g^{h}, g^{h+1}\right)}{p_{0}^{h}\left(g^{h}\right) p_{0}^{h+1}\left(g^{h+1}\right)}\right]\\
&\geq 0.5\alpha \Eb_{S_{t-1}} \Eb_{f_t \sim \hat{p}_t} \Eb_{g \sim \hat{p}^{\mu_t}_t} \sum_{s=1}^{t-1}-\ln \mathbb{E}_{x_s^{h+1} \sim \Pb^h\left(\cdot \mid x_{s}^{h}, a_{s}^{h}, b_s^h\right)} \exp \left(-2 \eta \Delta L^{h}_{\mu_t}\left(g^{h}, g^{h+1}, \zeta_{s}\right)\right)\\
&\geq -0.5 \alpha \eta \sum_{s=1}^{t-1} \frac{1}{2}\mathbb{E}_{S_{t-1}} \mathbb{E}_{f \sim \hat{p}_{t}} \Eb_{g \sim \hat{p}^{\mu_t}_t} \mathbb{E}_{\pi_{s}} (\cE^{\mu_t}_h(g;x_s^h, a_s^h, r_s^h))^2,
\end{aligned}
$$
where we use Lemma~\ref{lem:ln_expectation2} in the last step. 

We now turn to the lower bound of $C_h$. For any max-player's policy $\mu$, we have 
$$
\begin{aligned}
& \mathbb{E}_{g \sim \hat{p}^\mu_{t}}\left[\alpha \ln \mathbb{E}_{\tilde{g}^{h} \sim p_{0}^{h}} \exp \left(-\eta \sum_{s=1}^{t-1} \Delta L^{h}_\mu \left(\tilde{g}^{h}, g^{h+1}, \zeta_{s}\right)\right)+(1-\alpha) \ln \frac{\hat{p}^\mu_{t}\left(g^{h+1}\right)}{p_{0}^{h+1}\left(g^{h+1}\right)}\right] \\
\geq &(1-\alpha) \inf_{p^h} \mathbb{E}_{g \sim p^h}\left[\frac{\alpha}{1-\alpha} \ln \mathbb{E}_{\tilde{g}^{h} \sim p_{0}^{h}} \exp \left(-\eta \sum_{s=1}^{t-1} \Delta L^{h}_\mu \left(\tilde{g}^{h}, g^{h+1}, \zeta_{s}\right)\right)+\ln \frac{p^h\left(g^{h+1}\right)}{p_{0}^{h+1}\left(g^{h+1}\right)}\right] \\
=&-(1-\alpha) \ln \mathbb{E}_{g^{h+1} \sim p_{0}^{h+1}}\left(\mathbb{E}_{g^{h} \sim p_{0}^{h}} \exp \left(-\eta \sum_{s=1}^{t-1} \Delta L^{h}_\mu\left(g^{h}, g^{h+1}, \zeta_{s}\right)\right)\right)^{-\alpha /(1-\alpha)},
\end{aligned}
$$
where we use the fact that the $\inf$ is achieved at 
$$
p^h\left(g^{h+1}\right) \propto p_{0}^{h+1}\left(g^{h+1}\right)\left(\mathbb{E}_{g^{h} \sim p_{0}^{h}} \exp \left(-\eta \sum_{s=1}^{t-1} \Delta L^{h}_\mu\left(g^{h}, g^{h+1}, \zeta_{s}\right)\right)\right)^{-\alpha /(1-\alpha)}.
$$
We now consider a fixed $g^h \in \cF_h^\mu(\epsilon, g^{h+1})$. Using the same arguments as in the proof of Lemma~\ref{lem:lower_bound}, it holds that 
$$
\left|\Delta L^{h}_\mu\left(g^{h}, g^{h+1}, \zeta_{s}\right)\right| \leq\left(\mathcal{E}^\mu_{h}\left(g, x_{s}^{h}, a_{s}^{h}\right)\right)^{2}+2 b\left|\mathcal{E}^\mu_{h}\left(g, x_{s}^{h}, a_{s}^{h}\right)\right| \leq \epsilon(2 b+\epsilon).
$$
It follows that 
$$
\mathbb{E}_{g^{h} \sim p_{0}^{h}} \exp \left(-\eta \sum_{s=1}^{t-1} \Delta L^{h}_\mu \left(g^{h}, g^{h+1}, \zeta_{s}\right)\right) \leq p_{0}^{h}\left(\mathcal{F}^\mu_{h}\left(\epsilon, g^{h+1}\right)\right) \exp (-\eta(t-1)(2 b+\epsilon) \epsilon)
$$
Thus, we have
\begin{align*}
C_h &\geq \alpha \Eb_{S_{t-1}} \Eb_{f \sim \hat{p}_t} \ln \Eb_{f^{h+1} \sim p_0^{h+1}} p_0^h(\cF^{\mu_t}_h(\epsilon, g^{h+1})) \exp(-\eta(t-1)(2b+\epsilon)\epsilon)\\
&= -\alpha\eta\epsilon(2b+\epsilon)(t-1) + \alpha \Eb_{S_{t-1}} \Eb_{f \sim \hat{p}_t} \ln \Eb_{g^{h+1} \sim p_0^{h+1}} p_0^h(\cF^{\mu_t}_h(\epsilon, g^{h+1})) \\
&\geq -\alpha\eta\epsilon(2b+\epsilon)(t-1) - \Eb_{S_{t-1}} \Eb_{f_t \sim \hat{p}_t} \kappa^h_{\mu_t}(\alpha, \epsilon),
\end{align*}
where we use the definition
$$\kappa^{h}_\mu(\alpha, \epsilon)=(1-\alpha) \ln \mathbb{E}_{g^{h+1} \sim p_{0}^{h+1}} p_{0}^{h}\left(\mathcal{F}^{\mu}_{h}\left(\epsilon, g^{h+1}\right)\right)^{-\alpha /(1-\alpha)}.$$
\end{proof}

We are ready to prove Theorem~\ref{thm:booster agent}.
\begin{proof}[Proof of Theorem~\ref{thm:booster agent}.]
Let $\pi_t$ denote the distribution induced by $\mu_t \times \nu_t$ and define
$$\delta_t^h = -\lambda \cE^{\mu_t}_h(g_t, x_t^h, a_t^h, b_t^h) - 0.25 \alpha \eta \sum_{s=1}^{t-1} \Eb_{\pi_s} \left(\cE^{\mu_t}_h(g_t, x_t^h, a_t^h, b_t^h)\right)^2.$$
According to the value-decomposition Lemma~\ref{lem:value_decom_booster agent}, we have
$$
\begin{aligned}
&\Eb_{S_{t-1}} \Eb_{f_t \sim \hat{p}_t}\Eb_{g_t \sim \hat{p}^{\mu_t}_t}  \lambda (V_1^{\mu_t, \nu_t}(x^1) - V_1^{\mu_t, \dagger}(x^1)) - \Eb_{S_{t-1}} \Eb_{f_t \sim \hat{p}_t}\Eb_{g_t \sim \hat{p}^{\mu_t}_t}  \Eb_{\zeta_t \sim \pi_t} \sum_{h=1}^H \delta_t^h\\
&= -\lambda \Eb_{S_{t-1}} \Eb_{f_t \sim \hat{p}_t}\Eb_{g_t \sim \hat{p}^{\mu_t}_t}  \Delta g^{\mu_t}_{t,1}(x^1) + 0.25\alpha \eta \sum_{h=1}^H \sum_{s=1}^{t-1} \Eb_{S_{t-1}} \Eb_{f_t \sim \hat{p}_t} \Eb_{g_t \sim \hat{p}^{\mu_t}_t} \Eb_{\pi_s} \left(\cE_h^{\mu_t}(g_t, x_t^h, a_t^h, b_t^h)\right)^2\\
&\leq \mathbb{E}_{S_{t-1}} \Eb_{f_t \sim \hat{p}_t} \mathbb{E}_{g_t \sim \hat{p}_{t}}\left(\sum_{h=1}^{H} \Phi_{t}^{h}(g_t, \mu_t)-\lambda \Delta g^{\mu_t}_{t,1}(x^1)+\ln \hat{p}^{\mu_t}_{t}(g_t)\right)\\
&\qquad+\alpha \eta \epsilon(2 b+\epsilon)(t-1) H+\Eb_{S_{t-1}} \Eb_{f_t \sim \hat{p}_t}\sum_{h=1}^{H} \kappa^{h}_{\mu_t}(\alpha, \epsilon)\\
&= \mathbb{E}_{S_{t-1}}\Eb_{f_t \sim \hat{p}_t} \inf _{p} \mathbb{E}_{g \sim p}\left(\sum_{h=1}^{H} \Phi_{t}^{h}(g, \mu_t)-\lambda \Delta g^{\mu_t}_{t,1}(x^1)+\ln p(g)\right) \\
&\qquad+\alpha \eta \epsilon(2 \beta+\epsilon)(t-1) H+\Eb_{S_{t-1}} \Eb_{f_t \sim \hat{p}_t} \sum_{h=1}^{H} \kappa^{h}_{\mu_t}(\alpha, \epsilon) \\
&\leq\lambda \epsilon+\alpha \eta \epsilon(\epsilon+4 \epsilon+2 \beta)(t-1) H-\Eb_{S_{t-1}} \Eb_{f_t \sim \hat{p}_t}\sum_{h=1}^{H} \ln p_{0}^{h}(\mathcal{F}_h( \epsilon, Q_{h+1}^{\mu_t,\dagger}))+\Eb_{S_{t-1}} \Eb_{f_t \sim \hat{p}_t}\sum_{h=1}^{H} \kappa^{h}_{\mu_t}(\alpha, \epsilon).
\end{aligned}
$$
Summing over $t$, we obtain that
$$
\begin{aligned}
&\sum_{t=1}^T \Eb_{S_{t-1}} \Eb_{f_t \sim \hat{p}_t} \mathbb{E}_{g_t \sim \hat{p}_{t}} (V_1^{\mu_t, \nu_t}(x^1) - V_1^{\mu_t, \dagger}(x^1))  \\
&\leq \epsilon T + \frac{1}{\lambda} \alpha \eta (5\epsilon + 2\beta) \frac{T(T-1)}{2} H - \frac{1}{\lambda}\sum_{t=1}^T \Eb_{S_{t-1}} \Eb_{f_t \sim \hat{p}_t}\sum_{h=1}^{H} \ln p_{0}^{h}(\mathcal{F}_h( \epsilon, Q_{h+1}^{\mu_t,\dagger})) \\
&+ \frac{1}{\lambda} \sum_{t=1}^T\Eb_{S_{t-1}} \Eb_{f_t \sim \hat{p}_t} \sum_{h=1}^{H} \kappa^{h}_{\mu_t}(\alpha, \epsilon) + \frac{\lambda}{\alpha \eta} dc(\cF, MG, T, 0.25\alpha \eta /\lambda)\\
&\leq O(\beta\sqrt{dc(\cF,MG,T)\kappa(\beta/T^2)T} + dc(\cF,MG,T)). 
\end{aligned}
$$
The last step is proved as follows. We find an upper bound for $\Eb_{S_{t-1}} \Eb_{f_t \sim \hat{p}_t}\sum_{t=1}^T \sum_{h=1}^{H} \kappa^{h}_{\mu_t}(\alpha, \epsilon)$. We note that for all $\mu_t$, $\kappa_{\mu_t}^h(\alpha, \epsilon)$ is increasing w.r.t. $\alpha$ with the limit $\kappa_{\mu_t}^h(1,\epsilon) \leq \kappa(\epsilon) < \infty$. By monotone convergence theorem, we know that 
$$\Eb_{S_{t-1}} \Eb_{f_t \sim \hat{p}_t}\sum_{t=1}^T \sum_{h=1}^{H} \kappa^{h}_{\mu_t}(\alpha, \epsilon) \to \Eb_{S_{t-1}} \Eb_{f_t \sim \hat{p}_t}\sum_{t=1}^T \sum_{h=1}^{H} \kappa^{h}_{\mu_t}(1, \epsilon) = \sum_{t=1}^T\Eb_{S_{t-1}} \Eb_{f_t \sim \hat{p}_t} \kappa_{\mu_t}(\epsilon) \leq T\kappa(\epsilon).$$
We also have
$$\Eb_{S_{t-1}} \Eb_{f_t \sim \hat{p}_t}\sum_{t=1}^T \sum_{h=1}^{H} -\ln p_{0}^{h}\left(\mathcal{F}_h{(} \epsilon, Q_{h+1}^{\mu_t,\dagger})\right) \leq \Eb_{S_{t-1}} \Eb_{f_t \sim \hat{p}_t}\sum_{t=1}^T  \kappa_{\mu_t}(\epsilon) \leq T\kappa(\epsilon).$$
It follows that 
$$
\begin{gathered}
- \sum_{t=1}^T \Eb_{S_{t-1}} \Eb_{f_t \sim \hat{p}_t}\sum_{h=1}^{H} \ln p_{0}^{h}(\mathcal{F}_h( \epsilon, Q_{h+1}^{\mu_t,\dagger})) + \sum_{t=1}^T \Eb_{S_{t-1}} \Eb_{f_t \sim \hat{p}_t}\sum_{h=1}^{H} \kappa^{h}_{\mu_t}(1, \epsilon) \leq 2T\kappa(\epsilon).
\end{gathered}
$$
Now we first let $\alpha \to 1^{-}$. Then, we take $\epsilon = \frac{\beta}{T^2}$, $\lambda = \sqrt{\frac{T\kappa(\beta/T^2)}{\beta^2dc(\cF,MG,T)}}$, $\eta  = \frac{1}{4\beta^2}$. This concludes the proof.
\end{proof}

\section{Proof of the Value-Decomposition Lemma} \label{sec:value_decom}
\begin{proof}[Proof of Lemma~\ref{lem:value_decom_main}]
Let $\mu = \mu_f$ and $\nu$ be an arbitrary policy taken by the min-player.
$$
\begin{aligned}
&V^*_1(x^1) - V_1^{\mu, \nu}(x^1) \\
&\qquad= \sum_{h=1}^H \Eb_{\mu, \nu} V_{f,h}(x^h) - r^h(x^h, a^h, b^h) - V_{f, h+1}(x^{h+1}) + V^*_1(x^1)-V_{f,1}(x^1)\\
&\qquad= \sum_{h=1}^H \Eb_{\mu, \nu}  \min_{\nu'} \Db_{\mu, \nu'} f(x^h) - r^h(x^h, a^h, b^h) - V_{f, h+1}(x^{h+1}) + V^*_1(x^1)-V_{f,1}(x^1)\\
&\qquad\leq \sum_{h=1}^H \Eb_{\mu, \nu}  \Db_{\mu, \nu} f^h(x^h) - r^h(x^h, a^h, b^h) - V_{f, h+1}(x^{h+1}) + V^*_1(x^1)-V_{f,1}(x^1)\\
&\qquad= \sum_{h=1}^H \Eb_{\mu, \nu} f^h(x^h, a^h, b^h) - r^h(x^h, a^h, b^h) - V_{f, h+1}(x^{h+1})+V^*_1(x^1)-V_{f,1}(x^1)\\
&\qquad= \sum_{h=1}^H \Eb_{\mu, \nu} \cE_h(f^h, f^{h+1}, \zeta) +V^*_1(x^1)-V_{f,1}(x^1),
\end{aligned}
$$
where the first equality comes from the value-decomposition Theorem \cite{jiang@2017} (can be verified easily by telescope sum and $V^{H+1}=0$); the second equality is because of the definition of $\mu = \mu_{f, h}(x)=\underset{\mu \in \Delta_{\mathcal{A}}}{\operatorname{argmax}} \min _{\nu \in \Delta_{\mathcal{B}}} \mu^{\top} f^{h}(x, \cdot, \cdot) \nu
$; the inequality comes from the fact that $\mu = \mu_f$ and $\nu$ may not be $\argmin_{\nu'} \Db_{\mu, \nu'} f(x^h)$. {This decomposition accounts for the use of an optimistic prior in Algorithm~\ref{alg:main}}.
\end{proof}

\begin{proof}[Proof of Lemma~\ref{lem:value_decom_booster agent}]
Suppose that $\mu = \mu_f$ is taken by the max-player and $g$ is sampled from the posterior by the booster agent. Let $\nu$ be given by $\nu = \argmin_{\nu'} V^\mu_h(x)$ for all $(x,h)$. Then, we have:
$$
\begin{aligned}
&V_1^{\mu, \dagger}(x^1) - V_1^{\mu, \nu}(x^1) \\
&\qquad= V_{g,1}^{\mu}(x^1) - V_1^{\mu, \nu}(x^1) + V_1^{\mu_t, \dagger}(x^1) -V_{g,1}^{\mu}(x^1)\\
&\qquad= \sum_{h=1}^H \Eb_{\mu, \nu} \Db_{\mu, \nu} g(x^h) - r^h(x^h, a^h, b^h) - V^\mu_{g, h+1}(x^{h+1})+ V_1^{\mu, \dagger}(x^1) -V_{g,1}^{\mu}(x^1)\\
&\qquad= \sum_{h=1}^H \Eb_{\mu, \nu} g^h(x^h, a^h, b^h) - r^h(x^h, a^h, b^h) - V^\mu_{g, h+1}(x^{h+1})+ V_1^{\mu, \dagger}(x^1) -V_{g,1}^{\mu}(x^1)\\
&\qquad= \sum_{h=1}^H \Eb_{\mu, \nu} \cE^\mu_h(g^h, g^{h+1}, \zeta) + V_1^{\mu, \dagger}(x^1) -V_{g,1}^{\mu}(x^1).
\end{aligned}
$$
This decomposition accounts for the use of an optimistic prior in Algorithm~\ref{alg:booster agent}.
\end{proof}

\section{Proof of the Decoupling Coefficient Bounds} \label{appen:dc_bounds}
In this section, we provide proofs for the decoupling coefficient bounds. We need the following lemma.
\begin{lemma}[Elliptical Potential Lemma, Lemma $10$ of \citet{xie2020learning}] Suppose $\{\phi_t\}_{t\geq 0}$ is a sequence in $\RR^d$ satisfying $\norm{\phi_t} \leq 1$. Let $\Lambda_0 \in \RR^{d\times d}$ be a positive definite matrix, and $\Lambda_t = \Lambda_0 + \sum_{i=1}^t \phi_i\phi_i^\top$. If the smallest eigenvalues of $\Lambda_0$ is lower bounded by $1$, then 
$$
\log \left(\frac{\operatorname{det} \Lambda_{t}}{\operatorname{det} \Lambda_{0}}\right) \leq \sum_{i \in[t]} \phi_{i}^{\top} \Lambda_{j-1}^{-1} \phi_{i} \leq 2 \log \left(\frac{\operatorname{det} \Lambda_{t}}{\operatorname{det} \Lambda_{0}}\right).
$$
\end{lemma}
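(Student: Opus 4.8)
The plan is to reduce the two-sided bound to a telescoping sum of logarithms via the rank-one determinant update, and then control each summand with elementary scalar inequalities. First I would observe that each $\Lambda_{i-1}$ is positive definite (hence invertible) and $\Lambda_i = \Lambda_{i-1} + \phi_i \phi_i^\top$ is a rank-one update, so the matrix determinant lemma gives $\det \Lambda_i = \det \Lambda_{i-1}\,(1 + \phi_i^\top \Lambda_{i-1}^{-1}\phi_i)$. Taking logarithms and summing over $i \in [t]$ telescopes to
\[
\log\frac{\det \Lambda_t}{\det \Lambda_0} = \sum_{i\in[t]} \log\left(1 + \phi_i^\top \Lambda_{i-1}^{-1}\phi_i\right),
\]
which is the identity linking the determinant ratio to the per-step quantities $z_i := \phi_i^\top \Lambda_{i-1}^{-1}\phi_i \ge 0$.

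For the left inequality I would simply invoke $\log(1+z) \le z$ for $z \ge 0$, so that the displayed sum is at most $\sum_{i\in[t]} z_i$, giving $\log(\det\Lambda_t/\det\Lambda_0) \le \sum_{i\in[t]} \phi_i^\top \Lambda_{i-1}^{-1}\phi_i$. For the right inequality I need the reverse comparison $z \le 2\log(1+z)$, which holds only on a bounded range, so the key preliminary step is to show $z_i \le 1$. This is exactly where the hypotheses enter: since $\Lambda_{i-1} = \Lambda_0 + \sum_{j<i}\phi_j\phi_j^\top \succeq \Lambda_0 \succeq I$ (using $\lambda_{\min}(\Lambda_0)\ge 1$ together with the positive semidefiniteness of the added rank-one terms), we get $\Lambda_{i-1}^{-1} \preceq I$ and hence $z_i \le \norm{\phi_i}^2 \le 1$.

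It then remains to verify the elementary inequality $z \le 2\log(1+z)$ for $z \in [0,1]$: setting $g(z) = 2\log(1+z) - z$, one checks $g(0)=0$ and $g'(z) = (1-z)/(1+z) \ge 0$ on $[0,1]$, so $g \ge 0$ there. Applying this termwise to the telescoped identity yields $\sum_{i\in[t]} z_i \le 2\sum_{i\in[t]}\log(1+z_i) = 2\log(\det\Lambda_t/\det\Lambda_0)$, which completes the right inequality. There is no genuine obstacle here, as this is a standard elliptical-potential computation; the only point demanding care is confirming the range $z_i \in [0,1]$, since the two-sided comparison between $z$ and $\log(1+z)$ fails for large $z$, and the normalization assumptions $\norm{\phi_i}\le 1$ and $\lambda_{\min}(\Lambda_0)\ge 1$ are precisely what guarantee this range.
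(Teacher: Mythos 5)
Your proof is correct and is the standard argument for the elliptical potential lemma; the paper itself does not prove this statement but imports it from Lemma 10 of \citet{xie2020learning}, whose proof proceeds by exactly the same route (rank-one determinant update, telescoping, $\log(1+z)\le z\le 2\log(1+z)$ on $[0,1]$, with $\norm{\phi_i}\le 1$ and $\lambda_{\min}(\Lambda_0)\ge 1$ ensuring $z_i\le 1$). You also correctly read the $\Lambda_{j-1}^{-1}$ in the statement as the typo it is for $\Lambda_{i-1}^{-1}$.
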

\begin{proof}[Proof of Proposition~\ref{prop:linear}] 
We first note that the completeness assumption is satisfied in linear MG case whose proof can be found in \citet{huang2021towards}. Now we consider two arbitrary ${\theta^h}, \theta^{h+1}$ whose norms are bounded by $H\sqrt{d}$ and $f\in \cF$. We also define a function $g \in \cF$ s.t. $g^h = g({\theta^h})$ and $g^{h+1} = g({\theta^{h+1}})$.  By Assumption~\ref{assu:completeness}, we can find some $\theta^h(f) \in \RR^d$ with $\norm{\theta^h(f)} \leq H\sqrt{d}$ s.t. $\cT^{\mu_f}_h(\phi(x,a,b)^\top \theta^{h+1}) = \phi(x,a,b)^\top \theta^h(f)$. Therefore, we have
$$\cE_h^{\mu_f}(g;x,a,b) = \phi(x,a,b)^\top(\theta^h - \theta^h(f)) = \phi(x,a,b)^\top w^h(f,g),$$
where $w^h(f,g) \in \RR^d$ satisfies $\norm{w^h(f,g)} \leq 2H\sqrt{d}$. We denote $\phi^h_s = \Eb_{\pi_s} [\phi(x^h,a^h,b^h)]$ and $\Phi_t^h = \lambda I + \sum_{s=1}^t \phi(x^h,a^h,b^h) \phi(x^h,a^h,b^h)^\top$ where $\lambda \geq 1$ is a tuning parameter. Then, we have
$$
\begin{aligned}
&\mathbb{E}_{\pi_{{t}}}\left[\mathcal{E}^{\mu_{f_t}}_{h}\left(g_{t} ; x_{t}^{h}, a_{t}^{h}, b_t^h\right)\right]-\mu \sum_{s=1}^{t-1} \mathbb{E}_{\pi_{{s}}}\left[\mathcal{E}_{h}^{\mu_{f_t}}\left(g_{t} ; x_{s}^{h}, a_{s}^{h}\right)^{2}\right] \\
&=w^{h}\left(f_{t},g_t\right)^{\top} \phi_{t}^{h}-\mu w^{h}\left(f_{t},g_t\right)^{\top} \sum_{s=1}^{t-1} \mathbb{E}_{\pi_{{s}}}\left[\phi\left(x^{h}, a^{h}, b^h\right) \phi\left(x^{h}, a^{h}, b^h\right)^{\top}\right] w^{h}\left(f_{t},g_t\right) \\
&\leq w^{h}\left(f_{t}, g_t\right)^{\top} \phi_{t}^{h}-\mu w^{h}\left(f_{t}, g_t\right)^{\top} \Phi_{t-1}^{h} w^{h}\left(f_{t}, g_t\right)+4\mu \lambda dH^2 \\
&\leq \frac{1}{4 \mu}\left(\phi_{t}^{h}\right)^{\top}\left(\Phi_{t-1}^{h}\right)^{-1} \phi_{t}^{h}+4\mu \lambda dH^2
\end{aligned}
$$
where the first inequality uses Jensen's inequality and $\norm{w^h(f_t,g_t)} \leq  2H\sqrt{d}$ and the second inequality comes from the fact $(a^\top b) \leq (\norm{a}_{\Phi_{t-1}^h} \norm{b}_{(\Phi_{t-1}^h)^{-1}}) \leq \frac{1}{2}(\norm{a}^2_{\Phi_{t-1}^h} + \norm{b}^2_{(\Phi_{t-1}^h)^{-1}})$. Summing over $t \in [T]$ and $h \in [H]$, we have
$$
\begin{aligned}
&\sum_{t=1}^T \sum_{h=1}^H \mathbb{E}_{\pi_{{t}}}\left[\mathcal{E}^{\mu_{f_t}}_{h}\left(g_{t} ; x_{t}^{h}, a_{t}^{h}, b_t^h\right)\right]-\mu \sum_{s=1}^{t-1} \mathbb{E}_{\pi_{{s}}}\left[\mathcal{E}_{h}^{\mu_{f_t}}\left(g_{t} ; x_{s}^{h}, a_{s}^{h}\right)^{2}\right] \\
&\leq \sum_{h=1}^H \left[\frac{\ln(\text{det}(\Phi_T^h)-d\ln(\lambda)}{2\mu} + 4\mu \lambda dH^2 T\right] \\
&\leq H(\frac{d\ln(\lambda + T/d) - d\ln(\lambda)}{2\mu} + 4\mu \lambda dH^2 T),
\end{aligned}
$$
where we use the Elliptical Potential lemma in the first inequality and the second inequality uses 
$$\ln \text{det}(\Phi_T^h) \leq d\ln \frac{\text{trace}(\Phi_T^h)}{d}, \text{ and, }\text{trace}(\Phi_t^h) \leq \lambda d + T.$$
By setting $\lambda = \min\{1, \frac{1}{\mu^2H^2T}\}$, we conclude the proof.
\end{proof}

\begin{proof}[Proof of Proposition~\ref{prop:gen_linear}]
We assume that $c_1 \leq 1 \leq c_2$. Otherwise, we can scale the feature maps and the link function accordingly. By similar arguments with the completeness assumption as in the proof of Proposition~\ref{prop:linear}, we have
$$\cE_h^{\mu_f}(g;x,a,b) = \sigma(\phi(x,a,b)^\top\theta^h) - \sigma(\phi(x,a,b)^\top\theta^h(f)).$$
By the Lipschitz property, we have 
$$c_1|\phi(x,a,b)^\top w(f,g)| \leq |\cE^{\mu_f}_h(g;x,a,b)| \leq c_2 |\phi(x,a,b)^\top w(f,g)|,$$
for some $w(f,g) \in \RR^d$ satisfying $w(f,g) \leq  2H\sqrt{d}$. We denote $\phi_h^s = \Eb_{\pi_s} [\phi(x^h,a^h,b^h)]$ and $\Phi_t^h = \lambda I + \sum_{s=1}^t \phi(x^h,a^h,b^h) \phi(x^h,a^h,b^h)^\top$ where $\lambda \geq 1$ is a tuning parameter. Then, we have
$$
\begin{aligned}
&\mathbb{E}_{\pi_{{t}}}\left[\mathcal{E}^{\mu_{f_t}}_{h}\left(g_{t} ; x_{t}^{h}, a_{t}^{h}, b_t^h\right)\right]-\mu \sum_{s=1}^{t-1} \mathbb{E}_{\pi_{{s}}}\left[\mathcal{E}_{h}^{\mu_{f_t}}\left(g_{t} ; x_{s}^{h}, a_{s}^{h}\right)^{2}\right] \\
&\leq c_2|w^{h}\left(f_{t},g_t\right)^{\top} \phi_{t}^{h}|-\mu c_1^2 w^{h}\left(f_{t},g_t\right)^{\top} \sum_{s=1}^{t-1} \mathbb{E}_{\pi_{{s}}}\left[\phi\left(x^{h}, a^{h}, b^h\right) \phi\left(x^{h}, a^{h}, b^h\right)^{\top}\right] w^{h}\left(f_{t},g_t\right) \\
&\leq c_2|w^{h}\left(f_{t}, g_t\right)^{\top} \phi_{t}^{h}|-\mu c_1^2 w^{h}\left(f_{t}, g_t\right)^{\top} \Phi_{t-1}^{h} w^{h}\left(f_{t}, g_t\right)+4\mu c_1^2 \lambda dH^2 \\
&\leq \frac{c_2^2}{4 \mu c_1^2}\left(\phi_{t}^{h}\right)^{\top}\left(\Phi_{t-1}^{h}\right)^{-1} \phi_{t}^{h}+4\mu c_1^2\lambda dH^2.
\end{aligned}
$$
Summing over $t \in [T]$ and $h \in [H]$, we have
$$
\begin{aligned}
&\sum_{t=1}^T \sum_{h=1}^H \mathbb{E}_{\pi_{{t}}}\left[\mathcal{E}^{\mu_{f_t}}_{h}\left(g_{t} ; x_{t}^{h}, a_{t}^{h}, b_t^h\right)\right]-\mu \sum_{s=1}^{t-1} \mathbb{E}_{\pi_{{s}}}\left[\mathcal{E}_{h}^{\mu_{f_t}}\left(g_{t} ; x_{s}^{h}, a_{s}^{h}\right)^{2}\right] \\
&\leq \sum_{h=1}^H c_2^2 \left[\frac{\ln(\text{det}(\Phi_T^h)-d\ln(\lambda)}{2\mu c_1^2} + 4\mu \lambda c_1^2 H^2d T\right] \\
&\leq Hc_2^2(\frac{d\ln(\lambda + T/d) - d\ln(\lambda)}{2\mu c_1^2} + 4\mu \lambda c_1^2 H^2d T).
\end{aligned}
$$
Setting $\lambda = \min\{1, \frac{1}{\mu^2c_1^2H^2T}\}$ concludes the proof.
\end{proof}

In what follows, we prove the reduction of Bellman-Eluder dimension to the decoupling coefficient following the analysis of \citet{dann2021provably}. From a high level, the multi-agent decoupling coefficient bounds the \textit{out-of-sample} prediction error by the \textit{in-sample} error. In particular, we remark that the existing techniques in the literature of Eluder dimension (e.g., Lemma 41 of \citet{jin2021bellman}) is not sufficient for our needs. This is because a deterministic upper bound of the in-sample error, i.e., the confidence radius, is not available for the posterior sampling. We start the following lemma from \citet{dann2021provably}.

\begin{lemma} \label{lem:bed_proof} For any sequence of positive reals $x_1,\cdots,x_n$, we have 
$$
f(x):=\frac{\sum_{i=1}^{n} x_{i}}{\sqrt{\sum_{i=1}^{n} i x_{i}^{2}}} \leq \sqrt{1+\log (n)}.
$$
\end{lemma}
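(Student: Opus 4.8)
The plan is to reduce the claim to the standard bound on the partial harmonic sum via a single, well-chosen application of the Cauchy--Schwarz inequality. The key observation is that the weight $i$ appearing inside the square root in the denominator suggests splitting each numerator term as $x_i = (\sqrt{i}\,x_i)\cdot(1/\sqrt{i})$, so that the factor $\sqrt{i}\,x_i$ recombines into exactly the quantity $\sum_i i x_i^2$ under that square root.

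First I would apply Cauchy--Schwarz in this form:
$$\sum_{i=1}^n x_i = \sum_{i=1}^n \bigl(\sqrt{i}\,x_i\bigr)\cdot \frac{1}{\sqrt{i}} \leq \sqrt{\sum_{i=1}^n i\,x_i^2}\;\cdot\;\sqrt{\sum_{i=1}^n \frac{1}{i}}.$$
Since the $x_i$ are positive reals, the denominator $\sqrt{\sum_{i=1}^n i x_i^2}$ is strictly positive, so dividing through yields
$$f(x) = \frac{\sum_{i=1}^n x_i}{\sqrt{\sum_{i=1}^n i x_i^2}} \leq \sqrt{\sum_{i=1}^n \frac{1}{i}},$$
which isolates the harmonic sum as the only remaining quantity to control.

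It then remains to bound that partial harmonic sum. I would use the integral comparison $\tfrac{1}{i}\leq \int_{i-1}^i \tfrac{1}{x}\,dx$ for each $i\geq 2$, which is valid because $x\mapsto 1/x$ is decreasing, to obtain
$$\sum_{i=1}^n \frac{1}{i} = 1 + \sum_{i=2}^n \frac{1}{i} \leq 1 + \int_1^n \frac{1}{x}\,dx = 1 + \log(n).$$
Combining the two displays and using monotonicity of the square root gives the claimed bound $f(x)\leq\sqrt{1+\log(n)}$. The argument is a routine calculation once the correct Cauchy--Schwarz split is identified; the only point requiring any thought is the choice of pairing $\sqrt{i}\,x_i$ against $1/\sqrt{i}$, which is precisely what converts the denominator into the harmonic series, so I expect no genuine obstacle here.
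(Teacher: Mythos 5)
Your proof is correct and complete: the Cauchy--Schwarz split $x_i = (\sqrt{i}\,x_i)\cdot(1/\sqrt{i})$ followed by the integral bound $\sum_{i=1}^n 1/i \leq 1+\log(n)$ gives exactly the stated inequality. The paper itself does not prove this lemma but simply imports it from Dann et al.\ (2021), where the argument is this same standard Cauchy--Schwarz computation, so there is nothing substantive to compare beyond noting that you have supplied the proof the paper omits.
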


\begin{proof}[Proof of Proposition~\ref{prop:reduction_be}]
We consider a fixed $h \in [H]$. We first introduce some short-hand notations. We denote $\hat{\epsilon}_{st}^h = |\Eb_{\mu_s} \cE_h^{\mu_{f_t}}(g_t,x^h,a^h)|$ and $\epsilon^h_{st} = \hat{\epsilon}_{st}^h \mathbf{1}(\hat{\epsilon}_{st}^h > \epsilon)$. The proof proceeds as follows. We initialize $T$ empty buckets $B_0^h,\cdots, B_{T-1}^h$ and go through $\epsilon_{tt}^h$ one by one for $t \in [T]$.

If $\epsilon_{tt}^h = 0$, we discard the timestep. Otherwise, we go through the buckets from $0$ in increasing order. At bucket $i$, 
\begin{itemize}
    \item if $\sum_{s \leq t-1, s \in B_i^h} (\epsilon_{st}^h)^2 < (\epsilon_{tt}^h)^2$, we add $t$ into $B_i^h$;
    \item otherwise, we continue with the next bucket.
\end{itemize}
We denote the index of bucket that each non-zero timestep ends up in as $b_t^h$. As $\epsilon_{tt}^h$ skip the bucket $0,\cdots,b_t^h-1$, by construction, we have
$$\begin{aligned}
\sum_{t=1}^{T} \sum_{s=1}^{t-1}\left(\epsilon_{s t}^{h}\right)^{2} \geq \sum_{t=1}^T \sum_{0\leq i \leq b_t^h-1} \sum_{s \leq t-1, s \in B_i^h} \left(\epsilon_{st}^h\right)^2 \geq \sum_{t=1}^{T} b_{t}^{h}\left(\epsilon_{t t}^{h}\right)^{2}.
\end{aligned}
$$
Note by the definition of Eluder dimension, for the measures in $B_i^h$, say, $\{\mu_{t_i}: i = 1,\cdots,m\}$, $\mu_{t_j}$ is $\epsilon$-independent from all the predecessors $\mu_{t_1},\cdots,\mu_{t_{j-1}}$. Therefore, the size of each bucket cannot exceed the Bellman Eluder dimension $E_\epsilon = \mathrm{dim_{BE}}(\cF, \Pi, \epsilon)$. By Jensen's inequality, we can obtain that
$$
\begin{aligned}
\sum_{t=1}^{T} b_{t}^{h}\left(\epsilon_{t t}^{h}\right)^{2} &=  \sum_{i=1}^{T-1} i \sum_{s \in B_i^h} (\epsilon_{ss}^h)^2 \geq \sum_{i=1}^{T-1} i |B_i^h| \left(\sum_{s \in B_i^h} \frac{\epsilon_{ss}^h}{|B_i^h|}\right)^2\geq \sum_{i=1}^{T-1} i E_\epsilon \left(\sum_{s \in B_i^h} \frac{\epsilon_{ss}^h}{E_\epsilon}\right)^2,
\end{aligned}
$$
where the last inequality uses $|B_i^h| \leq E_\epsilon$. By Lemma~\ref{lem:bed_proof} with $x_i = \sum_{s \in B_i^h} \epsilon_{ss}^h$, we know that 
$$
\begin{aligned}
\sum_{i=1}^{T-1} E_\epsilon i\left(\sum_{s \in B_{i}^{h}} \frac{\epsilon_{s s}^{h}}{E_{\epsilon}}\right)^{2}&=\frac{1}{E_{\epsilon}} \sum_{i=1}^{T-1} i\left(\sum_{s \in B_{i}^{h}} \epsilon_{s s}^{h}\right)^{2} \geq \frac{1}{E_{\epsilon}(1+\log (T))}\left(\sum_{i=1}^{T-1} \sum_{s \in B_i^h} \epsilon_{s s}^{h}\right)^{2}\\
&= \frac{1}{E_{\epsilon}(1+\log (T))}\left(\sum_{s \in[T] \backslash B_{0}^{h}} \epsilon_{s s}^{h}\right)^{2}.
\end{aligned}
$$
To summarize, we have proved that 
$\sum_{t=1}^{T} \sum_{s=1}^{t-1}\left(\epsilon_{s t}^{h}\right)^{2} \geq  \frac{1}{E_{\epsilon}(1+\log (T))}\left(\sum_{s \in[T] \backslash B_{0}^{h}} \epsilon_{s s}^{h}\right)^{2}.$ It follows that
$$
\begin{aligned}
\sum_{h=1}^{H} \sum_{t=1}^{T} \hat{\epsilon}_{t t}^{h} &\leq HT\epsilon +\sum_{h=1}^{H} \sum_{t=1}^{T} \epsilon_{t t}^{h} \leq HT \epsilon + E_\epsilon H+\sum_{h=1}^{H} \sum_{s \in[T] \backslash B_{0}^{h}} \epsilon_{s s}^{h}\\
&\leq HT \epsilon + E_\epsilon H+\sum_{h=1}^{H} \sqrt{\left(E_\epsilon (1 + \log (T))\right) \sum_{t=1}^{T} \sum_{s=1}^{t-1}\left(\epsilon_{s t}^{h}\right)^{2}}\\
&\leq HT \epsilon + E_\epsilon H+ \mu \sum_{h=1}^H \sum_{t=1}^T \left[\sum_{s=1}^{t-1} (\epsilon_{st}^h)^2\right]+ \frac{(1+\log(T)) E_\epsilon H}{4 \mu}\\
&\leq \mu \sum_{h=1}^H \sum_{t=1}^T \left[\sum_{s=1}^{t-1} (\epsilon_{st}^h)^2\right]+ \frac{(1+\log(T) + 8\mu) E_\epsilon H}{4 \mu}
\end{aligned}
$$
where the second inequality follows from $|B_0^h| \leq E_\epsilon$ and the last inequality follows from $\epsilon = 1/T$. 
\end{proof}

\end{document}